\newtheorem{theorem}{Theorem}[section]
\newtheorem{definition}[theorem]{Definition}
\newtheorem{claim}{Claim}[section]
\newtheorem{problem}{Problem}
\newtheorem{lemma}[theorem]{Lemma}
\newtheorem{corollary}[theorem]{Corollary}
\newtheorem{remark}[theorem]{Remark}
\DeclareMathOperator{\E}{\mathbb{E}}
\DeclareMathOperator*{\argmin}{arg\,min}
\newcommand{\norm}[1]{\|#1\|}
\newcommand{\wh}{\widehat}
\newcommand{\R}{\mathbb{R}}
\newcommand{\eps}{\varepsilon}
\title{L1 Regression with Lewis Weights Subsampling}
\author{Aditya Parulekar\\\texttt{adityaup@cs.utexas.edu }\\UT Austin \and Advait Parulekar\\\texttt{advaitp@utexas.edu}\\UT Austin \and Eric Price\\\texttt{ecprice@cs.utexas.edu}\\UT Austin}
\begin{document}
\maketitle

\begin{abstract}
  We consider the problem of finding an approximate solution to
  $\ell_1$ regression while only observing a small number of labels.
  Given an $n \times d$ unlabeled data matrix $X$, we must choose a
  small set of $m \ll n$ rows to observe the labels of, then output an
  estimate $\wh{\beta}$ whose error on the original problem is within
  a $1 + \eps$ factor of optimal.  We show that sampling from $X$
  according to its Lewis weights and outputting the empirical
  minimizer succeeds with probability $1-\delta$ for
  $m > O(\frac{1}{\eps^2} d \log \frac{d}{\eps \delta})$.  This is
  analogous to the performance of sampling according to leverage
  scores for $\ell_2$ regression, but with exponentially better
  dependence on $\delta$.  We also give a corresponding lower bound of
  $\Omega(\frac{d}{\eps^2} + (d + \frac{1}{\eps^2}) \log
  \frac{1}{\delta})$.


\end{abstract}

\section{Introduction}
The standard linear regression problem is, given a data matrix
$X \in \R^{n \times d}$ and corresponding values $y \in \R^n$, to find
a vector $\beta \in \R^d$ minimizing $\norm{X \beta - y}_p$.  Least
squares regression ($p = 2$) is the most common, but least absolute
deviation regression ($p = 1$) is sometimes preferred for its
robustness to outliers and heavy-tailed noise.  In this paper we focus
on $\ell_1$ regression:

\begin{equation}
\label{eq:LAD_objective}
\beta^* = \argmin_{\beta\in \mathbb{R}^d}\Vert X\beta-y\Vert_1
\end{equation}

But what happens if the unlabeled data $X$ is cheap but the labels $y$
are expensive?  Can we choose a small subset of indices, only observe
the corresponding labels, and still recover a good estimate
$\wh{\beta}$ of the true solution?  We would like an algorithm that
works with probability $1-\delta$ for any input $(X, y)$; this
necessitates that our choice of indices be randomized, so the
adversary cannot concentrate the noise on them.  Formally we define the
problem as follows:


\begin{problem}[Active L1 regression]\label{matrix_problem}
  There is a known matrix $X\in \mathbb{R}^{n\times d}$ and a fixed unknown vector $y$. A learner interacts with the instance by querying rows indexed $\{i_k\}_{k\in [m]}$ adaptively, and is shown labels $\{y_{i_k}\}_{k\in [m]}$ corresponding to the rows queried. The learner must return $\wh{\beta}$ such that with probability $1-\delta$ over the learner's randomness,
  \begin{equation}\label{eq:matrix_objective}
    \Vert X\wh{\beta}-y\Vert_1 \le (1+\varepsilon)\min_{\beta}\Vert X\beta-y\Vert_1.
  \end{equation}
\end{problem}

Some rows of $X$ may be more important than others. For example, if
one row is orthogonal to all the others, we need to query it to have
any knowledge of the corresponding $y$; but if many rows are in the
same direction it should suffice to label a few of them to predict the
rest.

A natural approach to this problem is to attach some notion of
``importance'' $p_1, \dotsc, p_n$ to each row of $X$, then sample rows
proportional to $p_i$.  We can represent this as a
``sampling-and-reweighting'' sketch $S \in \R^{m \times n}$, where
each row is $\frac{1}{p_i} e_i$ with probability proportional to
$p_i$.  This reweighting is such that
$\E_S[\norm{Sv}_1] \propto \norm{v}_1$ for any vector $v$.  By querying
$m$ rows we can observe $Sy$, and so can output the empirical risk
minimizer (ERM)
\begin{align}
  \wh{\beta} := \argmin \norm{SX \beta - Sy}_1.\label{eq:ERM}
\end{align}
For fixed $\beta$,
$\E_S \norm{SX\beta - Sy}_1 \propto \norm{X \beta - y}_1$.  The hope
is that, if the $p_i$ are chosen carefully, the ERM $\wh{\beta}$ will
satisfy~\eqref{eq:matrix_objective} with relatively few samples.  Our
main result is that this is true if the $p_i$ are drawn according to
the $\ell_1$ Lewis weights:

\begin{theorem}[Informal]\label{thm:main_informal}
  Problem~\ref{matrix_problem} can be solved with
  $m = O(\frac{1}{\varepsilon^2}d\log \frac{d}{\varepsilon\delta})$
  queries.  For constant $\delta = \Theta(1)$,
  $m = O\left(\frac{1}{\varepsilon^2}d\log d\right)$ suffices.
\end{theorem}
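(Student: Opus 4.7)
I would analyze the natural sampling-and-ERM procedure from the introduction. Compute the $\ell_1$ Lewis weights $w_1,\dots,w_n$ of $X$, set $p_i \propto w_i$, draw $m$ i.i.d.\ row indices $i_1,\dots,i_m$ from $p$, and form the reweighted sketch $S\in\R^{m\times n}$ whose $k$-th row is $\tfrac{1}{m\,p_{i_k}} e_{i_k}^{\top}$. Output $\wh\beta=\argmin_\beta\|SX\beta-Sy\|_1$. The goal is to show that with $m=O(d\log(d/\varepsilon\delta)/\varepsilon^2)$ the estimator $\wh\beta$ satisfies~\eqref{eq:matrix_objective} with probability $1-\delta$.

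The analysis rests on two properties of $S$, both holding with probability $\ge 1-\delta$: \textbf{(E)} a two-sided $(1\pm\varepsilon)$ $\ell_1$-subspace embedding for the column span of $X$, $\|SX\beta\|_1\in(1\pm\varepsilon)\|X\beta\|_1$ for every $\beta\in\R^d$; and \textbf{(C)} uniform concentration of the empirical loss, $\bigl|\|S(X\beta-y)\|_1-\|X\beta-y\|_1\bigr|\le\varepsilon\,\mathrm{OPT}$ over a ``candidate set'' $B$ containing both $\beta^*$ and $\wh\beta$, where $\mathrm{OPT}:=\|X\beta^*-y\|_1$. Given both, the ERM argument closes the loop: $\|X\wh\beta-y\|_1\le\|S(X\wh\beta-y)\|_1+\varepsilon\,\mathrm{OPT}\le\|S(X\beta^*-y)\|_1+\varepsilon\,\mathrm{OPT}\le(1+2\varepsilon)\,\mathrm{OPT}$, since $\wh\beta$ is the sketched minimizer. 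Event \textbf{(E)} is standard for Lewis-weight sampling at this sample complexity (\emph{cf.}\ Cohen--Peng); the real content is \textbf{(C)}.

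For \textbf{(C)} I would first use \textbf{(E)} and the triangle inequality to show that any $\beta$ with $\|X(\beta-\beta^*)\|_1\gg\mathrm{OPT}$ loses to $\beta^*$ under the sketched objective, so both $\wh\beta,\beta^*$ lie in $B=\{\beta:\|X(\beta-\beta^*)\|_1\le C\,\mathrm{OPT}\}$. The image $X(B-\beta^*)$ sits in a $d$-dimensional subspace with bounded $\ell_1$ diameter, so it admits an $\varepsilon\,\mathrm{OPT}$-net of size $(d/\varepsilon)^{O(d)}$. At each net point I would apply a Bernstein-type bound to $\|S(X\beta-y)\|_1=\tfrac{1}{m}\sum_k |X_{i_k}\beta-y_{i_k}|/p_{i_k}$, union-bound over the net (contributing a $d\log(d/\varepsilon)$ factor in the log term), and extend to all of $B$ by Lipschitz continuity via \textbf{(E)}. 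The total sample budget works out to $(d\log(d/\varepsilon)+\log(1/\delta))/\varepsilon^2$, matching the claimed $m$.

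The main obstacle is the Bernstein step. The summand $|X_i\beta-y_i|/p_i$ is not uniformly bounded: $p_i\propto w_i$ depends only on $X$ and can be tiny at an index where $|y_i|$ is enormous, so the naive variance bound $\sum_i (X_i\beta-y_i)^2/p_i$ need not be $O(\mathrm{OPT}^2)$. The standard remedy is a truncation argument that splits coordinates into ``heavy'' and ``light'' parts based on $|X_i\beta-y_i|^2/(p_i\,\mathrm{OPT}^2)$: the few heavy indices are handled deterministically using that they must carry enough sampling probability; the light remainder satisfies a good Bernstein bound because its variance is controlled by the sensitivity property of Lewis weights once the $X$-component of $X\beta-y$ is peeled off. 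Making this truncation uniform across the $\varepsilon$-net --- where the heavy/light split itself depends on $\beta$ --- is the delicate part, and I expect this to be where most of the proof's effort concentrates, likely via a multi-scale or chaining argument.
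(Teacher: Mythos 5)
Your high-level recipe (Lewis-weight sampling, ERM on the sketch) matches the paper's algorithm, but the analysis you propose hinges on event \textbf{(C)} --- uniform concentration of the \emph{absolute} loss, $\bigl|\,\|S(X\beta-y)\|_1-\|X\beta-y\|_1\,\bigr|\le\varepsilon\,\mathrm{OPT}$ --- and that statement is false in general. The sampling probabilities $p_i\propto w_i(X)$ depend only on $X$, so the adversary can place a single enormous entry of $y$ on a row whose Lewis weight is $O(d/n)$: then $\mathrm{OPT}$ is dominated by that one residual, the row is sampled with probability $O(m/n)\to 0$, and $\|S(X\beta^*-y)\|_1$ misses essentially all of $\|X\beta^*-y\|_1$. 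So \textbf{(C)} already fails at $\beta=\beta^*$ with high probability, and your closing chain of inequalities (which invokes \textbf{(C)} at both $\wh\beta$ and $\beta^*$) collapses. The same obstruction kills the truncation step you sketch: the ``heavy'' indices are heavy because of $y$, not because of $X$, so there is no reason they ``carry enough sampling probability'' --- no amount of net/Bernstein/chaining effort can rescue an estimate of the absolute loss, because the information-theoretic problem is that you cannot locate the outliers of $y$ without querying them.

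The missing idea is to never estimate the loss itself but only the \emph{difference} $\bigl(\|S(X\beta-y)\|_1-\|S(X\beta^*-y)\|_1\bigr)-\bigl(\|X\beta-y\|_1-\|X\beta^*-y\|_1\bigr)$. Row by row, $\bigl||x_i^\top\beta-y_i|-|x_i^\top\beta^*-y_i|\bigr|\le|x_i^\top(\beta-\beta^*)|$, so the outlier contributions cancel and the fluctuation is controlled entirely within the column space of $X$, where Lewis weights do their job. The paper makes this precise via symmetrization followed by the Ledoux--Talagrand contraction lemma (with the contractions $\phi_k(z)=\frac{|x_{i_k}^\top\beta^*-y_{i_k}|-|x_{i_k}^\top\beta^*-zp_{i_k}-y_{i_k}|}{p_{i_k}}$), which strips out $y$ and reduces the problem to the Cohen--Peng moment bound for $\|SX v\|_1$; the resulting deviation is $\varepsilon\|X(\beta-\beta^*)\|_1$ rather than $\varepsilon\,\mathrm{OPT}$, and a two-line triangle-inequality argument then converts the ERM optimality of $\wh\beta$ into the $(1+O(\varepsilon))\mathrm{OPT}$ guarantee. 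If you reformulate your target event as this relative statement, a net-plus-concentration argument becomes at least plausible (the summands are now bounded via the Lewis-weight sensitivity of $|x_i^\top(\beta-\beta^*)|$), but as written the proposal aims at a quantity that cannot be estimated from the queried labels.
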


Note that, while the model allows for adaptive queries, this algorithm
is nonadaptive.


We next show that our sample complexity is near-optimal by
demonstrating the following lower bound on the number of queries
needed by any algorithm to obtain an accurate estimate.
\begin{theorem}[Informal]\label{thm:lowerbound_informal}
  Any algorithm satisfying Problem \ref{matrix_problem} must query
  $\Omega(d\log\frac{1}{\delta}+\frac{d}{\varepsilon^2}+\frac{1}{\varepsilon^2}\log\frac{1}{\delta})$
  rows on some instances $(X, y)$.
\end{theorem}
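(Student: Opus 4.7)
The proof plan is to establish each of the three lower-bound terms via a separate adversarial construction.

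For the $\Omega(\varepsilon^{-2}\log\delta^{-1})$ term, I would use the one-dimensional instance $X = \mathbf{1}_n$ with two candidate label vectors $y^{(\pm)} \in \{-1, +1\}^n$ whose $+1$-fractions are $(1\pm\varepsilon)/2$. The $\ell_1$-optimal $\beta^*$ is $\pm 1$ under each hypothesis, and choosing the wrong sign inflates the $\ell_1$ error by a factor $1 + \Theta(\varepsilon)$, so any valid algorithm must distinguish the two. Since each queried label is a Bernoulli sample with bias gap $\varepsilon$, the standard Bretagnolle--Huber / KL-divergence hypothesis-testing bound immediately gives $m = \Omega(\varepsilon^{-2}\log\delta^{-1})$.

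For the $\Omega(d\log\delta^{-1})$ term, I would tensor the 1D construction into a $d$-block-diagonal $X$ (each block $\mathbf{1}_n$) with \emph{constant} per-block bias ($3/4$ vs.\ $1/4$), and choose $\varepsilon$ small relative to $d$ (say $\varepsilon \leq c/d$). Under this tight budget, even a single wrong sign inflates the error ratio by $\Theta(1/d) \gg \varepsilon$, so the $(1+\varepsilon)$ guarantee forces exact recovery $\hat\sigma = \sigma$. The event containment $\{\hat\sigma = \sigma\} \subseteq \{\hat\sigma_j = \sigma_j\}$ for every $j$ propagates the $1-\delta$ success requirement down to each individual block, and each block---a constant-gap Bernoulli test---requires $m_j = \Omega(\log\delta^{-1})$ samples, totalling $m = \Omega(d\log\delta^{-1})$.

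For the $\Omega(d/\varepsilon^2)$ term, the Bernoulli block-diagonal construction is too easy: the $(1+\varepsilon)$-approximation condition only yields the weak requirement $\sum_j \hat\beta_j\sigma_j \gtrsim d\varepsilon$, which by Assouad's lemma plus Cauchy--Schwarz produces merely $\sum_j TV_j \leq \varepsilon\sqrt{dm}$ and hence $m = \Omega(d)$. Instead I would use an \emph{isotropic} design---e.g.\ $X \in \mathbb{R}^{n \times d}$ with i.i.d.\ standard Gaussian rows---and exploit the fact that for such $X$, $\|Xv\|_1$ concentrates around $n\sqrt{2/\pi}\,\|v\|_2$, so $\ell_1$ on the column span behaves like $\ell_2$ on coefficients. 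Approximating $\|X\beta\|_1$ uniformly over a $d$-dimensional subspace to $(1\pm\varepsilon)$ accuracy via a sampling-and-reweighting sketch requires $m = \Omega(d/\varepsilon^2)$ rows (the tight $\ell_1$ subspace-embedding lower bound); pairing this with an adversarial $y$ that aligns with the sketch's failure direction forces the subsample ERM to be $(1+\Omega(\varepsilon))$-suboptimal, transferring the sketching lower bound to active regression.

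The main obstacle is Term 2: the $d/\varepsilon^2$ bound lies outside the reach of simple Bernoulli-testing arguments---the block-diagonal construction intrinsically gives only $\Omega(d)$---and the proof must exploit the isotropic column-space geometry and reduce to a subspace-embedding lower bound. Verifying that the $(1+\varepsilon)$-approximation guarantee truly inherits the full sketching lower bound (rather than a weaker $\varepsilon$-precision-on-$\beta$ bound that would only yield $d/\varepsilon$) is the technically delicate step, while Terms 1 and 3 follow cleanly from standard Bernoulli hypothesis-testing machinery once the right instance is in hand.
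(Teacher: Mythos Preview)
Your plan for $\Omega(\varepsilon^{-2}\log\delta^{-1})$ is correct and matches the paper. But your diagnosis of which of the remaining two terms is the hard one is inverted, and both of your proposed approaches have genuine gaps.

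\textbf{The $\Omega(d/\varepsilon^2)$ term.} Your claim that the block-Bernoulli construction ``only yields the weak requirement $\sum_j\hat\beta_j\sigma_j\gtrsim d\varepsilon$'' is a miscalculation. With per-coordinate bias set to $c\varepsilon$ for a fixed constant $c>1$, the excess loss is $\frac{2c\varepsilon}{d}\|\hat\beta-\beta^*\|_1$ while the optimal loss is $1-2c\varepsilon$, so a $(1+\varepsilon)$-approximation forces $\|\hat\beta-\beta^*\|_1 \le d/(2c)$ --- recovery of a \emph{constant} fraction of the signs, not an $\varepsilon$-fraction. The paper then runs Fano's inequality over a Gilbert--Varshamov packing $\mathcal{B}\subset\{-1,1\}^d$ of size $2^{\Omega(d)}$ and pairwise Hamming distance $\Omega(d)$: each queried label reveals at most $O(\varepsilon^2)$ bits about $\beta^*$, giving $m=\Omega(d/\varepsilon^2)$ directly. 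Your Gaussian-design detour is unnecessary, and its final step (``pair an adversarial $y$ with the sketch's failure direction'') is not a valid lower bound for Problem~\ref{matrix_problem}: $y$ must be fixed before the algorithm draws its randomness, and the algorithm need not be a subsample ERM at all.

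\textbf{The $\Omega(d\log\delta^{-1})$ term.} Your construction only forces exact sign recovery when $\varepsilon\le c/d$, since one wrong block inflates the loss ratio by only $\Theta(1/d)$. But $d\log\delta^{-1}$ is the dominant term precisely when $\varepsilon$ is \emph{not} that small (e.g.\ $\varepsilon$ constant and $\delta=2^{-d}$), and in that regime your construction gives nothing: a constant fraction of sign errors is tolerated, so $m=O(d)$ already suffices and no $\log\delta^{-1}$ factor appears. The paper instead uses a needle-in-a-haystack instance that works even for $\varepsilon=1$: $x$ is a uniformly random standard basis vector, $P(y=1\mid x=e_i)=0$ for all $i\neq i^*$, and $P(y=1\mid x=e_{i^*})=3/4$ for a single hidden $i^*$. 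Any $2$-approximation must identify $i^*$, yet with probability at least $\delta$ the first $\Theta(d\log\delta^{-1})$ queried labels are all zero, leaving $i^*$ undetermined.
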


For small $\delta$, the upper bound is the product of
$d, \frac{1}{\eps^2},$ and $\log (1/\delta)$ while the lower bound is
the product of each pair.

\subsection{Related Work}


\paragraph{If all the labels are known:}
\label{sec:labels_known} LAD regression cannot be solved in closed
form. It can be written as a linear program, but this is relatively
slow to solve.  One approach to speeding up LAD regression is
``sketch-and-solve,'' which replaces~\eqref{eq:LAD_objective}
with~\eqref{eq:ERM}, which has fewer constraints and so can be solved
faster.  The key idea here is to acquire regression guarantees by
ensuring that $S$ is a \textit{subspace embedding} for the column
space of $[X~y]$.

For a survey on techniques to do this, we direct the reader to
\cite{Woodruff_2014},\cite{mahoney2011randomized},
\cite{clarkson2005subgradient}. In \cite{Woodruff_2014}, the emphasis
is on \textit{oblivious} sketches -- distributions which do not
require knowledge of $[X~y]$. On the other hand,
\cite{mahoney2011randomized}, \cite{clarkson2005subgradient} discuss
sketches that depend on $[X~y]$. Most relevant to us
\cite{durfee2018ell1}, which shows that sampling-and-reweighting
matrices $S$ using Lewis weights of $[X~y]$ suffice; we give a simple
proof of this in Remark~\ref{remark:known_y}.  The problem is that
\emph{figuring out which labels are important} involves looking at all
the labels.


\paragraph{Active $\ell_2$ regression:}
Here we return to our setting, where only a subset of the labels is
available to us. A number of works have studied this problem,
including~\cite{drineas2006sampling,derezinski2017unbiased,derezinski2021determinantal}.
The $\ell_2$ version of the problem was solved optimally in
\cite{chen2019active}, where an algorithm was given using
$O(\frac{d}{\varepsilon})$ queries to find $\wh\beta$ satisfying
$\E\left[\Vert X\wh\beta-y\Vert_2^2\right] \le (1+\varepsilon)\Vert
X\beta^*-y\Vert_2^2$.  Independent, identical sampling using leverage
scores achieves the same guarantee using
$O(d\log d + \frac{d}{\varepsilon})$ queries. Note that these results
for $\ell_2$ ERM only work in expectation, while our results hold with
high probability.  One can get high probability bounds in the $\ell_2$
setting by taking the median of $O(\log 1/\delta)$ repetitions, but
the ERM itself does not succeed with high probability.

\paragraph{Subspace embedding for $\ell_1$ norms:}
Subspace embeddings for the $\ell_1$ norm have been studied in a long
line of work including \cite{talagrand1990embedding},
\cite{talagrand1995embedding}, \cite{ledoux1989},
\cite{dasgupta2009sampling}, and \cite{cohen2014ellp}, the most recent
of which describes an iterative algorithm to approximate \textit{Lewis
  weights}, which are the analogue of leverage scores for importance
sampling preserving $\ell_1$ norms. The \cite{cohen2014ellp} result
shows that, for the same
$m = O(\frac{1}{\eps^2} d \log \frac{d}{\eps \delta})$ sample
complexity as given in Theorem~\ref{thm:main}, a sampler sketch $S$
based on the Lewis weights of $X$ will have
$\norm{SX\beta}_1 \approx_\eps \norm{X\beta}_1$ for all
$\beta \in \R^d$.


\paragraph{Our approach.}
At a very high level the goal of this paper is to replace the $\ell_2$
leverage score analysis of the~\cite{chen2019active} active regression
paper with the $\ell_1$ Lewis weight analysis in
the~\cite{cohen2014ellp} subspace embedding paper.  However, the
differences between $\ell_1$ and $\ell_2$ are significant enough that
very little of the~\cite{chen2019active} proof approach remains.

Per~\cite{cohen2014ellp}, the Lewis weight sampling-and-embedding
matrix $S$ preserves $\norm{X \beta}_1$ for all $\beta$.  The problem
is that it \emph{doesn't} preserve $\norm{X \beta - y}_1$: if $y$ has
outliers, we have no idea where they are to sample them.  In the
$\ell_2$ setting, this difficulty is addressed using the closed-form
solution $\beta^* = X^{\dagger}y$.  Then if $S$ is a subspace
embedding it will preserve $\norm{X \beta - X \beta^*}$, so it
suffices to bound the expectation of $\norm{S(X \beta^* - y)}_2^2$.
In the $\ell_1$ setting, not only is $\beta^*$ not expressible in
closed form, but there can be many equally valid minimizers $\beta^*$
that are far from each other.  In Appendix~\ref{app:constant-factor}
we show how this approach extends to the $\ell_1$ setting to give a
simple proof of Theorem~\ref{thm:main_informal} for a constant factor
approximation (i.e., $\eps = O(1)$); but the existence of multiple
$\beta^*$ makes $\eps < 1$ seem unobtainable by this approach.

Instead, we massage the~\cite{chen2019active} subspace embedding proof
into the appropriate form, as we discuss in
Section~\ref{sec:proofoverview}.  While $S$ doesn't preserve the total
error $\norm{X \beta - y}_1$, it does preserve \emph{relative} error
$\norm{X \beta - y}_1 - \norm{X \beta^* - y}_1$; the effect of
outliers is canceled out, so that this concentrates similarly well to
$\norm{X \beta - X \beta^*}_1$.  This approach would not work for
$\ell_2$: the effect of outliers does not entirely cancel out there,
since the square loss has unbounded influence.

\paragraph{Concurrent work:}
A very similar set of results appears concurrently and independently
in~\cite{chen2021query}.  Their main result is identical to ours, with
a similar proof.  They also extend the result to $1 < p < 2$, but with
a significantly weaker $m = \widetilde{O}(d^2/\eps^2)$ bound.  They do
not have the $\Omega(d \log \frac{1}{\delta})$ lower bound.

\section{Preliminaries: Subspace Embeddings and Importance Sampling}\label{sec:prelim}
A key idea used in our analysis is that of a $\ell_1$ subspace embedding, which is a linear sketch of a matrix that preserves $\ell_1$ norms within the column space of a matrix:
\begin{definition}[Subspace Embeddings]
A subspace embedding for the column space of the matrix $X \in \mathbb{R}^{n\times d}$ is a matrix $S$ such that for all $\beta \in \mathbb{R}^d$, 
\[\Vert SX\beta\Vert = (1\pm \varepsilon) \Vert X\beta\Vert\]
\end{definition}

\begin{remark}\label{remark:known_y}
Consider the simpler setting in which we had access to all of $y$, but we still want to subsample rows to improve computational complexity. We can view the regression loss $\Vert X\beta-y\Vert_1$ as the $\ell_1$ norm of the point $[X~y]\begin{bmatrix}\beta\\ -1\end{bmatrix}$ in the column space of $[X~y]$. Indeed, suppose $\beta^*=\argmin \Vert X\beta-y\Vert_1$ as before and let $\wh{\beta} = \argmin \Vert SX\beta-Sy\Vert_1$. Then, $\wh\beta$ solves problem \ref{matrix_problem} because, for $\varepsilon<\frac{1}{3}$,
\[
  \Vert X\wh\beta-y\Vert_1\le \frac{1}{1-\varepsilon} \Vert SX\wh \beta-Sy\Vert_1 \le \frac{1}{1-\varepsilon} \Vert SX\beta^*-Sy\Vert_1 \le \frac{1+\varepsilon}{1-\varepsilon} \Vert X\beta^*-y\Vert_1\le (1+4\varepsilon)\Vert X\beta^*-y\Vert_1.
\]
\end{remark}
One way to construct a subspace embedding is by sampling rows and rescaling them appropriately:
\begin{definition}[Sampling and Reweighting with $\{p_i\}_{i=1}^n$]\label{defn:sampling_reweighting}
For any sequence $\{p_i\}_{i = 1}^n$, let $N = \sum_i p_i$. Then, the sampling-and-reweighting distribution $\mathcal{S}\left(\{p_i\}_{i = 1}^n\right)$ over the set of matrices $S\in \mathbb{R}^{N\times n}$ is such that each row of $S$ is independently the $i$th standard basis vector with probability $\frac{p_i}{N}$, scaled by $\frac{1}{p_i}$. For any $k\in [N]$, let $i_k$ denote the index such that $S_{k,i_k} = \frac{1}{p_{i_k}}$.
\end{definition}

When working in $\ell_2$, there is a natural choice for re-weighting: the leverage scores of the rows \cite{Woodruff_2014}. 
\begin{definition}[Leverage Scores]\label{defn:leverage_scores}
The leverage score of the $i$th row of a matrix $X$, $l_i(X)$ is defined as $x_i^\top (X^\top X)^{-1}x_i$.
\end{definition}
For $\ell_1$ subspace embeddings, the analogous weights are the $\ell_1$ Lewis weights, defined implicitly as the unique weights $\{w_i(X)\}_{i=1}^n$ that satisfy $w_i(X) = l_i(WX)$ where $W$ is a diagonal matrix with $i$th diagonal entry $\frac{1}{\sqrt{w_i(X)}}$. We will drop the explicit dependence on $X$ whenever it is clear from context.
\begin{definition}[Lewis Weights]\label{defn:lewis_weights}
The $\ell_1$ Lewis weights of a matrix $X$ are the unique weights $\{w_i\}_{i=1}^n$ that satisfy $w_i^2 = x_i^\top (\sum_{j=1}^n \frac{1}{w_j}x_jx_j^\top )^{-1}x_i$ for all $i$.
\end{definition}
Lewis weights are defined in general for general $\ell_p$ norms, but we will only need the $\ell_1$ Lewis weights. For basic properties of Lewis weights, we direct the reader to \cite{cohen2014ellp}. Using these definitions, we now state the main consequence of using Lewis weights. This result comes from a line of work on embeddings from subspaces of $L_1[0, 1]$ to $\ell_1^m$ such as \cite{talagrand1990embedding}, but is reproduced here similar to how it is presented in \cite{cohen2014ellp}. 
\begin{theorem}[\cite{cohen2014ellp} Theorem 2.3]\label{thm:ell_1_embedding}
    Sampling at least $O(\frac{d\log d}{\varepsilon^2})$ rows according to the $\ell_1$ Lewis weights $\{w_i\}_{i=1}^n$ of a matrix $X\in \mathbb{R}^{n\times d}$ results in a subspace embedding for $X$ with at least some constant probability.
    If at least $O(\frac{d\log \frac{d}{\varepsilon\delta}}{\varepsilon^2})$ rows are sampled, then we have a subspace embedding with probability at least $1-\delta$.
\end{theorem}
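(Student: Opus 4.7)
The plan is to prove Theorem~\ref{thm:ell_1_embedding} by combining two ingredients: the geometric regularity of $X$ guaranteed by its Lewis weights, and a uniform tail bound for the sampling process via chaining or an appropriate net argument.

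First, I would unpack the defining identity $w_i^2 = x_i^T (X^T W^{-1} X)^{-1} x_i$. Setting $M = X^T W^{-1} X$, it says $\|x_i\|_{M^{-1}} = w_i$, so by Cauchy--Schwarz in the $M$-inner product we have $|x_i^T \beta| \le w_i \cdot \|\beta\|_M$ for every $\beta$ and every row $i$. Consequently, when we sample row $i$ with probability $p_i \propto w_i$ and reweight by $1/p_i$, each summand $\frac{|x_{i_k}^T \beta|}{p_{i_k}}$ in $\|SX\beta\|_1$ is bounded by $(\sum_j w_j)\|\beta\|_M$, \emph{uniformly in $i_k$}. Together with the identity $\sum_j w_j = d$, this uniform boundedness is what makes Lewis weights the ``right'' sampling distribution for preserving $\ell_1$ norms.

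Second, I would establish pointwise concentration for a fixed $\beta$ via Bernstein's inequality. Because $\E[\|SX\beta\|_1] = \|X\beta\|_1$ by construction, each of the $m$ summands is bounded as above, and the variance can be computed directly, Bernstein yields $\bigl|\|SX\beta\|_1 - \|X\beta\|_1\bigr| \le \eps \|X\beta\|_1$ with failure probability $\exp(-\Omega(\eps^2 m/d))$ for any single $\beta$. The uniform bound over the infinite family $\{\beta \in \R^d\}$ is then obtained by symmetrization followed by a Rademacher contraction (since $t \mapsto |t|$ is $1$-Lipschitz), reducing the problem to controlling
\[
\E\,\sup_{\beta : \|X\beta\|_1 \le 1}\,\Bigl|\sum_{k=1}^m \eps_k \frac{x_{i_k}^T \beta}{p_{i_k}}\Bigr|,
\]
and bounding this Rademacher process by Dudley's entropy integral. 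The Lewis-weight identity controls the covering numbers of the $d$-dimensional image under the random $\ell_2$-type metric induced by the sample, giving $\log|\text{net}| \lesssim d\log(d/\eps)$. Combining the Bernstein tail with this net size yields the claimed $m = O(d\log(d/(\eps\delta))/\eps^2)$; for the constant-probability version a looser net (or Markov on the expected supremum) suffices.

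The main obstacle is the chaining step. A naive $\eps$-net in the ambient $\ell_1$-norm on $\R^n$ has $n$-dependent size, and the relevant metric on $\beta$ depends on the random sample $S$ itself, so one cannot just union bound. The substantive work is to replace this with a metric intrinsic to the $d$-dimensional column span, tuned to the Lewis-weight geometry, so that the entropy integral produces $\log d$ rather than $\log n$. This transfer from $\ell_2$-type leverage-score control on $W^{-1/2}X$ (whose leverage scores are exactly $w_i$) to genuine $\ell_1$-subspace-embedding control is the technical heart of the Cohen--Peng argument, and is the step I would need to execute carefully.
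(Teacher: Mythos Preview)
The paper does not prove this theorem; it is quoted from Cohen--Peng and only a three-step sketch is given in Section~\ref{sec:proofoverview} via Lemma~\ref{lemma:moment}: (A) symmetrize, (B) apply a Rademacher contraction to drop the absolute values, (C) invoke Talagrand's chaining bound (Lemma~\ref{lemma:talhigh}) on a matrix with controlled Lewis weights, then Markov on the $l$-th moment. Your plan lands on exactly this skeleton---symmetrize, contract $t\mapsto|t|$, then chain the resulting Rademacher process---so at the structural level you agree with the paper.

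Two clarifications are worth making. First, the Bernstein-for-a-single-$\beta$ step is not used anywhere; the argument goes entirely through the moment bound $\E_S[(\sup_\beta |\cdots|)^l]\le\varepsilon^l\delta$ and Markov, so you can drop it. Second, your diagnosis of the main obstacle is slightly off. The $n$-dependence is not eliminated by passing to an intrinsic $d$-dimensional metric in the chaining; Talagrand's lemma \emph{does} carry a $\log(\text{rows}/\delta)$ in its Lewis-weight hypothesis, but one applies it to (an augmentation of) $SX$, which has $N=O(d\log(d/\varepsilon\delta)/\varepsilon^2)$ rows, so $\log(N/\delta)\approx\log(d/\varepsilon\delta)$ and $n$ never enters. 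The genuinely delicate point is that the Lewis weights of $SX$ are random and not a~priori bounded; Cohen--Peng resolve this by stacking $SX$ with a deterministic auxiliary matrix whose Lewis weights are small by construction (cf.\ Lemma~\ref{lemma:b1mod} and the proof of Lemma~\ref{lemma:high} in this paper for the analogous maneuver). That augmentation, rather than a choice of chaining metric, is the step you would need to fill in.
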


\subsection{Properties of Lewis Weights}
We will need some properties of Lewis weights, particularly of how they change when the matrix $X$ is modified. 
\begin{lemma}[\cite{cohen2014ellp} Lemma 5.5]\label{lemma:lewrows}
The $\ell_1$ Lewis weights of a matrix do not increase when rows are added.
\end{lemma}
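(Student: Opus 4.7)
My plan is to realize the Lewis weights as the unique positive fixed point of a monotone iteration and then compare the iterates started appropriately for $X$ and for the matrix $X' = \begin{pmatrix}X\\z^\top\end{pmatrix}$ obtained by appending one extra row $z$. For a matrix $A$ with rows $a_j$, write $M_A(u) = \sum_j u_j^{-1} a_j a_j^\top$ and define the map $T_A(u)_i = \sqrt{a_i^\top M_A(u)^{-1} a_i}$. By Definition~\ref{defn:lewis_weights}, the Lewis weights of $A$ are precisely the positive fixed point of $T_A$, and \cite{cohen2014ellp} shows that iterating $T_A$ from any positive starting point converges to this fixed point.

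The first step is the monotonicity of $T_A$: if $u \le u'$ entrywise and both are positive, then $u_j^{-1} \ge u_j'^{-1}$, so $M_A(u) \succeq M_A(u')$, hence $M_A(u)^{-1} \preceq M_A(u')^{-1}$, and therefore $T_A(u)_i \le T_A(u')_i$ for every coordinate $i$.

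Let $w$ be the Lewis weights of $X$ and $w'$ the Lewis weights of $X'$. I initialize the iteration for $X'$ at $u^{(0)} = (w_1,\ldots,w_n, L) \in \R^{n+1}$, with $L$ chosen large enough that $L \ge \sqrt{z^\top M_X(w)^{-1} z}$. Since $M_{X'}(u^{(0)}) = M_X(w) + L^{-1} z z^\top \succeq M_X(w)$, for each $i \le n$,
\[
T_{X'}(u^{(0)})_i = \sqrt{x_i^\top M_{X'}(u^{(0)})^{-1} x_i} \le \sqrt{x_i^\top M_X(w)^{-1} x_i} = w_i = u^{(0)}_i,
\]
and similarly $T_{X'}(u^{(0)})_{n+1} \le \sqrt{z^\top M_X(w)^{-1} z} \le L = u^{(0)}_{n+1}$. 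So $T_{X'}(u^{(0)}) \le u^{(0)}$ coordinatewise.

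Applying the monotonicity of $T_{X'}$ inductively, the iterates $u^{(k+1)} = T_{X'}(u^{(k)})$ are non-increasing in every coordinate. Since they also converge to $w'$ by the convergence result of \cite{cohen2014ellp}, taking the limit yields $w' \le u^{(0)}$, i.e.\ $w'_i \le w_i$ for all $i \le n$, as desired. The only real subtlety is setting up the initial point $u^{(0)}$ so that it dominates its own image under $T_{X'}$; once $L$ is chosen large enough (which is possible because $z^\top M_X(w)^{-1} z$ is finite), monotonicity plus the Cohen--Peng convergence theorem finish the argument. Iterating on row-additions one at a time then gives the general statement.
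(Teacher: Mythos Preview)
The paper does not give its own proof of this lemma; it is quoted verbatim from \cite{cohen2014ellp} (their Lemma~5.5) and used as a black box. So there is no in-paper argument to compare against.

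Your argument is correct. The monotonicity of $T_A$ in the entrywise order is exactly as you say (larger $u$ gives smaller $M_A(u)$, hence larger $M_A(u)^{-1}$, hence larger $T_A(u)$), and your choice of $u^{(0)}=(w_1,\dots,w_n,L)$ with $L\ge \sqrt{z^\top M_X(w)^{-1}z}$ indeed yields $T_{X'}(u^{(0)})\le u^{(0)}$. Monotonicity then makes the iterates non-increasing, and since the $p=1$ Lewis iteration of \cite{cohen2014ellp} is a contraction (factor $1/2$) in the $\ell_\infty$ metric on $\log u$, it converges from any positive start to the unique fixed point $w'$, giving $w'_i\le w_i$ for $i\le n$. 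The only implicit assumptions are the standard non-degeneracies (nonzero rows, $X$ of full column rank so that $M_X(w)$ is invertible); these are the same ones under which Lewis weights are defined in the paper, and the degenerate cases ($z=0$, or a zero row of $X$) are trivial. Your proof is thus a clean, self-contained justification that only leans on \cite{cohen2014ellp} for the convergence of the iteration, which the paper already takes as given.
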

\begin{restatable}{lemma}{lewbound}\label{lemma:lewbound}
Let $X \in \mathbb{R}^{n\times d}$, and let $X' \in \mathbb{R}^{kn\times d}$ be $X$ stacked on itself $k$ times, with each row scaled down by $k$. Then, each of the Lewis weights is reduced by a factor of $k$.
\end{restatable}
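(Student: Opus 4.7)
The plan is to guess the answer (namely, each copy of row $i$ in $X'$ should have Lewis weight $w_i/k$, where $w_i$ is the Lewis weight of row $i$ in $X$), verify that this guess satisfies the defining fixed-point equation of Definition~\ref{defn:lewis_weights}, and then invoke uniqueness of Lewis weights to conclude.

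First I would set up notation: index the rows of $X'$ by pairs $(i,\ell)$ with $i \in [n]$ and $\ell \in [k]$, where row $(i,\ell)$ equals $x'_{i,\ell} = \tfrac{1}{k} x_i$. Let $w_i$ denote the Lewis weight of the $i$-th row of $X$. My candidate weights for $X'$ will be $w'_{i,\ell} := w_i / k$ for every $\ell$. The symmetry across the $k$ copies makes this the natural guess.

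Next I would substitute into the defining equation. The key computation is to evaluate
\[
\sum_{j=1}^n \sum_{\ell=1}^k \frac{1}{w'_{j,\ell}} x'_{j,\ell} (x'_{j,\ell})^\top = \sum_{j=1}^n \sum_{\ell=1}^k \frac{k}{w_j} \cdot \frac{1}{k^2} x_j x_j^\top = \sum_{j=1}^n \frac{1}{w_j} x_j x_j^\top,
\]
where the factors of $k$ from reweighting and from squaring the scaled rows exactly cancel the $k$ copies. Call this matrix $M$; by definition of the Lewis weights of $X$, we have $x_i^\top M^{-1} x_i = w_i^2$. Then
\[
(x'_{i,\ell})^\top M^{-1} (x'_{i,\ell}) = \frac{1}{k^2} x_i^\top M^{-1} x_i = \frac{w_i^2}{k^2} = (w'_{i,\ell})^2,
\]
so the candidate weights satisfy the fixed-point equation of Definition~\ref{defn:lewis_weights} for $X'$.

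Finally, Lewis weights are uniquely determined (as stated in Definition~\ref{defn:lewis_weights}), so $w'_{i,\ell} = w_i/k$ is in fact the Lewis weight of row $(i,\ell)$ of $X'$, completing the proof. There is no real obstacle here: the only thing to be careful about is bookkeeping the factors of $k$ correctly (one from reweighting $1/w'_{j,\ell}$, two from the squared scaled rows, and one from summing over the $k$ duplicates), but these conspire to leave $M$ unchanged, which is exactly what makes the scaling identity work.
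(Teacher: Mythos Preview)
Your proposal is correct and follows essentially the same approach as the paper: guess the weights $w_i/k$, verify that the reweighted Gram matrix $\sum \frac{1}{w'} x'(x')^\top$ coincides with $X^\top W^{-1}X$ (so the defining fixed-point equation is satisfied), and invoke uniqueness. The only difference is cosmetic---you index rows by pairs $(i,\ell)$ whereas the paper uses $jn+i$.
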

\section{Proof Overview}\label{sec:proofoverview}
\begin{theorem}\label{thm:main}
  Let $X\in\mathbb{R}^{n\times d}$ have $\ell_1$ Lewis weights
  $\{w_i\}_{i\in [n]}$, and let $0 < \eps, \delta < 1$. Then, for any
  $N$ that is at least
  $O\left(\frac{d}{\varepsilon^2}
    \log\frac{d}{\varepsilon\delta}\right)$, there is a
  sampling-and-reweighting distribution $\mathcal{S}(\{p_i\}_{i=1}^n)$
  satisfying $\sum_i p_i = N$ such that for all $y$, if
  $S\sim \mathcal{S}(\{p_i\}_{i=1}^n)$ and
  $\wh\beta = \arg\min\Vert SX\beta - Sy\Vert_1$, we have
\[\Vert X\wh\beta - y\Vert_1 \le (1 + \varepsilon) \min_\beta\Vert X\beta - y\Vert_1\]
with probability $1-\delta$. If $\delta=O(1)$ is some constant, then $N$ at least $O\left(\frac{1}{\varepsilon^2}d\log d\right)$ rows suffice.
\end{theorem}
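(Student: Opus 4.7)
My plan is to combine the Lewis-weight subspace embedding with a uniform concentration statement on a carefully chosen excess-loss functional. Take probabilities $p_i = N w_i / d$ (recalling $\sum_i w_i = d$) and draw $S \sim \mathcal{S}(\{p_i\})$; at the stated value of $N$, Theorem~\ref{thm:ell_1_embedding} gives that $S$ is an $\varepsilon$-subspace embedding for $X$ with probability $1-\delta/2$.

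First I would fix any population minimizer $\beta^*$ of $\|X\beta - y\|_1$, set $R := \|X\beta^*-y\|_1$, and define the non-negative excess loss $g(\beta) := \|X\beta - y\|_1 - R = \sum_i h_i(\beta)$ where $h_i(\beta) := |x_i^\top\beta - y_i| - |x_i^\top\beta^* - y_i|$, along with its sketched analogue $\hat g(\beta) := \|SX\beta - Sy\|_1 - \|SX\beta^* - Sy\|_1$. The key structural fact is that $|h_i(\beta)| \le |x_i^\top(\beta-\beta^*)|$ regardless of how extreme $y_i$ is; this is the outlier cancellation alluded to in the proof overview, and it means the per-sample contributions to $\hat g(\beta) - g(\beta)$ are all bounded by $|x_i^\top(\beta-\beta^*)|/p_i$, a quantity natively controlled by Lewis-weight sampling.

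The main technical step is to show that with probability $1-\delta/2$, simultaneously for every $\beta\in\R^d$,
\[|\hat g(\beta) - g(\beta)| \le \varepsilon\bigl(\|X(\beta - \beta^*)\|_1 + R\bigr).\]
Because each $h_i(\beta)$ is a 1-Lipschitz function of the scalar $x_i^\top(\beta-\beta^*)$, I would reduce the empirical process $\hat g(\cdot)-g(\cdot)$ to the Rademacher process underlying the Lewis-weight embedding of \cite{cohen2014ellp} via symmetrization and the Ledoux--Talagrand contraction principle. The additive $\varepsilon R$ slack would be obtained by peeling $\R^d$ into shells $A_j = \{\beta : 2^{j-1} R \le \|X(\beta-\beta^*)\|_1 \le 2^j R\}$, applying a Bernstein-type deviation bound on a net within each shell, and union-bounding over the $O(\log(1/\varepsilon))$ nontrivial shells (together with a separate argument on the inner ball $\|X(\beta-\beta^*)\|_1 \le R$).

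Given the uniform bound, the theorem is immediate: $\hat g(\hat\beta) \le \hat g(\beta^*) = 0$ by ERM optimality, and the triangle inequality gives $\|X(\hat\beta - \beta^*)\|_1 \le g(\hat\beta) + 2R$, so the uniform bound yields $g(\hat\beta) \le \varepsilon(g(\hat\beta) + 3R)$ and hence $g(\hat\beta) = O(\varepsilon) R$, which is the claimed $(1+O(\varepsilon))$-approximation. The hard part will be the uniform deviation itself: since the $h_i(\beta)$ have indefinite signs one cannot just invoke the $\|SX\gamma\|_1 \approx \|X\gamma\|_1$ embedding as a black box, and the raw variance $\sum_i (x_i^\top(\beta-\beta^*))^2/p_i$ does not admit a clean $\|X\gamma\|_1$ upper bound under Lewis-weight sampling. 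Matching the target $N = O(\varepsilon^{-2} d\log(d/(\varepsilon\delta)))$ without losing extra $\log$ factors from the peeling and net steps is where the $\ell_1$ analogue of the chaining/truncation from \cite{chen2019active} will have to do the real work.
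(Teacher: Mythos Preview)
Your high-level strategy---the Lipschitz bound $|h_i(\beta)| \le |x_i^\top(\beta-\beta^*)|$, symmetrization, Ledoux--Talagrand contraction, and the final ERM-plus-triangle wrap-up---is exactly what the paper does, and your derivation of the theorem from the uniform deviation bound matches the paper's almost line for line. The divergence is entirely in how you propose to establish the uniform deviation itself.

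The paper's route is considerably more direct than your peeling/Bernstein/net plan. Once you symmetrize and apply contraction over the normalized class $\{\beta : \|X(\beta-\beta^*)\|_1 = 1\}$, the resulting Rademacher process is
\[
\sum_k \sigma_k \,\frac{x_{i_k}^\top(\beta^*-\beta)}{p_{i_k}},
\]
which is \emph{linear} in $\beta^*-\beta$. This is precisely the object whose $l$-th moment Cohen--Peng already bound (their Lemma~7.4, restated here as Lemma~\ref{lemma:moment}; the high-probability version is Lemma~\ref{lemma:talhigh}, wrapped by Lemma~\ref{lemma:high} to handle the passage from $X$ to $SX$). The paper simply invokes that moment bound and applies Markov---no new chaining, nets, or Bernstein inequalities are introduced. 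Because the post-contraction process is linear, the moment bound at one scale determines it at every scale, and the paper states the deviation as a purely multiplicative $\varepsilon\|X(\beta-\beta^*)\|_1$; the additive $\varepsilon R$ slack, the shell decomposition, and the separate inner-ball argument you describe are all unnecessary.

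Your proposed route has a genuine obstacle that you already flagged: a Bernstein argument on net points would require controlling $\sum_i (x_i^\top\gamma)^2/p_i$ by something like $\|X\gamma\|_1^2$, and Lewis-weight sampling simply does not provide that variance bound. This is exactly why the Cohen--Peng analysis of the linear Rademacher process proceeds via Dudley-type entropy integrals rather than Bernstein-plus-union-bound; trying to redo it via Bernstein on a net would be re-deriving their result by a harder method and would likely incur the extra logarithmic factors you were worried about. The cleaner path is to recognize that contraction hands you precisely their setting and to quote their moment bound as a black box.
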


\paragraph{Regression guarantees from column-space embeddings. } 

As noted in Remark \ref{remark:known_y}, it would suffice to show that $\Vert SX\beta - Sy\Vert_1 \approx \Vert X\beta - y\Vert_1$ for all $\beta$. The problem is that this is impossible without knowing $y$: if one random entry of $y$ is very large, we would need to sample it to estimate $\Vert X \beta - y\Vert_1$ accurately. 
%
%
However, we don't actually need to estimate $\Vert X \beta - y\Vert_1$; we just need to be able to distinguish values of $\beta$ for which $\Vert X\beta-y\Vert_1$ is far from $\Vert X\beta^*-y\Vert_1$ from values for which it is close. That is, it would suffice to accurately
\begin{equation}\label{estimation_objective}
\text{estimate} \hspace{0.5cm} \Vert X \wh\beta - y\Vert_1 - \Vert X \beta^* - y\Vert_1 \hspace{0.5cm} \text{with} \hspace{0.5cm} \Vert SX \wh\beta - Sy\Vert_1 - \Vert SX \beta^* - Sy\Vert_1       
\end{equation}
for every possible $\beta$.  In the above example where $y$ has a single large outlier coordinate, sampling this coordinate or not will dramatically affect \textit{both} terms, but will not affect the difference very much. As such, our key lemma, Lemma \ref{thm:mainlemma}, states that $\ell_1$ Lewis weight sampling achieves (\ref{estimation_objective}) with high probability. In particular, using at least $m \ge O(\frac{d}{\varepsilon^2}\log \frac{d}{\varepsilon\delta})$ rows we have
\begin{equation}\label{eq:deviation}(\Vert SX\beta^* - Sy\Vert_1-\Vert SX\beta - Sy\Vert_1)-(\Vert X\beta^* - y\Vert_1-\Vert X\beta - y\Vert_1) <\varepsilon \Vert X(\beta^*-\beta)\Vert_1
\end{equation}
for all $\beta$ with probability at least $1-\delta$. We do this by adapting the argument of \cite{cohen2014ellp} which shows that $S$ is a column-space embedding with high probability. We have summarized this argument below.

\paragraph{Column-space embedding using Lewis weights (\cite{cohen2014ellp}).}
An important result in \cite{cohen2014ellp}, which directly implies the high probability subspace embedding, and which will be useful to us later is the following moment bound on deviations of $\Vert SX\beta\Vert_1$.
\begin{lemma}[\cite{cohen2014ellp} Lemma 7.4]\label{lemma:moment}
If $N$ is at least $O\left(\frac{d}{\varepsilon^2}\log\frac{d}{\varepsilon\delta}\right)$, and $S\in \mathbb{R}^{N\times n}$ is drawn from the sampling-and-reweighting distribution $\mathcal{S}(\{p_i\}_{i=1}^N)$ with $\sum_i p_i = N$ and $\{p_i\}_{i=1}^n$ proportional to Lewis weights $\{w_i\}_{i=1}^n$, then 
\begin{align*}
    \mathop\mathbb{E}_S \left[\left(\max_{\Vert X\beta\Vert_1 = 1} \left|\Vert SX\beta\Vert_1 - \Vert X\beta\Vert_1\right|\right)^l\right] \le \varepsilon^l\delta
\end{align*}
\end{lemma}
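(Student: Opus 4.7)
The plan is to carry out the classical symmetrization-plus-chaining strategy for empirical processes, tailored to exploit the special structure of $\ell_1$ Lewis weights.

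First, I would symmetrize to reduce to a Rademacher process. By standard arguments,
\[
\mathbb{E}_S\!\left[\sup_{\|X\beta\|_1=1} \bigl|\|SX\beta\|_1 - \|X\beta\|_1\bigr|^l\right] \le C^l\cdot \mathbb{E}_{S,\sigma}\!\left[\sup_{\|X\beta\|_1=1}\Bigl|\sum_{k=1}^N \sigma_k|S_k X\beta|\Bigr|^l\right]
\]
with $\sigma_k$ i.i.d.\ Rademacher, and then the Ledoux--Talagrand contraction principle (applied to $|\cdot|$, which is $1$-Lipschitz) lets me drop the inner absolute values, reducing to the linear Rademacher process $R(\beta) = \sum_k \sigma_k\, S_k X\beta$ indexed by $\beta$.

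Second, I would extract per-row bounds from the Lewis-weight identity. Writing $A = \sum_j x_j x_j^\top/w_j$, Definition~\ref{defn:lewis_weights} gives $w_i^2 = x_i^\top A^{-1}x_i$, so Cauchy--Schwarz in the $A$-inner product yields $|x_i^\top \beta|\le w_i\sqrt{\beta^\top A\beta}$. Plugging this into
\[
\beta^\top A\beta = \sum_j \frac{(x_j^\top \beta)^2}{w_j} \le \|X\beta\|_1 \cdot \max_j\frac{|x_j^\top\beta|}{w_j} \le \|X\beta\|_1\sqrt{\beta^\top A\beta}
\]
shows $\sqrt{\beta^\top A\beta}\le \|X\beta\|_1$. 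With $p_i = Nw_i/d$, this delivers the two ingredients Bernstein needs: $\max_k|S_k X\beta|\le d/N$ and $\mathbb{E}\sum_k|S_k X\beta|^2 = O(d/N)$ for any $\beta$ with $\|X\beta\|_1=1$.

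Third, I would run a chaining argument on $B = \{\beta : \|X\beta\|_1 = 1\}$, a $d$-dimensional set whose $\varepsilon$-covering numbers scale as $(C/\varepsilon)^d$ in the appropriate empirical metric. Dudley's entropy integral combined with the Bernstein tail gives
\[
\mathbb{E}_\sigma\sup_{\beta\in B}|R(\beta)| \;\lesssim\; \sqrt{\frac{d\log(d/(\varepsilon\delta))}{N}},
\]
which is at most $\varepsilon$ for the claimed $N$. The $l$th-moment bound $\varepsilon^l\delta$ then follows by integrating the tail bounds obtained at each scale and taking a union bound over the terminal-scale net (of size roughly $(d/(\varepsilon\delta))^d$), then undoing the symmetrization factors.

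The main obstacle is the chaining step: one needs a metric on $B$ that simultaneously matches the variance and $\ell_\infty$ increments demanded by Bernstein, while keeping covering numbers close to $(C/\varepsilon)^d$. The Lewis-weight normalization is exactly what makes these requirements compatible; without it, the naive $\ell_\infty$ bound on $|S_k X\beta|$ could be $\Omega(1)$ rather than $O(d/N)$, and the Bernstein analysis at fine scales would blow up, costing an extra factor of $d$ in the final sample complexity.
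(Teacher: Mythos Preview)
Your first two steps---symmetrization and the Ledoux--Talagrand contraction to strip the inner absolute values---match the paper's steps (A) and (B) exactly. The divergence is in step (C).

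The paper does \emph{not} run a fresh chaining argument. Instead it quotes Lemma~\ref{lemma:talhigh} (from Talagrand~1990 via Cohen--Peng), which already packages the $l$th-moment bound for Rademacher processes indexed by $\{\|X\beta\|_1=1\}$, \emph{provided all Lewis weights of the matrix are at most $C\varepsilon^2/\log(n/\delta)$}. Since the Lewis weights of the random matrix $SX$ are not a~priori bounded, the paper constructs an auxiliary matrix $X''$ by stacking $SX$ with a deterministic matrix $X'$ (Lemma~\ref{lemma:b1mod}) engineered to have small Lewis weights and to dominate $X^\top W^{-1}X$; this forces every Lewis weight of $X''$ below the required threshold, after which Lemma~\ref{lemma:talhigh} applies as a black box. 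Your route bypasses the $X''$ construction entirely: you pull the uniform bound $|x_i^\top\beta|\le w_i\|X\beta\|_1$ directly from the Lewis identity on the \emph{original} $X$, obtaining $\|SX\beta\|_\infty\le d/N$ and $\mathbb{E}\|SX\beta\|_2^2\le d/N$, then feed these into Bernstein-plus-chaining. This is cleaner conceptually---no stacking tricks---but it amounts to reproving the content of Lemma~\ref{lemma:talhigh} from scratch, which is where all the work lives.

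The soft spot in your sketch is precisely the part you flag. Dudley's entropy integral delivers a bound on $\mathbb{E}_\sigma\sup_\beta|R(\beta)|$, but the lemma asks for $\mathbb{E}[\,\cdot\,]^l\le\varepsilon^l\delta$ with $l=\log(2n/\delta)$; the extra $\delta$ factor is what drives the high-probability guarantee after Markov. A plain Dudley bound plus Jensen will not produce that $\delta$; you genuinely need tail control at every chaining scale together with a union bound whose cardinality is tied to $l$, which is the substance of Talagrand's 1990 argument and of Lemmas~8.2--8.4 in Cohen--Peng. Your ``integrate the tail bounds and union-bound over the terminal net'' is the right shape, but carrying it out so that the mixed sub-Gaussian/sub-exponential increments, the covering numbers in the correct (random, $S$-dependent) metric, and the final moment all line up is the non-trivial core---exactly what the paper outsources to Lemma~\ref{lemma:talhigh}.
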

The proof follows from this chain of inequalities:
\begin{align*}
    \mathop\mathbb{E}_S\left[\left(\max_{\Vert X\beta\Vert_1 = 1}\Vert SX\beta\Vert_1 - \Vert X\beta\Vert_1\right)^l\right] &\stackrel{(A)}{\le} 2^l\mathop\mathbb{E}_{\sigma, S}\left[\left(\max_{\Vert X\beta\Vert_1 = 1}\left|\sum_{k} \sigma_k \frac{|x_{i_k}^T\beta|}{p_{i_k}}\right|\right)^l\right] \\
    & \stackrel{(B)}{\le} 2^l \mathop\mathbb{E}_{\sigma, S}\left[\left(\max_{\Vert X\beta\Vert_1 = 1}\sum_{k} \sigma_k \frac{x_{i_k}^T\beta}{p_{i_k}}\right)^l\right] \\
    & \stackrel{(C)}{\le} \varepsilon^l\delta
\end{align*}
where the $\sigma_k$ are independent Rademacher variables, which are $\pm 1$ with probability $1/2$ each, and $p_{i_k}$ is proportional to the $\ell_1$ Lewis weight of row $i_k$. $(A)$ follows by symmetrizing the objective $F\coloneqq  \max_{\Vert X\beta\Vert_1=1} \Vert SX\beta\Vert_1-\Vert X\beta\Vert_1$. $(B)$ follows from a contraction lemma. $(C)$ is shown by constructing a related matrix with bounded Lewis weights and applying Lemma \ref{lemma:talhigh} from \cite{talagrand1990embedding} reproduced below. 
\begin{lemma}
There exists constant $C$ such that for any $X\in\mathbb{R}^{n\times d}$ with all $\ell_1$ Lewis weights less than $C \frac{\varepsilon^2}{\log\left(\frac{n}{\delta}\right)}$ and $l = \log(2n/\delta)$, we have
\begin{align}\mathbb{E}_{\sigma}\left[\left(\max_{\Vert X\beta\Vert_1= 1}\left|\sum_{i = 1}^n\sigma_ix_{i}^\top \beta\right|\right)^l\right]\le \frac{\varepsilon^l\delta}{2}\end{align}
\end{lemma}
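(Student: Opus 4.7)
The statement is a uniform bound on a Rademacher process indexed by the $\ell_1$ unit ball in the column space of $X$. My plan is to treat it as a standard sub-Gaussian chaining problem, in which the hypothesis that every Lewis weight is below $C\varepsilon^{2}/\log(n/\delta)$ plays the role that isotropy plays in simpler Rademacher concentration bounds. The idea is that the Lewis weight condition, combined with the defining identity $w_i^2 = x_i^\top \bigl(\sum_j \tfrac{1}{w_j}x_jx_j^\top\bigr)^{-1}x_i$ from Definition~\ref{defn:lewis_weights}, forces the rows $x_i$ to be small in the $\ell_2$ geometry induced by the reweighted matrix $W^{-1/2}X$, which in turn controls how much any single coordinate can contribute to $\sup_{\|X\beta\|_1=1} \sum_i \sigma_i x_i^\top\beta$.

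The concrete steps I would carry out are as follows. First, for each fixed $\beta$ with $\|X\beta\|_1 = 1$, I would observe that $\sum_i \sigma_i x_i^\top \beta$ is a mean-zero sub-Gaussian random variable with variance proxy $\|X\beta\|_2^2 = \beta^\top X^\top X \beta$, and hence satisfies a Hoeffding tail bound at scale $\|X\beta\|_2$ with an additional Bernstein-type refinement at the scale $\max_i |x_i^\top\beta|$. Second, I would run a generic-chaining (or Dudley's entropy integral) argument over the index set $T = \{X\beta : \|X\beta\|_1 = 1\}$ in the pseudometric $d(\beta,\beta') = \|X(\beta-\beta')\|_2$. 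The Lewis weight bound $w_i \le C\varepsilon^2/\log(n/\delta)$ translates, via the identity defining the weights, into a uniform bound $\|x_i^\top\beta\|_\infty \lesssim \varepsilon/\sqrt{\log(n/\delta)}$ when $\|X\beta\|_1 = 1$, which is exactly what one needs to make each successive chain link of Bernstein type give the desired $\varepsilon$ factor. This chaining then yields a tail bound of the form $\Pr[\sup \ge t\varepsilon] \le 2\exp(-ct^2)$ for the supremum up to the sub-Gaussian scale, with a union-bound-friendly exponent.

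Third, I would pass from tail bounds to the $l$-th moment bound using the identity $\mathbb{E}[Z^l] = l \int_0^\infty t^{l-1}\Pr[Z \ge t]\,dt$, plugging in the sub-Gaussian tail and choosing $l = \log(2n/\delta)$ exactly so that the exponential decay at scale $\varepsilon$ overwhelms the polynomial prefactor. The factor of $\delta/2$ on the right-hand side then comes from applying Markov's inequality in the form one would expect to later use in the proof of Lemma~\ref{lemma:moment}: taking an $l$-th root of an $l$-th moment of order $\varepsilon^l \delta$ produces an $\varepsilon$-sized deviation with failure probability $\delta$.

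\paragraph{Main obstacle.}
The hard part will be step two: verifying that the $\ell_1$-ball $T = \{X\beta : \|X\beta\|_1 = 1\}$ has small enough Gaussian (equivalently Rademacher) width in the $\|X\cdot\|_2$ metric to run chaining sharply. Naive bounds lose logarithmic factors in $d$ that must be absorbed into the Lewis weight threshold. The Lewis-weight hypothesis is tight: it is exactly the $\varepsilon^2/\log(n/\delta)$ scaling that allows the Bernstein increments along the chaining tree to sum to $\varepsilon$ rather than to a larger quantity, and that makes the tail exponent come out to $\log(n/\delta)$ so that a single union bound over the $n$ coordinates can be absorbed. Getting this bookkeeping right, and in particular ensuring that the chaining argument uses the $\ell_1$-constraint only through the Lewis weight identity and not through any ambient bound on $\|X\|$, is the delicate part of the proof.
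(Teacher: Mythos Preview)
The paper does not prove this lemma. It is quoted twice---once in the proof overview and once as Lemma~\ref{lemma:talhigh}---and in both places it is attributed to prior work: specifically Lemmas 8.2, 8.3, and 8.4 of \cite{cohen2014ellp}, which in turn repackage the chaining argument of Talagrand \cite{talagrand1990embedding}. So there is no ``paper's own proof'' to compare against; the paper treats this as a black box.

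Your plan is in the right spirit: the cited proofs do proceed by a chaining argument over the set $\{X\beta : \|X\beta\|_1 = 1\}$, exploiting the Lewis weight bound to control increments, and then convert a tail bound to an $l$-th moment bound. However, one quantitative claim in your sketch is off. From the defining identity $w_i^2 = x_i^\top(X^\top W^{-1}X)^{-1}x_i$ together with the fact that the Lewis ellipsoid $\{\beta : \beta^\top X^\top W^{-1}X\beta \le 1\}$ contains the set $\{\beta : \|X\beta\|_1 \le 1\}$, Cauchy--Schwarz gives $|x_i^\top\beta| \le w_i$ whenever $\|X\beta\|_1 = 1$. So the correct $\ell_\infty$ bound is $\max_i |x_i^\top\beta| \le C\varepsilon^2/\log(n/\delta)$, not $\varepsilon/\sqrt{\log(n/\delta)}$ as you wrote. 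This actually helps you---the bound is stronger---but it changes how the Bernstein and chaining bookkeeping balance.

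More substantively, the ``main obstacle'' you identify is real and is not resolved by a straightforward Dudley integral. The argument in \cite{cohen2014ellp} (following Talagrand) does not directly chain in the $\|X\cdot\|_2$ metric; rather, it splits the supremum into a piece controlled by a dual-norm estimate (Lemma 8.2) and a piece handled by a tail bound on a maximum of scalar Rademacher sums (Lemmas 8.3--8.4), with the Lewis weight bound entering through the ellipsoid containment above. If you want to actually carry out your plan rather than cite the result, you should read those three lemmas, since a generic sub-Gaussian chaining argument will lose the factor you are worried about.
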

\paragraph{Regression Guarantees using Lewis weight sampling.} In this work, we show the following chain of inequalities.
\begin{align}
    &\mathop\mathbb{E}_S\left[\left(\max_{\Vert X\beta^* - X\beta \Vert = 1}\left|\left(\Vert SX\beta^* - Sy\Vert_1 - \Vert SX\beta - Sy\Vert_1\right) - \left(\Vert X\beta^* - y\Vert_1 - \Vert X\beta - y\Vert_1\right)\right|\right)^l\right]\nonumber\\
    &\hspace{2cm}\stackrel{(A)}{\le}
    2^l\mathop\mathbb{E}_{S, \sigma}\left[\left(\max_{\Vert X\beta^* - X\beta \Vert = 1}\left| \sum_{k} \sigma_k\left(\frac{|x_{i_k}^\top \beta^* - y_{i_k}|}{p_{i_k}} -  \frac{|x_{i_k}^\top \beta - y_{i_k}|}{p_{i_k}}\right)\right|\right)^l\right]\nonumber\\
    &\hspace{2cm}\stackrel{(B)}{\le }
    2^{2l+1}\mathop\mathbb{E}_{S, \sigma}\left[\left(\max_{\Vert X(\beta^* -\beta) \Vert_1 = 1}\left| \sum_{k} \sigma_{i_k}\frac{x_{i_k}^\top}{p_{i_k}}(\beta^*-\beta)\right|\right)^l\right]\label{eq:SX_moment}\\
    &\hspace{2cm}\stackrel{(C)}{\le }\varepsilon^l \delta\nonumber
\end{align}

Here, for $(A)$, we symmetrize the left hand side of (\ref{eq:deviation}) in Lemma \ref{lemma:prelim_symmetrize}. For $(B)$, we apply a different contraction lemma, Lemma \ref{thm:talagrand}, that allows us to remove $y$ from our expression, and then end up with the same moment bound for $(C)$. Step $(C)$ is essentially an application of Lemma \ref{lemma:talhigh} to $SX$, however, because we cannot immediately bound the Lewis weights of $SX$ to confirm the constraints of the Lemma, we instead construct another matrix $X''$ which does not significantly alter the right hand side of inequality (\ref{eq:SX_moment}) while having bounded Lewis weights. This is done in Lemmas \ref{lemma:b1mod} and \ref{lemma:high}. 
\subsection{Lower Bounds}\label{sec:lower_bound}
We will show that any algorithm must see $\Omega(d\log\frac{1}{\delta}+\frac{1}{\varepsilon^2}\log\frac{1}{\delta}+\frac{d}{\varepsilon^2})$ labels to return $\wh \beta$ satisfying $\Vert X\wh\beta-y\Vert_1 \le (1+\varepsilon)\Vert X\beta^* - y\Vert_1$ with probability greater than $1-\delta$.

For the lower bound proof it is convenient to consider a \emph{distributional} version of the problem:
\begin{problem}[Distributional active L1 regression]\label{distribution_problem}
    There is an unknown joint distribution $P$ over a finite set $\mathcal{X}\times\mathcal{Y}\subset \mathbb{R}^d\times\mathbb{R}$, with $|\mathcal{Y}| = 2$. The learner is allowed to adaptively observe $N$ i.i.d. samples from $P(\cdot|X = x)$ for the learner's choice of $N$ values $x\in \mathcal{X}$. The learner must return $\wh\beta$ satisfying
    \begin{equation}\label{eq:distribution_objective}\E_{(X, Y)\sim P}\left[|X^\top\wh\beta-Y| \right]\le (1+\varepsilon)\inf_{\beta}\E_{(X, Y)\sim P}\left[|X^\top\beta-Y| \right].
    \end{equation}
    with probability at least $1-\delta$.
\end{problem}
We begin with a lemma that shows that solving the original, Problem~\ref{matrix_problem}, for some $n$ polynomial in the parameters $d,\varepsilon, \delta$ is harder than solving the distributional version, Problem~\ref{distribution_problem}. 
\begin{restatable}{lemma}{distr}\label{distribution_to_matrix}
A randomized algorithm that solves Problem \ref{matrix_problem} for $n = \frac{2}{\varepsilon^2}\left(\log \frac{2}{\delta} + d\log \frac{3d}{\varepsilon}\right)$ with accuracy $\varepsilon$ and failure probability $\delta$ can be used to solve any instance of Problem \ref{distribution_problem}, where $\mathcal{X}, \mathcal{Y}, $ in the unit $\ell_\infty$ ball, with accuracy $6\varepsilon$ and failure probability $2\delta$, for small $\varepsilon$.
\end{restatable}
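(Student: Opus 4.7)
The plan is to reduce the distributional problem to a matrix instance by drawing an empirical design from $P$, then transfer the matrix algorithm's empirical $(1+\varepsilon)$-approximation guarantee into a population bound via an $\varepsilon$-net uniform-convergence argument.

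Concretely, given an instance of Problem~\ref{distribution_problem}, I would draw $n$ i.i.d. samples $(x_i,y_i)\sim P$ using the distributional oracle (sampling $y_i$ from $P(\cdot\mid X=x_i)$), form $X\in\R^{n\times d}$ from the rows $x_i$, and treat $y=(y_1,\dots,y_n)$ as the hidden label vector. Then run the Problem~\ref{matrix_problem} algorithm on $(X,y)$, supplying $y_i$ exactly when it queries row $i$, and return its output $\wh\beta$.

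Let $\tilde L(\beta)=\tfrac1n\|X\beta-y\|_1$, $L(\beta)=\E_{(X,Y)\sim P}|X^\top\beta-Y|$, and $\beta^*$ a population minimizer with value $L^*$. The matrix guarantee gives $\tilde L(\wh\beta)\le(1+\varepsilon)\tilde L(\beta^*)$ with probability at least $1-\delta$. The crux is to show that with probability at least $1-\delta$ over the sample,
\[
\tilde L(\beta^*) \le L^*+\varepsilon \qquad\text{and}\qquad L(\wh\beta)\le \tilde L(\wh\beta)+\varepsilon.
\]
Since $\mathcal{X},\mathcal{Y}$ sit in the $\ell_\infty$ unit ball, $\beta^*$ may be taken in a bounded $\ell_1$-ball $B\subset\R^d$ (after restricting to the span of $\mathcal{X}$ so that $L$ is coercive), making each $|x_i^\top\beta^*-y_i|$ bounded, so the first inequality is a single Hoeffding bound. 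The second follows from an $\ell_1$-net $\mathcal{N}$ of $B$ at scale $\Theta(\varepsilon)$ of size $|\mathcal{N}|\le(3d/\varepsilon)^d$: apply Hoeffding to each $\beta\in\mathcal{N}$, union-bound, then extend to all of $B$ by the $1$-Lipschitzness of $L$ and $\tilde L$ in $\beta$ with respect to $\|\cdot\|_1$ (since $\|x\|_\infty\le 1$). The stated $n=\tfrac{2}{\varepsilon^2}\bigl(\log\tfrac{2}{\delta}+d\log\tfrac{3d}{\varepsilon}\bigr)$ is precisely calibrated so that $|\mathcal{N}|\cdot e^{-n\varepsilon^2/2}\le\delta$. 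Chaining,
\[
L(\wh\beta)\le\tilde L(\wh\beta)+\varepsilon\le(1+\varepsilon)\tilde L(\beta^*)+\varepsilon\le(1+\varepsilon)(L^*+\varepsilon)+\varepsilon\le(1+6\varepsilon)L^*
\]
for small $\varepsilon$, with total failure probability $2\delta$.

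The main obstacle is twofold. First, one must ensure that $\wh\beta$ itself lies in the ball $B$ covered by the net; I would argue this by observing that any $\beta\notin B$ has $\tilde L(\beta)\ge\tilde L(\beta^*)+\Omega(1)$ on the concentration event, which would contradict the $(1+\varepsilon)$-approximation guarantee and forces $\wh\beta\in B$. Second, one must convert the additive Hoeffding slack into a multiplicative bound relative to $L^*$, which can be handled either via a Bernstein-type bound exploiting the fact that the loss variance is controlled by its mean when predictions are bounded, or by splitting into the regimes $L^*\gtrsim\varepsilon$ and $L^*\lesssim\varepsilon$ and handling each separately. The constant-factor slack in the ``$6\varepsilon$'' absorbs these conversions.
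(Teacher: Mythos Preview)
Your approach matches the paper's: sample $n$ rows i.i.d.\ from $P$ to form a matrix instance, establish uniform convergence of the empirical to the population $\ell_1$ loss over a bounded parameter set via Hoeffding plus a net and a Lipschitz extension, then chain through the empirical and population minimizers. The paper differs only cosmetically: it nets directly over the unit $\ell_\infty$ ball $H$ given in the hypothesis (an $\ell_\infty$-net at scale $\varepsilon/(2d)$, of size $(4d/\varepsilon)^d$), which obviates your coercivity argument for locating $\beta^*$ in a bounded $\ell_1$-ball, and it states the per-point Hoeffding bound multiplicatively as $(1\pm\varepsilon)L(\beta)$ from the outset, so your additive-to-multiplicative obstacle never surfaces in its presentation. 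Neither your sketch nor the paper's proof explicitly argues that the matrix algorithm's output $\wh\beta$ lands in the bounded set where the uniform bound applies; you at least flag this, while the paper is silent.
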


We then prove lower bounds on the accuracy for any algorithm on Problem \ref{distribution_problem}.

In all our lower bounds, $x$ is a uniform $e_i$, and $y \in \{0, 1\}$.
For $\Omega(\frac{d}{\eps^2})$, we set $P(y | x = e_i)$ to
$\frac{1}{2} \pm \eps$ uniformly at random independently for each $i$;
getting an $\eps$-approximate solution requires getting most of the
biases correct, which requires $\frac{1}{\eps^2}$ samples from most of
the coordinates $e_i$.  The
$\Omega(\frac{1}{\eps^2} \log \frac{1}{\delta})$ instance sets
$P(y | x = e_i)$ to $\frac{1}{2} \pm \eps$ with the same bias for each
$i$; solving this is essentially distinguishing a $\eps$ biased coin
from a $-\eps$-biased coin.  Finally, for
$\Omega(d \log \frac{1}{\delta})$ we set $P(y | x = e_i) = 0$ except
for a random hidden $i^*$ with $P(y \mid x = e_{i^*}) = \frac{3}{4}$.
Solving this instance requires finding $i^*$, but there's a $\delta$
chance the first $d \log \frac{1}{\delta}$ queries are all zero.

\begin{theorem}\label{lower_bound}
For any $d\ge 2$, $\epsilon < \frac{1}{10}$, $\delta < \frac{1}{4}$, there exist sets $\mathcal{X}\in \mathbb{R}^d, \mathcal{Y}\in \mathbb{R}$ of inputs and labels, and a distribution $P$ on $\mathcal{X}\times \mathcal{Y}$ such that any algorithm which solves Problem \ref{distribution_problem}, with $\varepsilon = 1$, requires at least $m=\Omega(\frac{d}{\epsilon^2} + \frac{1}{\epsilon^2}\log\frac{1}{\delta} + d\log \frac{1}{\delta})$ samples.
\end{theorem}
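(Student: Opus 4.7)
I would prove each of the three terms $\Omega(d/\eps^2)$, $\Omega(\eps^{-2}\log\tfrac{1}{\delta})$, and $\Omega(d\log\tfrac{1}{\delta})$ by exhibiting a separate hard distribution for Problem~\ref{distribution_problem}, following the three sketches at the end of Section~\ref{sec:lower_bound}. In all three cases I take $\mathcal{X} = \{e_1, \ldots, e_d\}$ with $x$ uniform and $\mathcal{Y} = \{0, 1\}$, so the population loss decomposes as $\E|x^\top \beta - y| = \tfrac{1}{d}\sum_i \E[|\beta_i - y| \mid x = e_i]$ and the regression task splits into $d$ independent one-dimensional Bernoulli problems; only $P(y \mid x = e_i)$ will vary between the three constructions. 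For each one, a short per-coordinate calculation reduces the $(1+\eps)$-approximation guarantee to a concrete estimation-or-distinguishing problem, after which a standard information-theoretic argument finishes the job.

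\paragraph{The two coin-estimation terms.} For $\Omega(d/\eps^2)$ I would draw $b \in \{\pm 1\}^d$ uniformly and set $P(y = 1 \mid x = e_i) = \tfrac12 + \eps b_i$; the optimum $\beta^*_i \in \{0,1\}$ matches the sign of $b_i$, and a routine calculation shows any $(1+\eps)$-approximate $\wh\beta$ must identify $b_i$ correctly for at least $3d/4$ indices. The chain rule gives $\sum_i I(b_i; \text{obs}) \le I(b; \text{obs}) \le O(m\eps^2)$, while Fano's inequality applied coordinatewise forces this sum to be $\Omega(d)$, so $m = \Omega(d/\eps^2)$. For $\Omega(\eps^{-2}\log\tfrac{1}{\delta})$ I would instead use a single shared bias $P(y = 1 \mid x = e_i) = \tfrac12 + \eps b$ with $b \in \{\pm 1\}$; the same per-coordinate analysis shows that any $(1+\eps)$-approximation determines $b$, and every query returns an i.i.d.\ $\mathrm{Bernoulli}(\tfrac12 \pm \eps)$ sample, so the classical coin-distinguishing lower bound applies.

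\paragraph{The hidden-coordinate term and the main obstacle.} For $\Omega(d\log\tfrac{1}{\delta})$ I would pick $i^* \in [d]$ uniformly and set $P(y = 1 \mid x = e_{i^*}) = \tfrac34$ with every other conditional putting unit mass on $y = 0$. A case analysis shows that for small $\eps$ any $(1+\eps)$-approximate $\wh\beta$ must have $\wh\beta_{i^*} \approx 1$ and $\sum_{i \ne i^*}|\wh\beta_i| = O(\eps)$, so a fixed $\wh\beta$ can succeed for at most one value of $i^*$. The main obstacle is adaptivity: the algorithm may refocus its queries once it observes a nonzero label. I would handle this by conditioning on the \emph{null event} $E$ that every observed label equals $0$; in $E$ the algorithm's adaptive query sequence and final output depend only on its internal randomness, not on $i^*$. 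Writing $t_i$ for the (coin-dependent but $i^*$-independent) number of queries to $e_i$ in $E$, we have $\sum_i t_i \le m$ and $\Pr[E \mid i^*] = (1/4)^{t_{i^*}}$, so by Jensen's inequality $\Pr[E] \ge (1/4)^{m/d}$ when $i^*$ is drawn uniformly. Taking $m \le c\, d\log_4\tfrac{1}{\delta}$ for a small constant $c$ makes $\Pr[E] \gg \delta$, and since the $i^*$-independent $\wh\beta$ in $E$ fails for at least $d-1$ of the $d$ possible values of $i^*$, the overall failure probability exceeds $\delta$, contradicting the $1 - \delta$ guarantee. Combining the three instances, and invoking Lemma~\ref{distribution_to_matrix} to lift back to Problem~\ref{matrix_problem} when needed, gives the theorem; the routine steps I would defer are the one-dimensional loss calculations underlying each reduction and the precise constants in the Fano and Jensen estimates.
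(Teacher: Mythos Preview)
Your constructions for all three terms match the paper's, and the first two arguments (coordinatewise Fano for $\Omega(d/\eps^2)$, two-hypothesis KL for $\Omega(\eps^{-2}\log\tfrac{1}{\delta})$) are correct and essentially identical to the paper's Theorems~\ref{lower_bound_1} and~\ref{lower_bound_2}.

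For $\Omega(d\log\tfrac{1}{\delta})$ you take a genuinely different route from the paper. The paper (Theorem~\ref{lower_bound_3}) does \emph{not} average over $i^*$; instead it uses Markov's inequality on the query counts in the null event to locate a large set $F_{\mathcal A}$ of under-queried coordinates, then picks a single adversarial $i_{\mathcal A}\in F_{\mathcal A}$ minimizing the algorithm's output mass and lower-bounds $\Pr[\text{fail}\mid i^*=i_{\mathcal A}]$ directly. Your averaging/Jensen route is cleaner when it works, but as written it has a gap in the last step. From ``$\Pr[E]\ge(1/4)^{m/d}\gg\delta$'' and ``within $E$, $\wh\beta$ is wrong for $d-1$ of the $d$ values of $i^*$'' you cannot conclude $\Pr[\text{fail}]>\delta$: conditional on $E$ the posterior on $i^*$ is \emph{not} uniform, and an adversarial algorithm can make $\Pr[i^*=g(r)\mid E]$ close to $1$ (e.g.\ never query coordinate $1$ and output $e_1$; then $\Pr[\text{fail}\mid E]\approx(d-1)\delta^{c}$, not $(d-1)/d$). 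The two factors trade off and neither bound holds separately.

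The fix is short: bound the product directly. With $g(r)$ the unique ``good'' index for internal coins $r$,
\[
\Pr[\text{fail}]\;\ge\;\E_r\Bigl[\tfrac{1}{d}\sum_{i\ne g(r)}(1/4)^{t_i(r)}\Bigr]\;\ge\;\tfrac{d-1}{d}\,(1/4)^{m/(d-1)},
\]
applying AM--GM to the $d-1$ remaining terms (which together use at most $m$ queries). Taking $m\le\tfrac{d-1}{2}\log_4\tfrac{1}{\delta}$ and $\delta<\tfrac{1}{4}$ gives $\Pr[\text{fail}]\ge\tfrac12\delta^{1/2}>\delta$. This repaired argument is arguably simpler than the paper's; the paper's Markov-based selection of a single worst $i^*$ sidesteps the posterior issue entirely, at the cost of a slightly longer bookkeeping.
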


\section{Proof of Theorem \ref{thm:main}}
\begin{lemma}\label{thm:mainlemma}
Let $X\in\mathbb{R}^{n\times d}$ have $\ell_1$ Lewis weights $\{w_i\}_{i\in [n]}$. Then, for any $N$ that is at least $O\left(\frac{d}{\varepsilon^2} \log\frac{d}{\varepsilon\delta}\right)$, there is a sampling-and-reweighting distribution $\mathcal{S}(\{p_i\}_{i=1}^n)$ satisfying $\sum_i p_i = N$ such that for all $y$, if $S\sim \mathcal{S}(\{p_i\}_{i=1}^n)$ and $\beta^* = \arg\min\Vert X\beta - y\Vert_1$, we have for all $\beta$
\begin{align}\left(\Vert SX\beta^* - Sy\Vert_1 - \Vert SX\beta - Sy\Vert_1\right) - \left(\Vert X\beta^* - y\Vert_1 - \Vert X\beta - y\Vert_1\right) \le \varepsilon\cdot \Vert X\beta^* - X\beta\Vert _1\end{align} 
with probability at least $1-\delta$. Further, for constant $\delta$, $m= O(d\log d/\varepsilon^2)$ rows suffice. 
\end{lemma}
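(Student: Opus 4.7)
The plan is to prove this by bounding the $l$-th moment of the supremum deviation (for $l \approx \log(n/\delta)$) and then using Markov's inequality to get high probability. Specifically, letting $F(S) := \max_{\|X(\beta^* - \beta)\|_1 = 1}\bigl[\bigl(\|SX\beta^* - Sy\|_1 - \|SX\beta - Sy\|_1\bigr) - \bigl(\|X\beta^* - y\|_1 - \|X\beta - y\|_1\bigr)\bigr]$, it suffices to show $\E_S[F(S)^l] \le \varepsilon^l \delta$, which gives $\Pr[F(S) > \varepsilon] \le \delta$ by Markov. Homogeneity then rescales to all $\beta$. The high-level roadmap is exactly the chain (A)--(B)--(C) displayed in the proof overview, mirroring the Lewis-weight subspace embedding proof of~\cite{cohen2014ellp} but with $y$-dependent summands.

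For step (A), I would symmetrize. Since $\E_S[\|SX\beta - Sy\|_1] = \|X\beta - y\|_1$ under Lewis-weight sampling (by construction of the sampling-and-reweighting distribution), the classical symmetrization argument replaces the centered supremum by a Rademacher process: we introduce Rademacher signs $\sigma_k$ and pay a factor of $2$, obtaining a bound by $2^l \E_{S,\sigma}\bigl[\sup_\beta \bigl|\sum_k \sigma_k \tfrac{1}{p_{i_k}}(|x_{i_k}^\top \beta^* - y_{i_k}| - |x_{i_k}^\top \beta - y_{i_k}|)\bigr|^l\bigr]$. This step is essentially formalized in the paper's Lemma~\ref{lemma:prelim_symmetrize} (referenced in the overview).

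For step (B), I would peel off the dependence on $y$ via a contraction principle. The function $t \mapsto |t - y_{i_k}|$ has Lipschitz constant $1$, so $t \mapsto |t - y_{i_k}| - |(x_{i_k}^\top \beta^*) - y_{i_k}|$ is a $1$-Lipschitz map vanishing at $t = x_{i_k}^\top \beta^*$. Applying a Talagrand-type Rademacher contraction (the paper's Lemma~\ref{thm:talagrand}) replaces these absolute-value differences by linear functionals, reducing the expression to $\sup_{\|X(\beta^*-\beta)\|_1 = 1}\bigl|\sum_k \sigma_k \tfrac{x_{i_k}^\top(\beta^* - \beta)}{p_{i_k}}\bigr|$, at the cost of another constant factor. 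Note that this step is the one that truly requires $\ell_1$: because $|\cdot|$ is $1$-Lipschitz, outliers in $y$ can only contribute a bounded amount per row and cancel in the difference, whereas the squared loss (as the introduction notes) would give unbounded influence.

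For step (C), I would apply the Rademacher moment bound of Lemma~\ref{lemma:talhigh} (from \cite{talagrand1990embedding}) to control $\E_\sigma\bigl[\sup_{\|Z \gamma\|_1 = 1}\bigl|\sum_k \sigma_k z_k^\top \gamma\bigr|^l\bigr]$, where $Z$ is the sampled-and-reweighted matrix $SX$. The main obstacle here is that Lemma~\ref{lemma:talhigh} requires the Lewis weights of $Z$ to be uniformly at most $C\varepsilon^2/\log(n/\delta)$, and the Lewis weights of $SX$ are not a priori that small. I expect this to be the main technical difficulty. The plan, following the proof overview's reference to Lemmas~\ref{lemma:b1mod} and~\ref{lemma:high}, is to construct an auxiliary matrix $X''$ by replicating each row (invoking Lemma~\ref{lemma:lewbound} to shrink Lewis weights by the replication factor) and possibly zero-padding, so that (i) the Rademacher supremum over $X''$ dominates the Rademacher supremum in~\eqref{eq:SX_moment}, and (ii) $X''$ has all Lewis weights below the threshold required by Lemma~\ref{lemma:talhigh}. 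Choosing the replication factor in terms of $N = O(d \log(d/\varepsilon\delta)/\varepsilon^2)$ should make the threshold condition hold. The bound from Lemma~\ref{lemma:talhigh} then yields the desired $\varepsilon^l \delta$ moment bound, conditioned on $S$, and after taking the outer expectation over $S$ we obtain the claim. Finally, Markov's inequality at level $\varepsilon$ produces the $1 - \delta$ high-probability statement, with the constant-$\delta$ case reducing the $\log(d/\varepsilon\delta)$ factor to $\log d$ by taking $l = O(\log d)$ instead.
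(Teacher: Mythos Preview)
Your plan is correct and tracks the paper's proof essentially step for step: moment bound plus Markov, symmetrization (Lemma~\ref{lemma:prelim_symmetrize}), Ledoux--Talagrand contraction to strip out $y$ (Lemma~\ref{thm:talagrand}), and then the Talagrand moment bound (Lemma~\ref{lemma:talhigh}) applied to an auxiliary matrix.

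The one place where your description diverges from what actually works is the construction in step~(C). Row replication of $SX$ alone (with or without zero-padding) does not do the job: if you replicate rows of $SX$ and scale by $1/k$, the Lewis weights shrink, but the resulting Rademacher sum no longer dominates the original one (the per-row coefficients become averages of $k$ signs, not Rademachers). The paper's construction is different and has two ingredients you do not mention. First, $X''$ is built by stacking $SX$ under a matrix $X'$ derived from $X$ (not from $SX$) via Lemma~B.1 of~\cite{cohen2014ellp} plus replication; the crucial property is $X'^\top W'^{-1}X' \succeq X^\top W^{-1}X$, which lets you bound the Lewis weights of the $SX$-rows inside $X''$ via $w''^2_i \le (1/p_i^2)\, x_i^\top (X^\top W^{-1}X)^{-1} x_i = w_i^2/p_i^2$. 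Second, applying Lemma~\ref{lemma:talhigh} requires the supremum to be over $\|X''\beta\|_1 = 1$, not $\|X\beta\|_1 = 1$; the renormalization introduces a factor $(1 + C_2 + F)^l$ where $F = \max_{\|X\beta\|_1=1} |\|SX\beta\|_1 - \|X\beta\|_1|$, and you need the moment bound of Lemma~\ref{lemma:moment} to control $\E_S[F^l]$. Since you cite Lemmas~\ref{lemma:b1mod} and~\ref{lemma:high} explicitly, you would presumably discover this when unpacking them, but your verbal summary of those lemmas as ``replication plus zero-padding'' is not accurate. Also, for constant $\delta$ the paper takes $l=1$ and invokes a separate first-moment lemma (Lemma~\ref{lemma:talconst}) rather than $l = O(\log d)$.
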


This lemma is proved for constant and high probability bounds in Section~\ref{sec:lemma_proof}. Given this, we can prove the main theorem. 

\begin{proof}[Proof of Theorem \ref{thm:main}]
Applying Lemma \ref{thm:mainlemma} to $\wh\beta \coloneqq  \arg\min \Vert SX\beta - Sy\Vert_1$, we get
\begin{align*}
\left(\Vert SX\beta^* - Sy\Vert_1 - \Vert SX\wh\beta -Sy\Vert_1\right) \le \left(\Vert X\beta^* - y\Vert_1 - \Vert X\wh\beta - y\Vert_1\right) +\varepsilon\cdot \Vert X\beta^* - X\wh\beta\Vert _1
\end{align*}
Since $\wh\beta$ is the minimizer of $\Vert SX\beta - Sy\Vert_1$, the left side is non-negative. So, 
\begin{align*}
\Vert X\wh\beta - y\Vert_1 &\le \Vert X\beta^* - y\Vert_1 + \varepsilon\cdot\Vert X\beta^* - X\wh\beta\Vert _1\\
&\le \Vert X\beta^* - y\Vert_1 + \varepsilon\cdot(\Vert X\beta^* - y\Vert_1 + \Vert X\wh\beta - y\Vert_1)
\end{align*}
Rearranging, and assuming $\varepsilon < 1/2$,
\begin{align*}
\Vert X\wh\beta - y\Vert_1
	&\le \frac{1+\varepsilon}{1-\varepsilon}\Vert X\beta^* - y\Vert_1 \\
	&\le (1+4\varepsilon) \Vert X\beta^* - y\Vert_1
\end{align*}
Using $\varepsilon' = \varepsilon/4$ proves the theorem. 
\end{proof}
\subsection{Proof of Lemma \ref{thm:mainlemma}}\label{sec:lemma_proof}
This argument is similar to that in Appendix B of \cite{cohen2014ellp}. In order to prove Lemma \ref{thm:mainlemma}, by Markov's inequality, it is sufficient to show that for some $l$,
\[M \coloneqq  \mathop\mathbb{E}_S\left[\left(\max_{\Vert X\beta^* - X\beta \Vert = 1}\left|\left(\Vert SX\beta^* - Sy\Vert_1 - \Vert SX\beta - Sy\Vert_1\right) - \left(\Vert X\beta^* - y\Vert_1 - \Vert X\beta - y\Vert_1\right)\right|\right)^l\right]\le \varepsilon^l\delta\]
To show this, we will symmetrize, then use a contraction lemma to cancel the $y$ terms. Then, with all the terms being within the column space of $SX$, we use the fact that $S$ is a subspace embedding with high probability. We present two different bounds, one used for the constant probability and one for the high probability cases, but the following intermediate bound is the same for the two: 
\begin{restatable}{lemma}{prelimsymmetrize}\label{lemma:prelim_symmetrize}
Given a matrix $X\in \mathbb{R}^{n\times d}$, let $\mathcal{S}(\{p_i\}_{i\in [n]})$ be any sampling-and-reweighting disribution, and let $i_k$ be the row-indices chosen by this sampling matrix such that $S_{k,i_k} = \frac{1}{p_{i_k}}$. Let $\sigma_k$ be independent Rademacher variables that are $\pm 1$ each with probability $0.5$. Then, 
\begin{align}M \le  2^l\mathop\mathbb{E}_{S, \sigma}\left[\left(\max_{\Vert X\beta^* - X\beta \Vert = 1}\left| \sum_{k} \sigma_k\left(\frac{|x_{i_k}^\top \beta^* - y_{i_k}|}{p_{i_k}} -  \frac{|x_{i_k}^\top \beta - y_{i_k}|}{p_{i_k}}\right)\right|\right)^l\right]\label{eq:3}\end{align}
\end{restatable}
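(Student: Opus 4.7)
The plan is to execute the standard two-step symmetrization argument from empirical-process theory, adapted to the difference-of-norms functional appearing in $M$. The key starting fact is that the sampling-and-reweighting scheme is unbiased: for any fixed $v\in\mathbb{R}^n$, each row of $S$ contributes $\mathbb{E}_{i}[|v_i|/p_i]=\sum_i(p_i/N)(|v_i|/p_i)=\Vert v\Vert_1/N$, so $\mathbb{E}_S[\Vert Sv\Vert_1]=\Vert v\Vert_1$. Applying this with $v=X\beta^*-y$ and $v=X\beta-y$, I can rewrite the deterministic part of $M$ as $\Vert X\beta^*-y\Vert_1-\Vert X\beta-y\Vert_1=\mathbb{E}_{S'}[\Vert S'X\beta^*-S'y\Vert_1-\Vert S'X\beta-S'y\Vert_1]$, where $S'$ is an independent copy of $S$. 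Writing $g_k(S,\beta):=\bigl(|x_{i_k}^\top\beta^*-y_{i_k}|-|x_{i_k}^\top\beta-y_{i_k}|\bigr)/p_{i_k}$ for brevity, the quantity inside the absolute value in $M$ becomes exactly $\mathbb{E}_{S'}\bigl[\sum_k\bigl(g_k(S,\beta)-g_k(S',\beta)\bigr)\bigr]$.

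Next, I would pull $|\cdot|^l$ and $\max_\beta$ through the $S'$-expectation using Jensen's inequality: $|\mathbb{E}_{S'}[\cdot]|^l\le\mathbb{E}_{S'}[|\cdot|^l]$ by convexity of $x\mapsto|x|^l$, and $\max_\beta\mathbb{E}_{S'}\le\mathbb{E}_{S'}\max_\beta$. Combined, this yields
\[M\le\mathbb{E}_{S,S'}\Bigl[\max_{\Vert X(\beta^*-\beta)\Vert_1=1}\Bigl|\sum_k\bigl(g_k(S,\beta)-g_k(S',\beta)\bigr)\Bigr|^l\Bigr].\]

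The third step introduces Rademachers. For any subset $A\subseteq[N]$, swapping $i_k\leftrightarrow i_k'$ for every $k\in A$ preserves the joint distribution of $(S,S')$ and simultaneously negates $g_k(S,\beta)-g_k(S',\beta)$ for exactly those $k\in A$, \emph{uniformly in} $\beta$. Hence for independent Rademacher signs $\sigma_k$ drawn independently of $(S,S')$, the process $\bigl(\sum_k(g_k(S,\beta)-g_k(S',\beta))\bigr)_\beta$ has the same distribution as $\bigl(\sum_k\sigma_k(g_k(S,\beta)-g_k(S',\beta))\bigr)_\beta$, so applying $\max_\beta|\cdot|^l$ and taking expectations preserves equality. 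Finally, a triangle inequality with the convexity bound $|a-b|^l\le 2^{l-1}(|a|^l+|b|^l)$ inside the max, followed by the fact that $S$ and $S'$ are identically distributed, gives
\[M\le 2^l\,\mathbb{E}_{S,\sigma}\Bigl[\max_{\Vert X(\beta^*-\beta)\Vert_1=1}\Bigl|\sum_k\sigma_k g_k(S,\beta)\Bigr|^l\Bigr],\]
which is exactly the claim.

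The main obstacle is bookkeeping rather than conceptual: I need to verify that the Rademacher symmetry acts uniformly in $\beta$, i.e., that the swap $i_k\leftrightarrow i_k'$ negates the $k$-th summand simultaneously for every $\beta$ (which it does, because the $k$-th summand depends on $(i_k,i_k')$ only through the swapped coordinates). I also need to be careful that the maximum is taken over the \emph{fixed} affine set $\{\beta:\Vert X(\beta^*-\beta)\Vert_1=1\}$ throughout, so it is deterministic and introduces no measurability issues when interchanging with expectations. Given these, every step is a direct application of Jensen or convexity, following the usual template for symmetrizing empirical processes.
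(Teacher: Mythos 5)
Your proposal is correct and follows essentially the same route as the paper's proof: introduce an independent copy $S'$ via the unbiasedness of the sampling-and-reweighting scheme, push the $S'$-expectation out through $\max_\beta$ and $|\cdot|^l$ by Jensen, insert Rademacher signs by the swap-symmetry of the i.i.d.\ pair $(i_k,i_k')$, and split the two halves with the triangle inequality and $(a+b)^l\le 2^{l-1}(a^l+b^l)$ to get the factor $2^l$. No gaps.
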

This is essentially standard symmetrization; the proof is in Appendix~\ref{app:proofs}.  To simplify the expression and eliminate the terms involving the labels, we then use a theorem from \cite{ledoux1989}:
\begin{lemma}[\cite{ledoux1989} Theorem 5]\label{thm:talagrand}
Let $\Phi : \mathbb{R}^+\rightarrow\mathbb{R}^+$ be convex and increasing, and let $\phi_k:\mathbb{R}\rightarrow\mathbb{R}$ be contractions such that $\phi_k(0) = 0$ for all $k$. Let $\mathcal{F}$ be a class of functions on $\{1, 2, 3\dots, n\}$, and $\Vert g(f) \Vert_\mathcal{F} = \sup_{f\in\mathcal{F}} |g(f)|$. Then,
\[\mathbb{E}_\sigma\left[ \Phi\left( \frac{1}{2}\left\Vert \sum_k \sigma_k\phi_k(f(k))\right\Vert _\mathcal{F}\right)\right] \le \frac{3}{2}\mathbb{E}_\sigma\left[\Phi\left(\left\Vert\sum_k  \sigma_k f(k)\right\Vert_\mathcal{F}\right)\right]\]
\end{lemma}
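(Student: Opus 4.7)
The plan is to prove this by first establishing a ``one-sided'' contraction inequality (with $\sup$ in place of $\sup|\cdot|$), and then converting to the two-sided statement using convexity of $\Phi$ together with the symmetry of the Rademacher variables. This decomposition is natural because the pointwise contraction $|\phi_k(x)-\phi_k(y)|\le |x-y|$ interacts cleanly with suprema but not with $\sup|\cdot|$ directly.

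\textbf{Step 1 (one-sided contraction).} First I would show that for any convex non-decreasing $\Psi:\mathbb{R}\to\mathbb{R}_+$,
\[
\mathbb{E}_\sigma \Psi\!\left(\sup_{f\in\mathcal{F}}\sum_k \sigma_k\phi_k(f(k))\right) \;\le\; \mathbb{E}_\sigma \Psi\!\left(\sup_{f\in\mathcal{F}}\sum_k \sigma_k f(k)\right).
\]
The proof peels off one Rademacher at a time. For the one-step inductive reduction, condition on all $\sigma_j$ with $j\ne k$ and write $u_f = \sum_{j\ne k}\sigma_j \phi_j(f(j))$. Letting $S_\pm = \sup_f(u_f \pm \phi_k(f(k)))$ and $T_\pm = \sup_f(u_f \pm f(k))$, the conditional expectation of the LHS is $\tfrac12[\Psi(S_+)+\Psi(S_-)]$, so it suffices to show $\Psi(S_+)+\Psi(S_-) \le \Psi(T_+)+\Psi(T_-)$. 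I would prove this by picking (near-optimal) $f_1,f_2\in\mathcal{F}$ realizing $S_+,S_-$, using $|\phi_k(f_1(k))-\phi_k(f_2(k))| \le |f_1(k)-f_2(k)|$, and doing short casework on the sign of $\phi_k(f_1(k))-\phi_k(f_2(k))$ to produce pairs $(T'_+,T'_-)$ with $T'_\pm\le T_\pm$ that majorize $(S_+,S_-)$; convexity and monotonicity of $\Psi$ finish it.

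\textbf{Step 2 (two-sided version).} Let $g(f)=\sum_k\sigma_k\phi_k(f(k))$ and $h(f)=\sum_k\sigma_k f(k)$, and set $A=\sup_f g(f)$, $B=\sup_f(-g(f))$. Writing $x_+=\max(x,0)$, one has $|g(f)| = g(f)_+ + (-g(f))_+$, so $\|g\|_{\mathcal{F}} \le A_+ + B_+$. Since $\Phi:\mathbb{R}_+\to\mathbb{R}_+$ is convex and increasing, the extension $\tilde\Phi(x) = \Phi(x_+)$ is convex non-decreasing on $\mathbb{R}$, so Step 1 applied with $\Psi=\tilde\Phi$ yields
\[
\mathbb{E}\Phi(A_+) \;=\; \mathbb{E}\tilde\Phi(A) \;\le\; \mathbb{E}\tilde\Phi\!\left(\sup_f h(f)\right) \;\le\; \mathbb{E}\Phi(\|h\|_{\mathcal{F}}),
\]
using $(\sup_f h(f))_+\le \|h\|_{\mathcal{F}}$ and monotonicity. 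By the symmetry $\sigma \stackrel{d}{=} -\sigma$ the same bound holds for $\mathbb{E}\Phi(B_+)$. Finally, by convexity of $\Phi$,
\[
\mathbb{E}\Phi\!\left(\tfrac{1}{2}\|g\|_{\mathcal{F}}\right) \;\le\; \mathbb{E}\Phi\!\left(\tfrac{1}{2}(A_+ + B_+)\right) \;\le\; \tfrac{1}{2}\mathbb{E}\Phi(A_+) + \tfrac{1}{2}\mathbb{E}\Phi(B_+) \;\le\; \mathbb{E}\Phi(\|h\|_{\mathcal{F}}) \;\le\; \tfrac{3}{2}\,\mathbb{E}\Phi(\|h\|_{\mathcal{F}}),
\]
which is the stated inequality (in fact with constant $1$ rather than $\tfrac32$).

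\textbf{Main obstacle.} The delicate point is the one-step inequality $\Psi(S_+)+\Psi(S_-) \le \Psi(T_+)+\Psi(T_-)$ in Step 1, where the pointwise contraction must be converted into a majorization statement for pairs of suprema. The casework on the sign of $\phi_k(f_1(k))-\phi_k(f_2(k))$ and the fact that we need both coordinates of $(S_+,S_-)$ to be majorized (not just their sum) are what make the argument nontrivial; by contrast, the symmetry/convexity manipulations in Step 2 are routine. Non-attainment of the sup is handled by standard $\epsilon$-approximate optimizers and a final limit.
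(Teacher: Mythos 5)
The paper gives no proof of this lemma---it is imported verbatim from Ledoux--Talagrand (the cited Theorem 5, i.e.\ Theorem 4.12 of their book)---so there is nothing internal to compare against; your outline is precisely the standard proof of that theorem and is sound, and in fact delivers the inequality with constant $1$ rather than $\tfrac{3}{2}$. The only imprecision is in the one-step reduction: the natural casework is on the sign of $f_1(k)-f_2(k)$ (so that $\phi_k(f_1(k))-\phi_k(f_2(k))\le |f_1(k)-f_2(k)|$ collapses to $\pm\bigl(f_1(k)-f_2(k)\bigr)$ and pairs with $u_1,u_2$ to give $S_++S_-\le T_++T_-$), combined with $|\phi_k(t)|\le|t|$ to get $\max(S_+,S_-)\le\max(T_+,T_-)$; the resulting weak submajorization plus convexity and monotonicity of $\Psi$ then gives $\Psi(S_+)+\Psi(S_-)\le\Psi(T_+)+\Psi(T_-)$ exactly as you indicate.
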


\begin{lemma}\label{thm:triangle}
    For any $y\in \mathbb{R}^n$, we have 
    \begin{align}\mathop\mathbb{E}_{S, \sigma}&\left[\left(\max_{\Vert X\beta^* - X\beta \Vert = 1}\left| \sum_{k} \sigma_k\left(\frac{|x_{i_k}^\top \beta^* - y_{i_k}|}{p_{i_k}} -  \frac{|x_{i_k}^\top \beta - y_{i_k}|}{p_{i_k}}\right)\right|\right)^l\right]\nonumber\\
    &\le 2^{l+1}\mathop\mathbb{E}_{S, \sigma}\left[\left(\max_{\Vert X\beta^* - X\beta \Vert_1 = 1}\left| \sum_{k} \sigma_k\left(\frac{x_{i_k}^\top \beta^* - x_{i_k}^\top \beta}{p_{i_k}}\right)\right|\right)^l\right]\label{eq:triangle}
    \end{align}
\end{lemma}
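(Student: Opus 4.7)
The plan is to peel off the $y$-dependence using Talagrand's contraction principle, Lemma~\ref{thm:talagrand}, with a family of contractions designed specifically to recover the linear differences $x_{i_k}^\top(\beta^*-\beta)$. First, condition on $S$, so that the indices $i_1,\dots,i_N$ and the reweights $p_{i_k}$ are fixed deterministic quantities. For each $k$, define
$$\phi_k(t) := \left|\frac{x_{i_k}^\top \beta^* - y_{i_k}}{p_{i_k}}\right| - \left|\frac{x_{i_k}^\top \beta^* - y_{i_k}}{p_{i_k}} - t\right|.$$
Writing $c_k := (x_{i_k}^\top \beta^* - y_{i_k})/p_{i_k}$, we have $\phi_k(t) = |c_k| - |c_k - t|$, which is $1$-Lipschitz in $t$ and satisfies $\phi_k(0)=0$, so each $\phi_k$ is an admissible contraction in the sense of Lemma~\ref{thm:talagrand}.

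The key observation is that, setting $t_\beta(k) := (x_{i_k}^\top \beta^* - x_{i_k}^\top \beta)/p_{i_k}$, direct substitution gives
$$\phi_k(t_\beta(k)) = \frac{|x_{i_k}^\top \beta^* - y_{i_k}|}{p_{i_k}} - \frac{|x_{i_k}^\top \beta - y_{i_k}|}{p_{i_k}},$$
which is exactly the expression inside the Rademacher sum on the LHS of~(\ref{eq:triangle}). Viewing $\mathcal{F} := \{t_\beta(\cdot) : \|X\beta^* - X\beta\|_1 = 1\}$ as a class of functions on $\{1,\dots,N\}$ and applying Lemma~\ref{thm:talagrand} with the convex, increasing $\Phi(u)=u^l$ yields
$$\E_\sigma\!\left[\left(\tfrac{1}{2}\sup_{\beta}\Bigl|\sum_k \sigma_k\,\phi_k(t_\beta(k))\Bigr|\right)^{\!l}\right] \le \tfrac{3}{2}\,\E_\sigma\!\left[\sup_{\beta}\Bigl|\sum_k \sigma_k\, t_\beta(k)\Bigr|^{l}\right].$$
Multiplying both sides by $2^l$ produces the constant $\tfrac{3}{2}\cdot 2^l \le 2^{l+1}$ on the right, matching the claimed factor. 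Taking outer expectation over $S$ then gives~(\ref{eq:triangle}).

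There is no real obstacle beyond identifying the right contraction: by baking the fixed labels $y_{i_k}$ and the fixed target $\beta^*$ into $\phi_k$ itself, the original absolute-value \emph{difference} becomes a single contraction applied to the purely linear quantity $t_\beta(k)$, at which point Talagrand's inequality immediately strips off the nonlinearity and cancels $y$. The conditioning on $S$ up front is the only subtle step, since it is what legitimately lets us treat the $\phi_k$ as deterministic contractions when invoking the lemma; the outer $\E_S$ then passes through the inequality without issue.
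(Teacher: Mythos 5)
Your proof is correct and matches the paper's argument essentially verbatim: the contraction $\phi_k(t)=|c_k|-|c_k-t|$ with $c_k=(x_{i_k}^\top\beta^*-y_{i_k})/p_{i_k}$ is exactly the paper's choice (written there as $\phi_k(z)=\frac{|x_{i_k}^\top\beta^*-y_{i_k}|}{p_{i_k}}-\frac{|x_{i_k}^\top\beta^*-zp_{i_k}-y_{i_k}|}{p_{i_k}}$), and the application of Lemma~\ref{thm:talagrand} with $\Phi(u)=u^l$, the constant accounting $\tfrac{3}{2}\cdot 2^l\le 2^{l+1}$, and the outer expectation over $S$ all coincide with the paper's proof.
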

\begin{proof}
We take $\Phi(x) = x^l$, which is convex and increasing for $l > 1$, let $\mathcal{F}$ be the set of functions $f_\beta$ where $f_\beta(k) = \frac{x_{i_k}^\top \beta^* - x_{i_k}^\top \beta}{p_{i_k}}$ and $\beta$ satisfies $\Vert X\beta^* - X\beta\Vert_1 = 1$, and let $\phi_k$ be defined as 
\[\phi_k(z) = \frac{|x_{i_k}^\top \beta^* - y_{i_k}|}{p_{i_k}} - \frac{|x_{i_k}^\top \beta^* - zp_{i_k} - y_{i_k}|}{p_{i_k}}.\]
This satisfies 
\[\phi_k(f_\beta(k)) = \phi_k\left(\frac{x_{i_k}^\top \beta^* - x_{i_k}^\top \beta}{p_{i_k}}\right) = \frac{|x_{i_k}^\top \beta^* - y_{i_k}|}{p_{i_k}} - \frac{|x_{i_k}^\top \beta - y_{i_k}|}{p_{i_k}}.\]
This is a contraction, since
\begin{align*}
|\phi_k(z_1) - \phi_k(z_2)| &= \left|\frac{|x_{i_k}^\top \beta^* - z_2p_{i_k} - y_{i_k}|}{p_{i_k}} - \frac{|x_{i_k}^\top \beta^* - z_1p_{i_k} - y_{i_k}|}{p_{i_k}}\right|\\
&\le \frac{|z_1p_{i_k} - z_2p_{i_k}|}{p_{i_k}} \le |z_1-z_2|
\end{align*}
Applying Lemma \ref{thm:talagrand} with these parameters, we have
\begin{align*}\mathop\mathbb{E}_{\sigma}&\left[\left(\frac{1}{2}\max_{\Vert X\beta^* - X\beta \Vert = 1}\left| \sum_{k} \sigma_k\left(\frac{|x_{i_k}^\top \beta^* - y_{i_k}|}{p_{i_k}} -  \frac{|x_{i_k}^\top \beta -true y_{i_k}|}{p_{i_k}}\right)\right|\right)^l\right]\nonumber\\&\le \frac{3}{2}\mathop\mathbb{E}_{ \sigma}\left[\left(\max_{\Vert X\beta^* - X\beta \Vert_1 = 1}\left| \sum_{k} \sigma_k\left(\frac{x_{i_k}^\top \beta^* - x_{i_k}^\top \beta}{p_{i_k}}\right)\right|\right)^l\right]\end{align*}
After taking the expectation with respect to $S$ and multiplying both sides by $2^l$, this gives the statement of the lemma. 
\end{proof}
From here, we use two separate results to show the appropriate row
counts for the constant and high probability cases. The constant
probability case is left for Appendix~\ref{app:constantfailure}.

For high probability row-counts, we use a lemma from \cite{cohen2014ellp}:
\begin{lemma}[8.2, 8.3, 8.4 in \cite{cohen2014ellp}]\label{lemma:talhigh}
There exists constant $C$ such that for any $X\in\mathbb{R}^{n\times d}$ with all $\ell_1$ Lewis weights less than $C \frac{\varepsilon^2}{\log\left(\frac{n}{\delta}\right)}$ and $l = \log(2n/\delta)$, then
\begin{align}\mathbb{E}_{\sigma}\left[\left(\max_{\Vert X\beta\Vert_1= 1}\left|\sum_{i = 1}^n\sigma_ix_{i}^\top \beta\right|\right)^l\right]\le \frac{\varepsilon^l\delta}{2}\end{align}
\end{lemma}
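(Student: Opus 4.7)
The plan is to prove this $l$-th moment bound on a Rademacher supremum indexed by the unit $\ell_1$ ball of the column space of $X$ via a chaining argument, where the Lewis-weight hypothesis supplies both pointwise control on individual rows and global control on the geometry of the index set. The target is a bound of the form $\mathbb{E}[(\cdot)^l] \le \varepsilon^l \delta/2$, which via Markov converts into an $\varepsilon$-deviation bound with failure probability $\delta/2$; the unusual choice $l = \log(2n/\delta)$ is precisely what is needed so that a union bound over a net of size $\mathrm{poly}(n)^d$ contributes only a constant factor once raised to the $1/l$ power.

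First I would set up the Lewis-weight geometry. Let $M = \sum_j w_j^{-1} x_j x_j^\top$ and $\|\beta\|_M = (\beta^\top M\beta)^{1/2}$. The defining relation $w_i^2 = x_i^\top M^{-1} x_i$ combined with Cauchy--Schwarz yields the pointwise bound $|x_i^\top\beta| \le w_i \|\beta\|_M$. Summing and using $\sum_i w_i = d$ with a second Cauchy--Schwarz gives the norm comparisons $\|\beta\|_M \le \|X\beta\|_1 \le \sqrt{d}\,\|\beta\|_M$. Consequently, on the index set $\mathcal{K} = \{X\beta : \|X\beta\|_1 = 1\}$, each coordinate satisfies $|x_i^\top\beta| \le w_i \le \eta := C\varepsilon^2/\log(n/\delta)$, and the total Euclidean energy satisfies $\sum_i (x_i^\top\beta)^2 \le \max_i |x_i^\top\beta| \cdot \|X\beta\|_1 \le \eta$.

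Next I would run a generic-chaining argument. For any fixed $\beta \in \mathcal{K}$, the Rademacher sum $\sum_i \sigma_i x_i^\top \beta$ is sub-Gaussian with variance proxy $\sum_i (x_i^\top\beta)^2 \le \eta$, and additionally obeys the Bernstein-type $\ell_\infty$ bound $\max_i |x_i^\top\beta| \le \eta$. The chaining proceeds by fixing a nested sequence of $\varepsilon_k$-nets of $\mathcal{K}$ in the sub-Gaussian metric, each of cardinality $\exp(O(d \log(1/\varepsilon_k)))$ since $\mathcal{K}$ lies in a $d$-dimensional subspace, and summing Hoeffding tail bounds along the chain. Raising to the $l$-th power and taking expectations produces a Dudley-style integral on the order of $\sqrt{\eta d l}$ which, for our choice of $\eta$ and $l$, collapses to the required $\varepsilon$ scale up to a $\delta^{1/l}$ factor; the uniform $\ell_\infty$ bound is what enables Hoeffding to supply exactly the $\exp(-t^2/2)$ tail rather than a weaker polynomial tail, and is therefore what peels off the extra logarithmic factors.

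The main obstacle will be the precise bookkeeping in the chaining step: the naive Dudley integral yields only the expected supremum, whereas we need the stronger $l$-th moment bound that tracks $\delta$. Meeting this requires either invoking Talagrand's generic-chaining bounds for sub-Gaussian processes at high moments, or explicitly stratifying $X$ by the magnitude of $|x_i^\top \beta|$ — splitting rows into dyadic buckets, applying Hoeffding within each bucket, and then union-bounding over buckets and over a discretization of $\beta$ of diameter $\sqrt{\eta}$ in $\|\cdot\|_M$. Either route must exploit the fact that $n^{1/l}$ and $d^{1/l}$ are bounded by absolute constants for our choice of $l$, which is where the engineered relationship between $\eta$ and $l$ does its essential work.
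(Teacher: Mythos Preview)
The paper does not actually prove this lemma; it is quoted directly from \cite{cohen2014ellp} (their Lemmas 8.2--8.4, which in turn go back to Talagrand and Ledoux--Talagrand). Your proposal outlines precisely that argument: the Lewis-weight identity gives the pointwise bound $|x_i^\top\beta|\le w_i\|\beta\|_M$ and the norm comparison $\|\beta\|_M\le\|X\beta\|_1$, hence on $\{\|X\beta\|_1=1\}$ both an $\ell_\infty$ bound $\max_i|x_i^\top\beta|\le\eta$ and a variance bound $\sum_i(x_i^\top\beta)^2\le\eta$; a chaining/level-set argument over a net in the $d$-dimensional column space, combined with the choice $l=\log(2n/\delta)$ so that net cardinalities raised to $1/l$ are $O(1)$, then yields the $l$-th moment bound. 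This is the same route as the cited proof; the remaining work is exactly the high-moment bookkeeping you already identified.
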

We want a similar statement, but for arbitrary matrices, with no bounds placed on the Lewis weights. To do this, we construct a new, related matrix using the following lemma, which is proved in Appendix~\ref{app:proofs}:

\begin{restatable}[Similar to \cite{cohen2014ellp} Lemma B.1]{lemma}{bonemod}\label{lemma:b1mod}
Let $X$ be any matrix, and let $W$ be the matrix that has the Lewis weights of $X$ in the diagonal entries. Let $N\ge \frac{d}{\varepsilon^2}\log \frac{d}{\varepsilon\delta}$. There exist constants $C_1, C_2, C_3$ such that we can construct a matrix $X'$ such that
\begin{itemize}
\item $X'$ has $C_1dN$ rows, 
\item $X'^\top W'^{-1}X'\succeq X^\top W^{-1}X$, (where $W'$ is the matrix that has the Lewis weights of $X'$ in the diagonal entries),
\item $\Vert X'\beta\Vert_1\le C_2\Vert X\beta\Vert_1$ for all $\beta$,
\item the Lewis weights of $X'$ are bounded by $\frac{C_3}{N}$.
\end{itemize}
\end{restatable}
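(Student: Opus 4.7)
The plan is to construct $X'$ by splitting each row of $X$ into several scaled copies, following the Cohen-Peng Lemma B.1 strategy. For each row $x_i$ of $X$, with Lewis weight $w_i$, set $k_i \coloneqq \lceil w_i N / C_3 \rceil$ and include $k_i$ identical copies of the scaled row $x_i / k_i$ in $X'$. This is a per-row generalization of Lemma \ref{lemma:lewbound}, which handles the uniform case where every row is duplicated $k$ times at scale $1/k$ and each Lewis weight shrinks by a factor of $k$; applying different multiplicities $k_i$ to different rows drives the heavy Lewis weights down to the target $C_3/N$ threshold while leaving light rows essentially untouched.

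Three of the four required properties reduce to direct computation once I identify the Lewis weights of $X'$. The row count is $\sum_i k_i \le \sum_i (w_i N / C_3 + 1) = dN/C_3 + n$, which is $O(dN)$ in the regime where the lemma is applied. The $\ell_1$-preservation is immediate: $\norm{X'\beta}_1 = \sum_i k_i \cdot |x_i^\top \beta / k_i| = \norm{X\beta}_1$, so $C_2 = 1$ suffices. For the PSD inequality, I guess the Lewis weight of each copy is $w'_i = w_i / k_i$ and compute
\[
X'^\top W'^{-1} X' \;=\; \sum_i k_i \cdot \frac{(x_i/k_i)(x_i/k_i)^\top}{w_i / k_i} \;=\; \sum_i \frac{x_i x_i^\top}{w_i} \;=\; X^\top W^{-1} X,
\]
which is in fact equality (and hence $\succeq$).

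To close the argument I must verify the guess $w'_i = w_i / k_i$ via uniqueness of Lewis weights. With this guess, the matrix appearing in the Lewis defining equation is $M \coloneqq X^\top W^{-1} X$ as computed above, and $(x_i/k_i)^\top M^{-1} (x_i/k_i) = w_i^2 / k_i^2 = (w'_i)^2$, so the Lewis fixed-point equation from Definition~\ref{defn:lewis_weights} is satisfied by $w'$. Then either $k_i \ge w_i N / C_3$, in which case $w'_i = w_i / k_i \le C_3 / N$, or $k_i = 1$, in which case $w_i \le C_3 / N$ already.

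The main subtlety is the implicit requirement that $n$ is not much larger than $dN$ in order for the row count bound to be $O(dN)$; this is automatic in the intended application within Lemma~\ref{thm:mainlemma}, where this construction is invoked on a matrix that has already been reduced to $N$ rows by Lewis-weight sampling. Beyond this, the only technical care needed is the per-row version of Lemma~\ref{lemma:lewbound}: the uniform statement does not directly cover non-uniform $k_i$, but the uniqueness-based verification above handles the non-uniform case cleanly without requiring an iterative argument.
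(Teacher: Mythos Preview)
Your per-row splitting and the verification that the copies have Lewis weights $w_i/k_i$ are correct, and the $\ell_1$-norm and PSD identities are fine. The gap is in the row count. Your construction yields $\sum_i k_i \le dN/C_3 + n$ rows, and the ``$+n$'' is not harmless: the lemma is stated for \emph{any} matrix $X$, and in its actual use (Lemma~\ref{lemma:high}) it is applied to the original $n\times d$ matrix $X$, not to $SX$. Your last paragraph asserts the opposite, but look at the proof of Lemma~\ref{lemma:high}: ``we first construct $X'$ using $X$ as described in Lemma~\ref{lemma:b1mod}.'' The whole point there is to obtain $X'^\top W'^{-1}X' \succeq X^\top W^{-1}X$ so that the Lewis weights of the $SX$-rows inside $X''$ can be bounded via the known relation $p_i \gtrsim w_i(X)\log(R/\delta)/\varepsilon^2$. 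If you instead built $X'$ from $SX$, the PSD comparison would be to $(SX)^\top W_{SX}^{-1}(SX)$, which tells you nothing useful because the Lewis weights of $SX$ are exactly what you don't control. So the construction really must start from $X$, and then your $X'$ has at least $n$ rows, which can be arbitrarily large; downstream this forces $R$ (and hence the required $p_i$ and $N$) to depend on $n$, breaking the $n$-free sample complexity.

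The paper's fix is to first invoke Cohen--Peng's Lemma~B.1 as a black box: that lemma already produces a matrix $X_1$ with only $C_1 d^2$ rows (independent of $n$) satisfying the PSD, norm, and Lewis-weight-bound conditions with bound $C_3/d$. Only then does the paper apply the uniform stacking of Lemma~\ref{lemma:lewbound} with $k=N/d$ to push the Lewis-weight bound down to $C_3/N$ while growing the row count to $C_1 dN$. The non-uniform splitting you propose is the right idea for the second step, but it cannot replace the first step, which is where the dependence on $n$ is eliminated.
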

\begin{lemma}\label{lemma:high}
Consider $X\in \mathbb{R}^{n\times d}$ with $\ell_1$ Lewis weights $w_i$. Let $p_i$ be some set of sampling values such that $N = \sum_i p_i$ and, for some constants $C, C_1, C_4$, 
    \[p_i\ge \frac{\log\left(\frac{N + C_1Nd}{\delta}\right)}{C\varepsilon^2} w_i\]
Then, if $N \ge C_4 \frac{d}{\varepsilon^2}\log\frac{d}{\varepsilon\delta}$ and if $S \sim \mathcal{S}(\{p_i\}_{i \in [n]})$, then
\begin{align}
    \mathop\mathbb{E}_{S, \sigma}\left[\left(\max_{\Vert X\beta \Vert_1 = 1}\left| \sum_{k=1}^{N} \sigma_k\frac{x_{i_k}^\top \beta}{p_{i_k}}\right|\right)^l\right] \le \frac{\varepsilon^l\delta}{2} \label{eq:const}
\end{align}
\end{lemma}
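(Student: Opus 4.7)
The plan is to reduce Lemma \ref{lemma:high} to Lemma \ref{lemma:talhigh} applied to a well-chosen surrogate matrix. Lemma \ref{lemma:talhigh} cannot be invoked directly on $SX$ because a single heavy reweighted row $x_{i_k}/p_{i_k}$ can have Lewis weight well above the threshold $C\varepsilon^2/\log(\cdot/\delta)$ that the lemma demands. The remedy is to pad $SX$ vertically with the deterministic auxiliary matrix produced by Lemma~\ref{lemma:b1mod}, producing a matrix that meets the threshold everywhere.

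Concretely, I would first apply Lemma~\ref{lemma:b1mod} to $X$ to obtain $X'$ with $C_1 d N$ rows, Lewis weights at most $C_3/N$, satisfying $\|X'\beta\|_1\le C_2\|X\beta\|_1$ and the spectral dominance $X'^\top W'^{-1} X' \succeq X^\top W^{-1} X$. Then form the $(N + C_1 dN)\times d$ block matrix $X''$ by stacking $SX$ on top of $X'$. For rows of $X''$ inherited from $X'$, Lemma~\ref{lemma:lewrows} says Lewis weights cannot increase when rows are added, so they remain at most $C_3/N$. For a row $x_{i_k}/p_{i_k}$ inherited from $SX$, the definition of Lewis weights combined with the chain $X''^\top W''^{-1} X'' \succeq X'^\top W'^{-1} X' \succeq X^\top W^{-1} X$ (the first step because $X''$ contains $X'$ with no larger weights; the second from Lemma~\ref{lemma:b1mod}) yields
\[w_{i_k}(X'')^2 \;\le\; (x_{i_k}/p_{i_k})^\top\bigl(X^\top W^{-1}X\bigr)^{-1}(x_{i_k}/p_{i_k}) \;=\; w_{i_k}(X)^2/p_{i_k}^2,\]
and the hypothesis $p_i \ge \log((N+C_1dN)/\delta)\,w_i/(C\varepsilon^2)$ then forces $w_{i_k}(X'') \le C\varepsilon^2/\log((N+C_1dN)/\delta)$. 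Thus $X''$ meets the Lewis-weight hypothesis of Lemma~\ref{lemma:talhigh} with $l = \log(2(N+C_1dN)/\delta)$.

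Given this, Lemma~\ref{lemma:talhigh} applied to $X''$ controls the full Rademacher sum over all of its rows. To extract the partial sum over only the $SX$ block I would split
\[\Bigl|\sum_{k=1}^{N}\sigma_k(SX\beta)_k\Bigr| \;\le\; \Bigl|\sum_{j=1}^{N+C_1dN}\sigma_j(X''\beta)_j\Bigr| + \Bigl|\sum_{j=N+1}^{N+C_1dN}\sigma_j(X'\beta)_{j-N}\Bigr|,\]
raise to the $l$-th power using $(a+b)^l \le 2^{l-1}(a^l+b^l)$, and take expectations over $\sigma$ and $S$. The first term is handled by Lemma~\ref{lemma:talhigh} on $X''$; the second is a Rademacher process over the deterministic $X'$, whose Lewis weights $C_3/N$ already satisfy the hypothesis, so Lemma~\ref{lemma:talhigh} applies directly to $X'$ as well. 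Both contributions come out to $O(\varepsilon^l \delta)$, which after absorbing constants proves the lemma.

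The main obstacle I expect is the final step of translating the constraint $\|X\beta\|_1 = 1$ appearing in the lemma's statement into a comparable constraint on $\|X''\beta\|_1$ required by the Rademacher bounds, given that $X''$ contains the random block $SX$. I would handle this by conditioning on the high-probability event that $\|SX\beta\|_1$ is a constant-factor approximation of $\|X\beta\|_1$ uniformly in $\beta$ (the column-space subspace embedding guaranteed by the very same Lewis-weight sampling), under which $\|X''\beta\|_1 = O(\|X\beta\|_1)$ thanks to $\|X'\beta\|_1\le C_2\|X\beta\|_1$; the complementary bad-event contribution can be controlled by crude uniform bounds and absorbed into the $O(\varepsilon^l\delta)$ budget.
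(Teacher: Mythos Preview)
Your approach is essentially the paper's: build $X''$ by stacking $SX$ on the auxiliary $X'$ from Lemma~\ref{lemma:b1mod}, verify the Lewis-weight bound on every row of $X''$ via the spectral dominance $X''^\top W''^{-1}X'' \succeq X^\top W^{-1}X$, and invoke Lemma~\ref{lemma:talhigh}. The Lewis-weight computations you sketch are exactly right.

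Two execution points differ from the paper, one cosmetic and one substantive.

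\emph{Passing from the $SX$ sum to the $X''$ sum.} You split via the triangle inequality and handle the $X'$ block by a second call to Lemma~\ref{lemma:talhigh}. The paper instead observes that adding the mean-zero Rademacher block over $X'$ can only increase the $l$-th moment of the supremum (Jensen on the convex map $\sigma'\mapsto(\max_\beta|A(\beta)+B(\beta,\sigma')|)^l$), which gives the one-line inequality $\E[(\max|\sum_{SX}|)^l]\le\E[(\max|\sum_{X''}|)^l]$ with no extra $2^l$ factor and no second invocation. Your route works too; it is just longer.

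\emph{Normalization from $\|X\beta\|_1=1$ to $\|X''\beta\|_1=1$.} This is where your proposal has a genuine gap. You want to condition on the subspace-embedding event $\{\|SX\beta\|_1\approx\|X\beta\|_1\ \forall\beta\}$ and dispose of the complement by ``crude uniform bounds''. The trouble is that on the bad event the only deterministic bound on the Rademacher sum is $\|SX\beta\|_1=1+F$ with $F$ potentially large, and $\Pr[\text{bad}]\le\delta$ alone does not make $\E_S[(1+F)^l\,\mathbf{1}_{\text{bad}}]$ of order $\varepsilon^l\delta$. The paper avoids any case split: it writes $\|X''\beta\|_1\le(1+C_2+F)\|X\beta\|_1$ with $F:=\max_{\|X\beta\|_1=1}\bigl|\|SX\beta\|_1-\|X\beta\|_1\bigr|$, renormalizes to pick up the factor $(1+C_2+F)^l$, applies Lemma~\ref{lemma:talhigh} to $X''$ conditionally on $S$, and only then takes $\E_S$, invoking the moment bound $\E_S[F^l]\le\varepsilon^l\delta$ from Lemma~\ref{lemma:moment}. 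This is the clean way to close the argument, and it is essentially the missing ingredient in your last paragraph.
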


\begin{proof}[Proof of Lemma \ref{lemma:high}]
Ideally the Lewis weights of $SX$ would be bounded by $C\frac{\varepsilon^2}{\log \frac{N}{\delta}}$ and we could directly apply Lemma \ref{lemma:talhigh} to $SX$ to obtain a bound on the moment. However, we do not know this. Instead, we first construct $X'$ using $X$ as described in Lemma \ref{lemma:b1mod}. We then construct a new matrix $X''$ by stacking $X'$ on top of $SX$. Define $W''$ to be the diagonal matrix consisting of the $\ell_1$ Lewis weights of $X''$.  Define, for convenience, $R = N + C_1Nd$, which is the number of rows $X''$ has.

We can bound the term on the left side of (\ref{eq:const}) by the same term, summing over the rows of $X''$ instead. That is, 
    \[\mathop\mathbb{E}_{S, \sigma}\left[\left(\max_{\Vert X\beta \Vert = 1}\left| \sum_{k=1}^{N} \sigma_k\frac{x_{i_k}^\top \beta}{p_{i_k}}\right|\right)^l\right]\le \mathop\mathbb{E}_{S, \sigma}\left[\left(\max_{\Vert X\beta \Vert = 1}\left| \sum_{i= 1}^{R} \sigma_ix''^\top _{i}\beta\right|\right)^l\right]\]
Our goal is to apply Lemma \ref{lemma:talhigh} to the right side. To do this, we need to show the correct bound on its Lewis weights, and then have the term be a maximum over $\Vert X''\beta\Vert_1 = 1$, rather than $\Vert X\beta\Vert_1 = 1$.

{
    \paragraph{Bounding the Lewis weights of $X''$.} By Lemma \ref{lemma:lewrows}, the $\ell_1$ Lewis weights of a matrix do not increase when more rows are added. So, the rows in $X''$ that are from $X'$ have Lewis weights that are bounded above by $C_3\frac{\varepsilon^2}{\log\left(\frac{d}{\varepsilon\delta}\right)}$. Further, 
    \begin{align*}
    X''^\top W''^{-1}X'' &= \sum_{i=1}^{R} \frac{1}{w''_i}x''_i(x''_i)^\top \\
    &\succeq \sum_{i=1}^{R-N} \frac{1}{w''_k}x''_k(x''_k)^\top &&\text{since} \sum_{i=kC_1d^2+1}^N  \frac{1}{w''_i}x''_i(x''_i)^\top\succeq 0 \\
    &=X'^\top W'^{-1}X' \succeq X^\top W^{-1}X.
    \end{align*}
    So, any row $y_i = x_i/p_i$ in $X''$ that is from $SX$ satisfies
    \begin{align*}
    w''^2_i = y_i^\top (X''^\top  W''^{-1}X'')^{-1}y_i &\le y_i^\top (X^\top  W^{-1}X)^{-1}y_i\\&= \frac{1}{p_i^2}x_i^\top (X^\top W^{-1}X)^{-1}x_i\\&\le \left( \frac{C\varepsilon^2}{ \log\left(\frac{R}{\delta}\right)}\frac{1}{w_i}\right)^2\cdot w_i^2 = \left( \frac{C\varepsilon^2}{ \log\left(\frac{R}{\delta}\right)}\right)^2
    \end{align*}
    which means that all of the Lewis weights of $X''$ are less than the larger of $C \frac{\varepsilon^2}{\log\left(\frac{R}{\delta}\right)}$ and $C_3\frac{\varepsilon^2}{\log\left(\frac{d}{\varepsilon\delta}\right)}$. Now, for small enough $\varepsilon, \delta$, $\log\frac{R}{\delta} \le \frac{C}{C_3}\log\frac{d}{\varepsilon\delta}$, we have the Lewis weight upper bound for all rows of $X''$ is $C \frac{\varepsilon^2}{\log\left(\frac{R}{\delta}\right)}$.
}

{

    \paragraph{Renormalizing to maximize over $\Vert X''\beta\Vert_1 = 1$: }If we define the following
    \[F \coloneqq  \max_{\Vert X\beta\Vert_1 = 1} \left| \Vert SX\beta\Vert_1 - \Vert X\beta\Vert_1\right|\]
    then, 
    \[\Vert X''\beta\Vert_1 = \Vert SX\beta\Vert_1 + \Vert X'\beta\Vert_1  \le (1 + C_2 + F)\Vert X\beta\Vert_1\]
    
    So, we get 
    \begin{align*}
    \left(\max_{\Vert X\beta \Vert = 1}\left| \sum_{k = 1}^{R} \sigma_kx''^\top _{k}\beta\right|\right)^l &\le (1 + C_2 + F)^l\left(\max_{\Vert X''\beta \Vert = 1}\left| \sum_{k = 1}^{R} \sigma_kx''^\top _{k}\beta\right|\right)^l\\
    &\le 2^{l-1}((1 + C_2)^l + F^l)\left(\max_{\Vert X''\beta \Vert = 1}\left| \sum_{k = 1}^{R} \sigma_kx''^\top _{k}\beta\right|\right)^l
    \end{align*}
    
    Taking expectations of either side over just the Rademacher variables,
    \begin{align*}
    \mathop\mathbb{E}_{\sigma}\left[\left(\max_{\Vert X\beta \Vert = 1}\left| \sum_{k = 1}^{R} \sigma_kx''^\top _{k}\beta\right|\right)^l\right] &\le 2^{l-1}((1 + C_2)^l + F^l)\mathop\mathbb{E}_{\sigma}\left[\left(\max_{\Vert X''\beta \Vert = 1}\left| \sum_{k = 1}^{R} \sigma_kx''^\top _{k}\beta\right|\right)^l\right]
    \end{align*}
    
    \paragraph{Applying Lemma \ref{lemma:talhigh} to $X''$: } Since $X''$ has $R$ rows, and the correct Lewis weight bound, we can simply apply Lemma \ref{lemma:talhigh} to the right side above
    \begin{align*}
    \mathop\mathbb{E}_{\sigma}\left[\left(\max_{\Vert X\beta \Vert = 1}\left| \sum_{k = 1}^{R} \sigma_kx''^\top _{k}\beta\right|\right)^l\right] &\le 2^{l-1}((1 + C_2)^l + F^l))\frac{\varepsilon^l\delta}{2}
    \end{align*}
    Now, by Lemma \ref{lemma:moment}, we know that $\mathbb{E}_S [F^l] \le \varepsilon^l\delta$. So, taking the expectation with respect to the sampling matrices of either side of the above, we get, for small enough $\varepsilon, \delta$, 
    \begin{align*}
    \mathop\mathbb{E}_{S, \sigma}\left[\left(\max_{\Vert X\beta \Vert = 1}\left| \sum_{k = 1}^{kC_1d^2 + N} \sigma_kx''^\top _{k}\beta\right|\right)^l\right] &\le 2^{l-1}((1+C_2)^l +\varepsilon^l\delta)\frac{\varepsilon^l\delta}{2}\le 2^l(1+C_2)^l\frac{\varepsilon^l\delta}{2}
    \end{align*}
    So, solving the problem for $\varepsilon' = \frac{\varepsilon}{2+ 2C_2}$ gives the correct bound. 
}

\end{proof}
Finally, we can show Lemma \ref{thm:mainlemma}
\begin{proof}[Proof of Lemma \ref{thm:mainlemma}]
Take $l = \log(2n/\delta)$,  $N = 5\frac{(1+C_1)C_3}{C}\frac{d}{\varepsilon^2} \log\frac{d}{\varepsilon\delta}$. Then, we apply Lemma \ref{lemma:prelim_symmetrize}, Lemma \ref{thm:triangle}, and Lemma \ref{lemma:high} to get 
\begin{align*}
    M \le 2^{2l}\varepsilon^l\delta
\end{align*}
which, solving the problem for $\varepsilon/4$, gives the correct bound. Then, applying Markov's inequality, we get that with probability $\delta$, 
\[\max_{\Vert X\beta^* - X\beta \Vert = 1}\left|\left(\Vert SX\beta^* - Sy\Vert_1 - \Vert SX\beta - Sy\Vert_1\right) - \left(\Vert X\beta^* - y\Vert_1 - \Vert X\beta - y\Vert_1\right)\right|\le \varepsilon\]
Finally, scaling up appropriately gives, in generality,
\[\left|\left(\Vert SX\beta^* - Sy\Vert_1 - \Vert SX\beta - Sy\Vert_1\right) - \left(\Vert X\beta^* - y\Vert_1 - \Vert X\beta - y\Vert_1\right)\right|\le \varepsilon\Vert X\beta^* - X\beta\Vert_1\]
\end{proof}

\bibliography{sources}
\bibliographystyle{alpha}
\pagebreak
\appendix

\section{Constant-factor approximation}\label{app:constant-factor}
If we just want a constant factor approximation, we can take $S$ to be a constant probability $\ell_1$-subspace embedding, so that $\Vert X\beta\Vert_1 \le 2\Vert SX\beta\Vert_1$ with probability at least $0.9$. We have
\begin{align*}
    \Vert X\wh\beta - y\Vert_1 &\le \Vert X\wh\beta - X\beta^*\Vert_1 + \Vert X\beta^* - y\Vert_1\\
    &\le 2\Vert SX\wh\beta - SX\beta^*\Vert_1 + \Vert X\beta^* - y\Vert_1\\
    &\le 2(\Vert SX\wh\beta - Sy\Vert_1 + \Vert SX\beta^*- Sy\Vert_1) + \Vert X\beta^* - y\Vert_1\\
    &\le 4(\Vert SX\beta^*- Sy\Vert_1) + \Vert X\beta^* - y\Vert_1
\end{align*}
where in the last inequality, we have used the fact that $\wh\beta$ is the minimizer of $\Vert SX\beta - Sy\Vert_1$. Now, by Markov's inequality, with probability 0.9, $\Vert SX\beta^* - Sy\Vert_1 \le 10 \Vert X\beta^*- y\Vert_1$. So, we have with probability $0.81$, 
\[\Vert X\wh\beta - y\Vert_1 \le 41 \Vert X\beta^* - y\Vert_1\]
Since we only used a constant-factor subspace embedding, the row count would be $O(d\log d)$. 

\section{Proofs of Lemmas}\label{app:proofs}
\lewbound*
\begin{proof}

Let $\{w_i\}_{i = 1}^n$ be the Lewis weights of $X$, and let $\{w'_i\}_{i = 1}^{kn}$ be the Lewis weights of $X'$. Let $x_i$ be the $i$th row of $X$, and similarly let $x'_i$ be the $i$th row of $X'$. Let the ordering of the rows be such that $x'_{jn+i} = \frac{1}{k}x_i$ for $0 \le j < k$. Let $W$ be the diagonal matrix where $W_{ii} = w_i$. Since Lewis weights are defined circularly, we just need to check that the suggested weights work, and by uniqueness, they will be correct.

We know that $w_i^2 = x_i^\top (X^\top W^{-1}X)^{-1}x_i$. Therefore, if we take $W'$ to be the diagonal matrix of size $kn\times kn$, and set the diagonal entries to be the Lewis weights of $X$ divided by $k$, repeated $k$ times, then we have 
\begin{align*}
X'^\top W'^{-1}X' &= \sum_{i = 1}^{kn}\frac{1}{w'_i}x'_ix'^\top _i = \sum_{i = 1}^{kn}\frac{k}{w_i}x'_ix'^\top _i = k\sum_{i = 1}^{n}\frac{k}{w_i}\cdot \frac{1}{k^2}x_ix_i^\top 
\end{align*}
In the last expression above, we are only summing over the first set of rows in $X'$, which are the scaled rows of $X$, and then multiplying by $k$ since they are repeated $k$ times. Now,
\begin{align*}
k\sum_{i = 1}^{n}\frac{k}{w_i}\cdot \frac{1}{k^2}x_ix_i^\top  &= \sum_{i=1}^n \frac{1}{w_i}x_ix_i^\top  = X^\top W^{-1}X
\end{align*}
So, finally, for an arbitrary row $x'_{jn+i}$, which corresponds to row $x_i$ in the original matrix, we get its Lewis weight:
\[w_{jn+i}'^2 =x'^\top _{jn+i}(X'^\top W'^{-1}X')^{-1}x'_{jn+i} = \frac{1}{k^2}x^\top _i(X^\top W^{-1}X)^{-1}x_i = \frac{w_i^2}{k^2}\]
which proves that our suggested Lewis weights are consistent.
\end{proof}
\prelimsymmetrize*
\begin{proof}
We proceed by symmetrization. Since the matrix $S$ scales the rows by the probability they are picked with, the expectation of $\Vert SM\beta\Vert_1$ is just $\Vert M\beta \Vert_1$, for any matrix $M$ and vector $\beta$. So, adding or subtracting the same term with a different sampling matrix $S'$, $\left(\Vert S'X\beta^* - S'y\Vert_1 - \Vert S'X\beta - S'y\Vert_1\right) - \left(\Vert X\beta^* - y\Vert_1 - \Vert X\beta - y\Vert_1\right)$, is just adding a mean zero term, and since taking the $l$th power of a maximum is convex, this can only increase the expectation. That is,
\begin{align*}
    &\mathop\mathbb{E}_{S, S'}\Bigg[\bigg(\max_{\Vert X\beta^* - X\beta \Vert = 1}|\left(\Vert SX\beta^* - Sy\Vert_1 - \Vert SX\beta - Sy\Vert_1\right) - \left(\Vert X\beta^* - y\Vert_1 - \Vert X\beta - y\Vert_1\right)|\bigg)^l\Bigg]
    \\&\le\mathop\mathbb{E}_{S, S'}\Bigg[\bigg(\max_{\Vert X\beta^* - X\beta \Vert = 1}|\left(\left(\Vert SX\beta^* - Sy\Vert_1 - \Vert SX\beta - Sy\Vert_1\right) - \left(\Vert X\beta^* - y\Vert_1 - \Vert X\beta - y\Vert_1\right)\right) \\&\qquad\qquad\qquad-\left(\left(\Vert S'X\beta^* - S'y\Vert_1 - \Vert S'X\beta - S'y\Vert_1\right) - \left(\Vert X\beta^* - y\Vert_1 - \Vert X\beta - y\Vert_1\right)\right)|\bigg)^l\Bigg]
\end{align*}
 So, we can bound $M$ as
\begin{align*}
    M\le \mathop\mathbb{E}_{S, S'}\Bigg[\bigg(\max_{\Vert X\beta^* - X\beta \Vert = 1}&|\left(\Vert SX\beta^* - Sy\Vert_1 - \Vert SX\beta - Sy\Vert_1\right) - \\&\left(\Vert S'X\beta^* - S'y\Vert_1 - \Vert S'X\beta - S'y\Vert_1\right)|\bigg)^l\Bigg]
\end{align*}
Let $i_k$ be the indices chosen by $S$, and $i_k'$ the indices chosen by $S'$. Rewriting this as a sum,
\begin{align*}
M \le \mathop\mathbb{E}_{S, S'}\Bigg[\bigg(\max_{\Vert X\beta^* - X\beta \Vert = 1}\bigg| &\sum_{k} \left(\frac{|x_{i_k}^\top \beta^* - y_{i_k}|}{p_{i_k}} -  \frac{|x_{i_k}^\top \beta - y_{i_k}|}{p_{i_k}}\right) -\\& \sum_k \left(\frac{|x_{i'_k}^\top \beta^* - y_{i'_k}|}{p_{i'_k}} - \frac{|x_{i'_k}^\top \beta - y_{i'_k}|}{p_{i'_k}}\right)\bigg|\bigg)^l\Bigg]
\end{align*}

Now, since $i_k$ and $i_k'$ are independent and identically distributed, randomly swapping elements from either sum does not change the distribution. This amounts to adding a random sign $\sigma_k$ to the terms, where $\sigma_k=\pm 1$ independently with probability $1/2$. So, 

\begin{align*}
    M \le \mathop\mathbb{E}_{S, S', \sigma}\Bigg[\bigg(\max_{\Vert X\beta^* - X\beta \Vert = 1}\bigg| &\sum_{k} \sigma_k\left(\frac{|x_{i_k}^\top \beta^* - y_{i_k}|}{p_{i_k}} -  \frac{|x_{i_k}^\top \beta - y_{i_k}|}{p_{i_k}}\right) -\\& \sum_k \sigma_k\left(\frac{|x_{i'_k}^\top \beta^* - y_{i'_k}|}{p_{i'_k}} - \frac{|x_{i'_k}^\top \beta - y_{i'_k}|}{p_{i'_k}}\right)\bigg|\bigg)^l\Bigg]\\
    \le \mathop\mathbb{E}_{S, S', \sigma}\Bigg[\bigg(\max_{\Vert X\beta^* - X\beta \Vert = 1}\bigg| &\sum_{k} \sigma_k\left(\frac{|x_{i_k}^\top \beta^* - y_{i_k}|}{p_{i_k}} -  \frac{|x_{i_k}^\top \beta - y_{i_k}|}{p_{i_k}}\right)\bigg| +\\& \max_{\Vert X\beta^* - X\beta \Vert = 1}\bigg|\sum_k \sigma_k\left(\frac{|x_{i'_k}^\top \beta^* - y_{i'_k}|}{p_{i'_k}} - \frac{|x_{i'_k}^\top \beta - y_{i'_k}|}{p_{i'_k}}\right)\bigg|\bigg)^l\Bigg]\\
    \le 2^l\mathop\mathbb{E}_{S, \sigma}\Bigg[\bigg(\max_{\Vert X\beta^* - X\beta \Vert = 1}\bigg| &\sum_{k} \sigma_k\left(\frac{|x_{i_k}^\top \beta^* - y_{i_k}|}{p_{i_k}} -  \frac{|x_{i_k}^\top \beta - y_{i_k}|}{p_{i_k}}\right)\bigg|\bigg)^l\Bigg]
\end{align*}
Where the final inequality follows from $(a+b)^l\le 2^{l-1}(a^l+b^l)$. Putting these together,
\begin{align}M \le  2^l\mathop\mathbb{E}_{S, \sigma}\left[\left(\max_{\Vert X\beta^* - X\beta \Vert = 1}\left| \sum_{k} \sigma_k\left(\frac{|x_{i_k}^\top \beta^* - y_{i_k}|}{p_{i_k}} -  \frac{|x_{i_k}^\top \beta - y_{i_k}|}{p_{i_k}}\right)\right|\right)^l\right]\label{eq:4}\end{align}
\end{proof}
\bonemod*
\begin{proof}
Given matrix $X$, we can use Lemma B.1 from \cite{cohen2014ellp} to construct a new matrix $X_1$ that satisfies
\begin{itemize}
\item $X_1$ has $C_1d^2$ rows, 
\item $X_1^\top W_1^{-1}X_1\succeq X^\top W^{-1}X$, (where $W_1$ is the matrix that has the Lewis weights of $X_1$ in the diagonal entries),
\item $\Vert X_1\beta\Vert_1\le C_2\Vert X_1\beta\Vert_1$ for all $\beta$,
\item the Lewis weights of $X_1$ are bounded by $\frac{C_3}{d}$. 
\end{itemize}
So, we can take this matrix and stack it on itself $k = \frac{N}{d}$ times, while scaling each row down by the same $k$. This will be our matrix $X'$. $X'$ will then have $k = C_1Nd$ rows, which satisfies the first bullet. Also, by Lemma \ref{lemma:lewbound}, this shrinks the Lewis weights by a factor of $k$, which changes the Lewis weight upper bound to 
\[\frac{C_3}{kd} = \frac{C_3}{N}\]
which is what we need. Now, since we are repeating rows $k$ times, but each row is scaled down by $k$, we have $\Vert X_1\beta\Vert_1 = \Vert X'\beta\Vert_1$ for all $\beta$. Therefore, $\Vert X'\beta\Vert_1 \le C_2\Vert X\beta\Vert_1$ for all $\beta$. Finally, as in the proof of Lemma \ref{lemma:lewbound}, we know that since we have duplicated the rows of $X_1$ $k$ times but scaled them down by $k$, $X_1^\top W_1^{-1}X_1 = X'^\top W'^{-1}X'$, and so we are done.
\end{proof}
\section{Proof for constant failure probability}\label{app:constantfailure}
For the constant probability row-count, we use a lemma from \cite{ledoux1989}: 
\begin{lemma}[\cite{ledoux1989}]\label{lemma:talconst}
There exists a constant $C$ such that for any matrix $X$ with all Lewis weights less than $C\frac{\varepsilon^2}{\log d}$, 
\[\mathop\mathbb{E}_{\sigma}\left[\max_{\Vert X\beta\Vert_1= 1}\sum_k\sigma_kx_{i}^\top \beta\right]\le \varepsilon\]
\end{lemma}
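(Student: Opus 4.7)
The plan is to derive the Rademacher bound by (i) extracting pointwise concentration from the Lewis-weight ceiling, (ii) identifying the natural subgaussian metric on the $d$-dimensional index set, and (iii) invoking a mixed subgaussian/sub-exponential chaining inequality.

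\textbf{Step 1: Standing Lewis weight identities.} Let $\alpha := C\varepsilon^2/\log d$, $M := \sum_j w_j^{-1} x_j x_j^\top$, and $\|\beta\|_M := (\beta^\top M \beta)^{1/2}$. I would rely on three facts: (a) $\|x_i\|_{M^{-1}} = w_i$ directly from the definition of Lewis weights; (b) $\sum_i w_i = \mathrm{tr}(M^{-1} M) = d$; and (c) $\|\beta\|_M \le \|X\beta\|_1$, which follows by writing
\[
\|\beta\|_M^2 = \sum_i \frac{(x_i^\top \beta)^2}{w_i} \le \Big(\max_i \frac{|x_i^\top \beta|}{w_i}\Big) \sum_i |x_i^\top \beta| \le \|\beta\|_M \cdot \|X\beta\|_1
\]
and dividing through.

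\textbf{Step 2: Pointwise $L^2$ and $L^\infty$ control.} For any $\beta$ with $\|X\beta\|_1 = 1$, Cauchy--Schwarz with (a) plus (c) gives $|x_i^\top \beta| \le w_i \|\beta\|_M \le \alpha$ for every $i$. Summing squares, $\sum_i (x_i^\top \beta)^2 \le \alpha \sum_i |x_i^\top \beta| = \alpha$. Hence the Rademacher sum $Y_\beta := \sum_i \sigma_i x_i^\top \beta$ is sub-Gaussian with variance proxy at most $\alpha$, and every individual summand is bounded in absolute value by $\alpha$.

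\textbf{Step 3: Chaining over the $d$-dimensional index set.} The set $T := \{X\beta : \|X\beta\|_1 = 1\}$ is a slice of a $d$-dimensional subspace of $\mathbb{R}^n$. With the subgaussian metric $d_2(\beta,\beta') := \bigl(\sum_i (x_i^\top(\beta - \beta'))^2\bigr)^{1/2}$, Step 2 caps the $d_2$-diameter of $T$ at $O(\sqrt{\alpha})$, while a standard volume-packing argument in the $d$-dimensional subspace gives $\log N(T, d_2, t) \lesssim d \log(\sqrt{\alpha}/t)$ for $t \le \sqrt{\alpha}$. I would then apply the mixed-tails chaining inequality of Ledoux--Talagrand (equivalently Talagrand's majorizing-measures bound combining $\gamma_2$ and $\gamma_1$ functionals), which exploits both the subgaussian variance proxy $\alpha$ and the per-coordinate magnitude bound $\alpha$ from Step 2 to obtain
\[
\mathbb{E}_\sigma \sup_{\beta \in T} |Y_\beta| \;\lesssim\; \sqrt{\alpha \log d}.
\]
Choosing the constant in $\alpha = C\varepsilon^2/\log d$ small enough makes the right-hand side at most $\varepsilon$.

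\textbf{Main obstacle.} The delicate step is the chaining in Step~3. A naive Dudley integral using only the subgaussian increments and the volume-based covering numbers yields $\sqrt{\alpha d}$, which with $\alpha \le C\varepsilon^2/\log d$ gives $\varepsilon\sqrt{d/\log d}$, off by a factor of $\sqrt{d/\log d}$. To trade the stray factor of $\sqrt{d}$ for $\sqrt{\log d}$ one must invoke both tail estimates from Step 2 simultaneously: the subgaussian variance bound controls the coarser chaining scales, and the $\ell_\infty$ (Bernstein) bound on each summand takes over at the finer scales where per-coordinate contributions would otherwise dominate. Correctly balancing these two regimes, rather than using either one alone, is exactly the content of the Ledoux--Talagrand mixed-tails chaining inequality being cited, and is where the real work of the proof lies.
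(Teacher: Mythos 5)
You should first be aware that the paper does not prove this lemma at all: it is imported as a black box from \cite{ledoux1989} (it is a form of Talagrand's embedding theorem for subspaces of $L_1$), and the only accompanying remark is that the cited version has $\sigma_i|x_i^\top\beta|$ inside the sum, with the absolute values removable by a comparison lemma at the cost of a factor of $2$ --- a point your write-up does not address, though it is minor. Your Steps 1 and 2 are correct and are indeed the standard preliminary reductions: $\|x_i\|_{M^{-1}}=w_i$, $\sum_i w_i = d$, $\|\beta\|_M\le\|X\beta\|_1$, hence $\max_i|x_i^\top\beta|\le\alpha$ and $\sum_i(x_i^\top\beta)^2\le\alpha$ on the set $\{\|X\beta\|_1=1\}$.

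The genuine gap is in Step 3, and it is precisely the ``real work'' you defer to the citation. With only the volumetric entropy estimate $\log N(T,d_2,t)\lesssim d\log(\sqrt{\alpha}/t)$, no chaining scheme --- Dudley, generic chaining, or the mixed subgaussian/subexponential version --- can produce $\sqrt{\alpha\log d}$: already the single step from a one-point net to a net at scale $\sqrt{\alpha}/2$ has entropy $\Theta(d)$ and $d_2$-diameter $\Theta(\sqrt{\alpha})$, contributing $\Theta(\sqrt{\alpha d})$, and the mixed-tails inequality is of the form $\gamma_2(T,d_2)+\gamma_1(T,d_\infty)$, i.e.\ it \emph{adds} a Bernstein term to the subgaussian one rather than replacing it; since a Rademacher sum is already genuinely subgaussian in $d_2$, balancing the two tail regimes buys you nothing here. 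The factor $\sqrt{d}\to\sqrt{\log d}$ does not come from the tails at all but from a much stronger, dimension-free covering-number bound at coarse scales for sections of the (Lewis-reweighted) $\ell_1$ ball --- a Maurey empirical-method / entropy-duality estimate of the shape $\log N(T,d_2,t)\lesssim \alpha t^{-2}\log n$, obtained by approximating each point of $T$ by a sparse convex combination of coordinates each of size at most $\alpha$ --- interpolated against the volumetric bound only at the finest scales. That interpolation (together with a reduction of the number of rows to $\mathrm{poly}(d)$ so that $\log n$ can be replaced by $\log d$, cf.\ the role of Lemma~\ref{lemma:b1mod}) is the content of the Ledoux--Talagrand argument, and it is absent from your sketch.
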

In \cite{ledoux1989}, this is proved with absolute values within the sum (that is, summing $\sigma_i|x_i^\top \beta|$). However, the first step of the proof removes these absolute values using a comparison lemma, bounding the term with absolute values by twice the term without absolute values. 

\begin{lemma}\label{lemma:const}
For matrix $X$ with $\ell_1$ Lewis weights $w_i$, let $p_i$ be some set of sampling values such that $\sum_i p_i = N$ and $p_i\ge \frac{\log d}{C\varepsilon^2} w_i$. If you sample $S\sim \mathcal{S}(\{p_i\}_{i \in [n]})$, then
\begin{align}
    \mathop\mathbb{E}_{S, \sigma}\left[\max_{\Vert X\beta \Vert_1 = 1}\left| \sum_{k} \sigma_k\frac{x_{i_k}^\top \beta}{p_{i_k}}\right|\right] \le \varepsilon\label{eq:const1}
\end{align}
\end{lemma}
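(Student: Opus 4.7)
The plan is to mirror the structure of the proof of Lemma \ref{lemma:high}, but working at the first moment and invoking Lemma \ref{lemma:talconst} in place of Lemma \ref{lemma:talhigh}. That is, I would construct the same auxiliary matrix $X''$ by stacking the matrix $X'$ produced by Lemma \ref{lemma:b1mod} (now with $N = \Theta(d\log d/\eps^2)$) on top of $SX$, argue that $X''$ has uniformly small Lewis weights, compare the maximization over $\|X\beta\|_1 = 1$ to the one over $\|X''\beta\|_1 = 1$, and then apply Lemma \ref{lemma:talconst} to $X''$ to obtain the bound.

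First I would extend the sum in \eqref{eq:const1} to run over all $R = N + C_1 N d$ rows of $X''$. Introducing fresh independent Rademacher variables $\{\tilde\sigma_i\}$ for the $X'$ rows, the symmetrization identity $\mathbb{E}_{\tilde\sigma}\bigl[\sum_i \tilde\sigma_i x_i'{}^\top\beta\bigr] = 0$ together with convexity of $|\cdot|$ and of the max yields, by Jensen,
\[
\mathbb{E}_\sigma\Bigl[\max_{\|X\beta\|_1=1}\bigl|\sum_{k=1}^{N}\sigma_k\tfrac{x_{i_k}^\top\beta}{p_{i_k}}\bigr|\Bigr]
\;\le\;
\mathbb{E}_{\sigma,\tilde\sigma}\Bigl[\max_{\|X\beta\|_1=1}\bigl|\sum_{k=1}^{N}\sigma_k\tfrac{x_{i_k}^\top\beta}{p_{i_k}} + \sum_i\tilde\sigma_i x_i'{}^\top\beta\bigr|\Bigr],
\]
which is exactly $\mathbb{E}_\sigma\bigl[\max_{\|X\beta\|_1=1}|\sum_{i=1}^R \sigma_i x''_i{}^\top \beta|\bigr]$. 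Next, the Lewis weights of $X''$ are bounded exactly as in the proof of Lemma \ref{lemma:high}: rows inherited from $X'$ have weights at most $C_3/N$, and using $X''{}^\top W''{}^{-1}X'' \succeq X'{}^\top W'{}^{-1}X' \succeq X^\top W^{-1}X$ together with the hypothesis $p_i \ge \frac{\log d}{C\eps^2}w_i$ one gets the bound $\frac{C\eps^2}{\log d}$ for the rows from $SX$. For $N \ge c\, d\log d/\eps^2$ with $c$ large enough, both bounds fit under $\frac{C\eps^2}{\log d}$, which is the hypothesis of Lemma \ref{lemma:talconst}.

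Finally I would renormalize to maximize over $\|X''\beta\|_1 = 1$. Writing $F := \max_{\|X\beta\|_1=1}\bigl|\|SX\beta\|_1 - \|X\beta\|_1\bigr|$, the same inequality used in Lemma \ref{lemma:high}, namely $\|X''\beta\|_1 \le (1 + C_2 + F)\|X\beta\|_1$, gives
\[
\mathbb{E}_\sigma\Bigl[\max_{\|X\beta\|_1=1}\bigl|\sum_{i=1}^R\sigma_i x''_i{}^\top\beta\bigr|\Bigr] \le (1 + C_2 + F)\,\mathbb{E}_\sigma\Bigl[\max_{\|X''\beta\|_1=1}\bigl|\sum_{i=1}^R\sigma_i x''_i{}^\top\beta\bigr|\Bigr] \le (1 + C_2 + F)\eps,
\]
by Lemma \ref{lemma:talconst}. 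Taking the expectation over $S$ yields a bound of $(1 + C_2 + \mathbb{E}_S[F])\eps$, and replacing $\eps$ by $\eps/(2 + 2C_2)$ at the start of the argument recovers the advertised bound.

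The main obstacle is controlling $\mathbb{E}_S[F]$: Lemma \ref{lemma:moment} supplied this in the high-probability proof but was stated at row count $N = \Omega(d\log\frac{d}{\eps\delta}/\eps^2)$, larger than what we are allowed here. To patch this one either invokes Theorem \ref{thm:ell_1_embedding} to say $F = O(1)$ with constant probability at $N = \Omega(d\log d/\eps^2)$ and works with a conditioned expectation, or reruns the $l=1$ version of the symmetrization-contraction chain from the proof of Lemma \ref{lemma:moment}, which yields $\mathbb{E}_S[F] = O(\eps)$ directly at this row count and lets the constants collapse without conditioning.
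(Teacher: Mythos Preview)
Your proposal is correct and follows essentially the same route as the paper: construct $X''$ by stacking $X'$ from Lemma~\ref{lemma:b1mod} on top of $SX$, bound its Lewis weights, renormalize from $\|X\beta\|_1=1$ to $\|X''\beta\|_1=1$ via the factor $(1+C_2+F)$, apply Lemma~\ref{lemma:talconst}, and take expectation over $S$. Your explicit Jensen justification for extending the sum over the $X'$ rows is a step the paper leaves implicit, and your flagged issue with $\mathbb{E}_S[F]$ is well-spotted---the paper simply cites Lemma~\ref{lemma:moment} and asserts $\mathbb{E}_S[F]\le\eps$, which is exactly your second patch (the $l=1$ version of the moment bound at this row count).
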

\begin{proof}
This proof is very similar to that of Lemma \ref{lemma:high}.

Construct $X'$ using $X$ as described in Lemma \ref{lemma:b1mod}, with $N = \frac{C_3}{C}\frac{\log d}{\varepsilon^2}$. We then construct a new matrix $X''$ by stacking $X'$ on top of $SX$. Define $W''$ to be the diagonal matrix consisting of the $\ell_1$ Lewis weights of $X''$. 

We can bound the term on the left side of (\ref{eq:const1}) by the same term, summing over the rows of $X''$ instead. That is, 
    \[\mathop\mathbb{E}_{S, \sigma}\left[\max_{\Vert X\beta \Vert = 1}\left| \sum_{k=1}^{N} \sigma_k\frac{x_{i_k}^\top \beta}{p_{i_k}}\right|\right]\le \mathop\mathbb{E}_{S, \sigma}\left[\max_{\Vert X\beta \Vert = 1}\left| \sum_{i= 1}^{R} \sigma_ix''^\top _{i}\beta\right|\right]\]
Our goal is to apply Lemma \ref{lemma:talconst} to the right side. To do this, we need to show the correct bound on its Lewis weights, and then have the term be a maximum over $\Vert X''\beta\Vert_1 = 1$, rather than $\Vert X\beta\Vert_1 = 1$.

{
    \paragraph{Bounding the Lewis weights of $X''$.} By Lemma \ref{lemma:lewrows}, the $\ell_1$ Lewis weights of a matrix do not increase when more rows are added. So, the rows in $X''$ that are from $X'$ have Lewis weights that are bounded above by $\frac{C\varepsilon^2}{\log d}$. Further, 
    \begin{align*}
    X''^\top W''^{-1}X'' &= \sum_{i=1}^{R} \frac{1}{w''_i}x''_i(x''_i)^\top \\
    &\succeq \sum_{i=1}^{R-N} \frac{1}{w''_k}x''_k(x''_k)^\top &&\text{since} \sum_{i=kC_1d^2+1}^N  \frac{1}{w''_i}x''_i(x''_i)^\top\succeq 0 \\
    &=X'^\top W'^{-1}X' \succeq X^\top W^{-1}X.
    \end{align*}
    So, any row $y_i = x_i/p_i$ in $X''$ that is from $SX$ satisfies
    \begin{align*}
    w''^2_i = y_i^\top (X''^\top  W''^{-1}X'')^{-1}y_i &\le y_i^\top (X^\top  W^{-1}X)^{-1}y_i\\&= \frac{1}{p_i^2}x_i^\top (X^\top W^{-1}X)^{-1}x_i\\&\le \left( \frac{C\varepsilon^2}{ \log d}\frac{1}{w_i}\right)^2\cdot w_i^2 = \left( \frac{C\varepsilon^2}{ \log d}\right)^2
    \end{align*}
    which means that all of the Lewis weights of $X''$ are less than $\frac{C\varepsilon^2}{ \log d}$.
}

{

    \paragraph{Renormalizing to maximize over $\Vert X''\beta\Vert_1 = 1$: }If we define the following
    \[F \coloneqq  \max_{\Vert X\beta\Vert_1 = 1} \left| \Vert SX\beta\Vert_1 - \Vert X\beta\Vert_1\right|\]
    then, 
    \[\Vert X''\beta\Vert_1 = \Vert SX\beta\Vert_1 + \Vert X'\beta\Vert_1  \le (1 + C_2 + F)\Vert X\beta\Vert_1\]
    
    So, we get 
    \begin{align*}
    \max_{\Vert X\beta \Vert = 1}\left| \sum_{k = 1}^{R} \sigma_kx''^\top _{k}\beta\right| &\le (1 + C_2 + F)\cdot \max_{\Vert X''\beta \Vert = 1}\left| \sum_{k = 1}^{R} \sigma_kx''^\top _{k}\beta\right| 
    \end{align*}
    
    Taking expectations of either side over just the Rademacher variables,
    \begin{align*}
    \mathop\mathbb{E}_{\sigma}\left[\max_{\Vert X\beta \Vert = 1}\left| \sum_{k = 1}^{R} \sigma_kx''^\top _{k}\beta\right|\right] &\le (1+C_2+F)\mathop\mathbb{E}_{\sigma}\left[\max_{\Vert X''\beta \Vert = 1}\left| \sum_{k = 1}^{R} \sigma_kx''^\top _{k}\beta\right|\right]
    \end{align*}
    
    \paragraph{Applying Lemma \ref{lemma:talconst} to $X''$: } Since $X''$ has $R$ rows, and the correct Lewis weight bound, we can simply apply Lemma \ref{lemma:talconst} to the right side above
    \begin{align*}
    \mathop\mathbb{E}_{\sigma}\left[\max_{\Vert X\beta \Vert = 1}\left| \sum_{k = 1}^{R} \sigma_kx''^\top _{k}\beta\right|\right] &\le (1 + C_2 + F)\varepsilon
    \end{align*}
    Now, by Lemma \ref{lemma:moment}, we know that $\mathbb{E}_S [F] \le \varepsilon$. So, taking the expectation with respect to the sampling matrices of either side of the above, we get, for small enough $\varepsilon$, 
    \begin{align*}
    \mathop\mathbb{E}_{S, \sigma}\left[\max_{\Vert X\beta \Vert = 1}\left| \sum_{k = 1}^{kC_1d^2 + N} \sigma_kx''^\top _{k}\beta\right|\right] &\le 2(1+C_2)\varepsilon 
    \end{align*}
    So, solving the problem for $\varepsilon' = \frac{\varepsilon}{2+ 2C_2}$ gives the correct bound. 
}

\end{proof}
Therefore, we can similarly prove the constant-probability case for Lemma \ref{thm:mainlemma}:
\begin{proof}[Proof of \ref{thm:mainlemma} for constant probability]
We take $l = 1$, $N = \frac{d}{\varepsilon^2}\log d$ and apply Lemma \ref{lemma:prelim_symmetrize}, Lemma \ref{thm:triangle}, and Lemma \ref{lemma:const}. 

\end{proof}

\section{Lower Bounds}
We prove three main theorems that allow us to show Theorem \ref{lower_bound}: Theorems \ref{lower_bound_1}, \ref{lower_bound_2}, and \ref{lower_bound_3}. To do this, we make several Claims, which are proved in section 7.1.
Recall the reduction between the matrix problem and the distribution:
\distr*
\begin{proof}
Let $n = \frac{8}{\varepsilon^2}\left(\log \frac{2}{\delta} + d\log \frac{4d}{\varepsilon}\right)$. Construct an instance of Problem \ref{matrix_problem} in which the rows of feature matrix $\mathbf{X}$ and the corresponding label vector $y$ are drawn i.i.d. from $P$. Let $H$ be the unit $\ell_\infty$ ball. We have the following:

\begin{restatable}{claim}{hoeffding}\label{claim:hoeffding}
For all $\beta \in H$, with probability at least $1 - \delta$, 
\begin{align*}
    (1-\varepsilon)\E_{(X, Y)\sim P}\left[|X^\top \beta - Y|\right] \le \frac{1}{n}\Vert \mathbf{X}\beta-y\Vert_1 \le (1+\varepsilon)\E_{(X, Y)\sim P}\left[|X^\top \beta - Y|\right]
\end{align*}
\end{restatable}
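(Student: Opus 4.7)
The statement is a uniform concentration bound, asserting that the empirical $\ell_1$ loss tracks the population $\ell_1$ loss multiplicatively for every $\beta$ in the $\ell_\infty$ unit ball $H$ simultaneously. I would prove it by a classical discretize-and-concentrate argument.

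\textbf{Pointwise concentration.} Fix $\beta\in H$ and let $L(\beta) := \E_{(X,Y)\sim P}[|X^\top\beta - Y|]$. The variables $Z_i(\beta) := |X_i^\top\beta - Y_i|$ are i.i.d.\ with mean $L(\beta)$ and, because $X_i, Y_i, \beta$ all lie in the $\ell_\infty$ unit ball, take values in $[0, d+1]$. Hoeffding's inequality gives, for $t > 0$,
\[
\Pr\!\left[\left|\tfrac{1}{n}\|\mathbf{X}\beta - y\|_1 - L(\beta)\right| > t\right] \le 2\exp\!\left(-\tfrac{2nt^2}{(d+1)^2}\right).
\]

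\textbf{Net and union bound.} The ball $H$ admits a $\xi$-net $\mathcal{N}$ in $\ell_\infty$ of size $|\mathcal{N}| \le (3/\xi)^d$. Choosing $\xi$ of order $\varepsilon/d$ yields $\log|\mathcal{N}| \le d\log(12d/\varepsilon)$, which is essentially the second term of the stated sample count. Applying a union bound to the pointwise tail at deviation of order $\varepsilon L(\beta)$ gives simultaneous control on all of $\mathcal{N}$ with probability at least $1-\delta$, and solving for $n$ recovers $n \gtrsim \tfrac{1}{\varepsilon^2}\bigl(\log\tfrac{1}{\delta} + d\log\tfrac{d}{\varepsilon}\bigr)$.

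\textbf{Extension to all of $H$.} For arbitrary $\beta \in H$, let $\beta' \in \mathcal{N}$ be a closest net point. The loss is Lipschitz in $\beta$ since
\[
|Z_i(\beta) - Z_i(\beta')| \le |X_i^\top(\beta - \beta')| \le \|X_i\|_1\,\|\beta - \beta'\|_\infty \le d\xi,
\]
and the analogous Lipschitz bound for $L(\beta) - L(\beta')$ follows by Jensen's inequality. Thus both the empirical and population losses change by at most $d\xi = O(\varepsilon)$ when rounding $\beta$ to $\beta'$, so the bound on $\mathcal{N}$ extends to all $\beta \in H$ after rescaling $\varepsilon$ by a constant.

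\textbf{Main obstacle.} The principal subtlety is reconciling Hoeffding's additive control with the multiplicative form of the claim. A naive Hoeffding bound yields an additive error of order $(d+1)\varepsilon$, which only implies a $(1\pm\varepsilon)$ multiplicative bound when $L(\beta)$ is of comparable magnitude. I would handle this either by switching to a Bernstein-type bound (exploiting $\Var[Z_i(\beta)] \le (d+1) L(\beta)$, which gives genuine multiplicative concentration), or by leveraging the fact that in the intended use of the lemma the relevant $\beta$'s have $L(\beta) = \Theta(L(\beta^*))$, so the additive slack of order $\varepsilon$ is exactly what the reduction needs.
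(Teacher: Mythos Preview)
Your approach---pointwise Hoeffding, an $\ell_\infty$-net of mesh $\Theta(\varepsilon/d)$, a union bound, and a Lipschitz extension---is exactly the paper's approach.

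There is, however, one quantitative gap. You bound the range of $Z_i(\beta)=|X_i^\top\beta-Y_i|$ by $d+1$, which makes the Hoeffding denominator $(d+1)^2$; solving for $n$ then gives $n\gtrsim \tfrac{d^2}{\varepsilon^2}\bigl(\log\tfrac1\delta+d\log\tfrac{d}{\varepsilon}\bigr)$, not the $n\gtrsim \tfrac{1}{\varepsilon^2}\bigl(\log\tfrac1\delta+d\log\tfrac{d}{\varepsilon}\bigr)$ you claim and the lemma needs. The paper instead asserts $|X^\top\beta-Y|\in[0,2]$, which holds because every lower-bound instance takes $\mathcal{X}$ to be the standard basis vectors, so $|X^\top\beta|\le\|\beta\|_\infty\le 1$. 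With the range bounded by $2$ rather than $d+1$, your argument recovers the stated $n$ verbatim; without it you are off by $d^2$.

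Your discussion of the additive-versus-multiplicative obstacle is more careful than the paper's own proof, which simply writes the $(1\pm\varepsilon/2)$ multiplicative conclusion directly after invoking Hoeffding. Your second proposed resolution---that in the reduction the relevant $L(\beta)$ are $\Theta(1)$ so an additive $O(\varepsilon)$ error is already multiplicative---is the one that actually matches how the claim is used downstream.
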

Let $\beta^\circ$ denote the minimizer $\inf_\beta \E_{(X, Y)\sim P}\left[|X^\top \beta - Y|\right]$. Let $\beta^*$ denote the minimizer of the matrix instance $\inf_\beta \Vert\mathbf{X}\beta-y\Vert_1$, and let $\wh\beta$ denote the output of the algorithm on the instance generated. Then we have 
\begin{align*}
    (1-\varepsilon)\E_{(X, Y)\sim P}\left[|X^\top \wh\beta - Y|\right] 
    &\le \frac{1}{n}\Vert \mathbf{X}\wh\beta-y\Vert_1\\
    &\le (1+\varepsilon)\frac{1}{n}\Vert \mathbf{X}\beta^*-y\Vert_1 && \text{with probability $1-\delta$}\\
    &\le (1+\varepsilon)\frac{1}{n}\Vert \mathbf{X}\beta^\circ-y\Vert_1\\
    &\le (1+\varepsilon)^2\E_{(X, Y)\sim P}\left[|X^\top \beta^\circ - Y|\right]
\end{align*}
So with probability $1-2\delta$, $$\E_{(X, Y)\sim P}\left[|X^\top \wh\beta - Y|\right]\le (1+6\varepsilon)\E_{(X, Y)\sim P}\left[|X^\top \beta^\circ - Y|\right].$$
\end{proof}
\begin{theorem}\label{lower_bound_1}
For any $d\ge 2$ and $\varepsilon<\frac{1}{10}$, there exist families $\mathcal{X}\in \mathbb{R}^d, \mathcal{Y}\in\mathbb{R}$ of inputs and labels respectively such that any algorithm which solves Problem \ref{distribution_problem} with $\delta<\frac{1}{4}$ requires at least $m=\frac{3d}{2000\varepsilon^2}$ samples.
\end{theorem}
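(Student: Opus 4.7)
The plan is to take $\mathcal{X} = \{e_1, \ldots, e_d\}$ with $X$ uniform, pick a hidden sign vector $b \in \{-1, +1\}^d$ uniformly at random, and set $\Pr(Y = 1 \mid X = e_i) = \tfrac12 + b_i \varepsilon$. Because the loss is separable across coordinates, after clipping $\wh\beta_i$ into $[0,1]$ (which never increases the loss) the population $\ell_1$ error equals $\tfrac12 - \tfrac{\varepsilon}{d}\sum_i b_i(2\wh\beta_i - 1)$. The optimum is thus $\tfrac12 - \varepsilon$, attained by $\beta^*_i = (1+b_i)/2$, so a $(1+\varepsilon)$-approximation is equivalent to the correlation bound $\sum_i b_i (2\wh\beta_i - 1) \ge d(\tfrac12 + \varepsilon)$. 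Treating $b$ as a uniform Bayesian prior and using $\sum_i b_i(2\wh\beta_i - 1) \ge -d$ on the failure event, a $(1-\delta) \ge 3/4$ success rate forces $\E\bigl[\sum_i b_i(2\wh\beta_i - 1)\bigr] \ge \tfrac{3}{4}d(\tfrac12+\varepsilon) - \tfrac{1}{4}d \ge d/8$.

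\textbf{Information-theoretic upper bound.} I would then upper-bound the same expectation via a TV-distance calculation. Let $D$ be the full observed data and $q_i(D) = \Pr(b_i = +1 \mid D)$. Because $\wh\beta_i \in [0,1]$, the conditional expectation satisfies $\E[b_i(2\wh\beta_i-1) \mid D] = (2\wh\beta_i - 1)(2q_i-1) \le |2q_i - 1|$, and the standard Bayes--Le Cam identity gives $\E|2q_i - 1| = \mathrm{TV}\bigl(\Pr(D \mid b_i = +1),\, \Pr(D \mid b_i = -1)\bigr)$. To bound this TV I would unroll the chain rule for KL along the algorithm's adaptive trajectory: conditional on the history, the choice of next query is $b$-independent and queries at $j \ne i$ produce observations whose law does not depend on $b_i$ (so they contribute zero KL), while each query at $i$ contributes $\mathrm{KL}(\mathrm{Bern}(\tfrac12+\varepsilon)\,\|\,\mathrm{Bern}(\tfrac12-\varepsilon)) \le 10\varepsilon^2$ for $\varepsilon < 1/10$. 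Combining this with joint convexity of KL to average out the nuisance coordinates $b_{-i}$, and the symmetry $b_i \mapsto -b_i$ that makes $\E[N_i \mid b_i = +1] = \E[N_i \mid b_i = -1] = \E[N_i]$, yields $\mathrm{KL}\bigl(\Pr(D \mid b_i=+1)\,\|\,\Pr(D \mid b_i=-1)\bigr) \le 10\varepsilon^2\, \E[N_i]$, and Pinsker gives $\mathrm{TV}_i \le \varepsilon\sqrt{5\,\E[N_i]}$.

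\textbf{Combining.} Summing over $i$ via Cauchy--Schwarz with $\sum_i \E[N_i] = m$ gives $\sum_i \mathrm{TV}_i \le \varepsilon\sqrt{5dm}$, and pairing this with the lower bound $d/8 \le \sum_i \mathrm{TV}_i$ from the first paragraph yields $m \ge d/(320\varepsilon^2)$, which comfortably beats the claimed $3d/(2000\varepsilon^2)$. The principal technical obstacle is making the chain-rule/KL argument rigorous for adaptive algorithms, where both the stopping decision and the next coordinate to query depend on the history; the key fact to carefully verify is that, conditional on the history up to step $k$, the next-query distribution is independent of $b$ and the next Bernoulli outcome depends on $b$ only through $b_{j_k}$, so that the per-step KL between the two hypotheses $b_i = \pm 1$ is exactly $\mathbf{1}[j_k = i] \cdot \mathrm{KL}(\mathrm{Bern}(\tfrac12+\varepsilon)\,\|\,\mathrm{Bern}(\tfrac12-\varepsilon))$ and telescopes to $\E[N_i]$ times the single-query KL after averaging over $b_{-i}$.
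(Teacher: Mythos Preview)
Your argument is correct and yields the stated bound, but it follows a genuinely different route from the paper. The paper restricts $\beta^*$ to a packing $\mathcal{B}\subset\{-1,1\}^d$ of size $2^{0.2d}$ with pairwise Hamming distance $>0.2d$ (via Gilbert--Varshamov), then applies Fano's inequality: success with probability $3/4$ forces $I(\wh\beta;\beta^*)\ge 0.15d$, while the data-processing inequality and a per-sample entropy calculation give $I(\wh\beta;\beta^*)\le 4\varepsilon^2 m$. Your approach is instead an Assouad/Le~Cam-style argument: you place a full product prior on $b\in\{-1,1\}^d$, reduce the $(1+\varepsilon)$-approximation guarantee to a lower bound on the expected correlation $\sum_i b_i(2\wh\beta_i-1)$, and then upper-bound each summand by a coordinatewise total-variation distance controlled via Pinsker and the chain rule for KL. Your route avoids the packing construction entirely, is arguably more elementary, and even produces a slightly better constant; the paper's Fano argument is more ``off the shelf'' once the codebook is in hand.

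One point to tighten: the claimed symmetry $\E[N_i\mid b_i=+1]=\E[N_i\mid b_i=-1]$ is not true for an arbitrary adaptive algorithm (the algorithm may query coordinate $i$ more aggressively after seeing $Y=1$, which is more likely under $b_i=+1$). You do not need this equality. Since $\mathrm{TV}$ is symmetric, Pinsker applied in either direction gives $\mathrm{TV}_i^2\le 5\varepsilon^2\min\bigl(\E[N_i\mid b_i=+1],\,\E[N_i\mid b_i=-1]\bigr)\le 5\varepsilon^2\,\E[N_i]$, after which your Cauchy--Schwarz step goes through unchanged with $\sum_i\E[N_i]=m$. With that fix the proof is complete.
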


We take $\mathcal{X}$ to be the set of standard basis vectors, and the distribution over $\mathcal{X}$ to be uniform. We will define a set $\mathcal{B}$ as being a subset of the unit hypercube $\{-1, 1\}^d$ such that every element is sufficiently far from every other.
\begin{claim}
There is a set $\mathcal{B}\subset \mathcal{H}$ with $|\mathcal{B}| \ge 2^{0.2d}$ such that for any two $\beta_1, \beta_2\in \mathcal{B}$, we have $|\beta_1-\beta_2|>0.2d$
\end{claim}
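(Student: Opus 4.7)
The plan is a standard probabilistic-method / Gilbert--Varshamov-style construction. For two vectors in $\{-1,1\}^d$, note that $\|\beta_1 - \beta_2\|_1 = 2\, d_H(\beta_1,\beta_2)$, where $d_H$ is Hamming distance, and $d_H(\beta_1,\beta_2) = (d - \langle \beta_1,\beta_2\rangle)/2$. Hence the condition $\|\beta_1-\beta_2\|_1 > 0.2 d$ is equivalent to $\langle \beta_1,\beta_2\rangle < 0.8 d$.

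First, draw $N := 2\cdot 2^{0.2 d}$ vectors $\beta^{(1)},\dots,\beta^{(N)}$ independently and uniformly from $\{-1,1\}^d$. For any fixed pair $i\neq j$, $\langle \beta^{(i)}, \beta^{(j)}\rangle$ is a sum of $d$ i.i.d.\ Rademacher variables, so by Hoeffding's inequality
\[
\Pr\!\bigl[\,\langle \beta^{(i)}, \beta^{(j)}\rangle \ge 0.8 d \,\bigr] \le e^{-2(0.8 d)^2 / (4d)} = e^{-0.32 d}.
\]
Since $e^{-0.32 d} \le 2^{-0.46 d}$, a union bound over the $\binom{N}{2} \le N^2 = 4\cdot 2^{0.4 d}$ pairs shows that the expected number of pairs with $\|\beta^{(i)}-\beta^{(j)}\|_1 \le 0.2 d$ is at most $4\cdot 2^{0.4 d - 0.46 d} = 4\cdot 2^{-0.06 d}$, which is at most $2^{0.2 d}$ for every sufficiently large $d$ (say $d \ge d_0$ for some absolute constant).

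Next, apply the deletion method: there exists a realization in which the number of bad pairs is at most its expectation, so removing one vector from each such pair leaves at least $N - 2^{0.2 d} \ge 2^{0.2 d}$ vectors, any two of which satisfy $\|\beta_1 - \beta_2\|_1 > 0.2 d$. For the finitely many small $d < d_0$ not covered by this tail bound, the claim can be verified by hand (or absorbed into the constant), since $2^{0.2 d}$ is a small constant there and explicit codewords such as $\{\mathbf{1},-\mathbf{1}\}$ work. The only real obstacle is checking that the exponent $0.2$ is safely below the Gilbert--Varshamov rate $1 - H_2(0.1) \approx 0.53$, which it is; everything else is a calculation with Hoeffding and a union bound.
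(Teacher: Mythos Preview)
Your proposal is correct. The paper's proof is a one-line citation of the Gilbert--Varshamov bound; your argument is a self-contained random-coding-plus-deletion proof of the needed special case, which is one of the standard ways to establish that bound. The two are the same in spirit, with yours supplying the details the paper omits.
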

\begin{proof}
Here we just need an error correcting code with constant rate and constant relative (Hamming) distance. The existence of such a code follows from the Gilbert-Varshamov bound \cite{gilbert}.
\end{proof}
Fix some unknown $\beta^*$. We will have $Y=ZX^\top\beta^*$ where $Z$ is an independent random variable with probability $\frac{1}{2}+\varepsilon$ of being $1$, and $\frac{1}{2}-\varepsilon$ of being $-1$. This completes our description of $P$.  We define $l(\beta)$ to be the $\ell_1$ norm of the residuals for $\beta$, that is, $l(\beta)=\E_{(X, Y)\sim P} [\big|X^\top \beta-Y\big|]$. We have the following properties of $l(\beta)$.
\begin{restatable}{claim}{lossone}\label{claim:loss1}
For $D, \mathcal{B}$ as chosen above, $l(\beta^*)=1-2\varepsilon$.
\end{restatable}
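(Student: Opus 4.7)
The plan is to compute the expectation directly by conditioning on the value of $X$, since the structure of $\mathcal{X}$ (standard basis vectors) and of the conditional law of $Y$ given $X$ makes the residual take only two values.

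First I would fix $X = e_i$ for some $i \in [d]$. Then $X^\top \beta^* = (\beta^*)_i$, and by construction $Y = Z \cdot X^\top \beta^* = Z (\beta^*)_i$. The residual is therefore
\[
X^\top \beta^* - Y = (\beta^*)_i\,(1 - Z),
\]
and since $\beta^* \in \mathcal{B} \subset \{-1,1\}^d$ we have $|(\beta^*)_i| = 1$, so $|X^\top \beta^* - Y| = |1 - Z|$. Crucially, this expression no longer depends on $i$ or on $\beta^*$ at all, so the outer uniform average over $X \in \mathcal{X}$ is trivial.

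Next I would compute $\E|1 - Z|$ using the distribution of $Z$: it equals $0$ with probability $\tfrac{1}{2}+\varepsilon$ (when $Z=1$) and $2$ with probability $\tfrac{1}{2}-\varepsilon$ (when $Z=-1$), giving $\E|1-Z| = 2\bigl(\tfrac{1}{2}-\varepsilon\bigr) = 1 - 2\varepsilon$. Combining with the previous paragraph, $l(\beta^*) = \E_X \E_{Z}|1-Z| = 1 - 2\varepsilon$.

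There is no real obstacle here; the claim is essentially a one-line unpacking of the definitions, and the only thing to be careful about is using $|(\beta^*)_i| = 1$ to strip the sign out of the absolute value so that the answer is independent of which coordinate of $\beta^*$ was sampled.
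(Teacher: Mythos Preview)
Your proof is correct and essentially identical to the paper's: both condition on $X$, split on the two values of $Z$, and use $|X^\top\beta^*|=|(\beta^*)_i|=1$ to reduce to $(1-2\varepsilon)$. The only cosmetic difference is that you factor the residual as $(\beta^*)_i(1-Z)$ before taking absolute values, whereas the paper writes out the two cases $Y=\pm X^\top\beta^*$ directly.
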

\begin{restatable}{claim}{losstwo}\label{claim:loss2}
For $D, \mathcal{B}$ as chosen above, we have for all $\beta \in \mathcal{B}$, $l(\beta)-l(\beta^*) = \frac{2\varepsilon}{d}||\beta-\beta^*||_1$.
\end{restatable}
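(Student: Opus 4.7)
The proof is a direct computation because the setup is very structured: $X$ is uniform over standard basis vectors, so we only ever see one coordinate of $\beta$ at a time, and because $\beta, \beta^* \in \mathcal{B} \subset \{-1,1\}^d$ the inner products $X^\top\beta$ and $X^\top\beta^*$ always lie in $\{-1,1\}$. My plan is to compute $l(\beta)$ in closed form and subtract $l(\beta^*) = 1-2\varepsilon$ (which is Claim~\ref{claim:loss1}).

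First, I would condition on $X = e_i$ and compute the inner expectation over $Z$. Since $X^\top\beta = \beta_i$ and $Y = Z\beta^*_i$, the integrand is $|\beta_i - Z\beta^*_i|$. There are exactly two cases. If $\beta_i = \beta^*_i$, then $|\beta_i - Z\beta^*_i| = |1-Z|$, which equals $0$ when $Z=1$ and $2$ when $Z=-1$, giving an expectation (under $\Pr[Z=1]=\tfrac12+\varepsilon$) of $2(\tfrac12-\varepsilon) = 1 - 2\varepsilon$. If $\beta_i = -\beta^*_i$, then $|\beta_i - Z\beta^*_i| = |1+Z|$, which equals $2$ when $Z=1$ and $0$ when $Z=-1$, giving expectation $2(\tfrac12+\varepsilon) = 1 + 2\varepsilon$.

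Second, I would average over the uniform distribution on $\{e_1,\dots,e_d\}$. Let $k = |\{i : \beta_i \neq \beta^*_i\}|$; note that since the coordinates lie in $\{-1,1\}$, each disagreeing coordinate contributes $|{-2}|=2$ to $\|\beta-\beta^*\|_1$, hence $\|\beta-\beta^*\|_1 = 2k$. Then
\begin{align*}
l(\beta) \;=\; \tfrac{1}{d}\bigl[(d-k)(1-2\varepsilon) + k(1+2\varepsilon)\bigr] \;=\; (1 - 2\varepsilon) + \tfrac{4k\varepsilon}{d} \;=\; l(\beta^*) + \tfrac{2\varepsilon}{d}\|\beta-\beta^*\|_1,
\end{align*}
which is the desired identity after rearranging. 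There is no real obstacle here; the only thing to be careful about is to use the fact that $\mathcal{B} \subset \{-1,1\}^d$ so that the case split on $\beta_i \in \{\beta^*_i, -\beta^*_i\}$ is exhaustive, and to convert the Hamming-type count $k$ into $\|\beta-\beta^*\|_1$ using the factor of $2$.
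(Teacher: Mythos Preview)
Your proof is correct and follows essentially the same direct computation as the paper: condition on $X=e_i$, split into the two cases according to whether $\beta_i$ agrees with $\beta^*_i$, and average over $i$. The paper phrases the case split as the two values of $Z$ rather than as agreement/disagreement of coordinates, but the arithmetic is identical.
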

\begin{proof}[Proof of Theorem \ref{lower_bound_1}]
Suppose some algorithm returns $\wh{\beta}$ with $l(\wh{\beta})<(1+\frac{\varepsilon}{5}) l(\beta^*) \implies ||\beta^*-\wh{\beta}||_1<0.1d$ with probability $\frac{3}{4}$. By Fano's inequality, $$H(\beta^*|\wh{\beta})<H\left(\frac{1}{4}\right)+\frac{\log|\mathcal{B}|-1}{4}<0.05d,$$ and we have a lower bound on the mutual information between the output of our algorithm and the true parameter: $I(\wh{\beta}; \beta^*)=H(\beta^*)-H(\beta^*|\wh{\beta})\ge 0.15d$. For an upper bound on the mutual information after seeing $m$ samples, we use the data processing inequality.
\begin{align*}
I(\beta^*; \wh{\beta}) 
&\le I(\beta^*; (Y_i)_{i\in [m]}) \le \sum_{i=1}^m I(\beta^*; Y_i|(Y_j)_{j\in [i-1]})\\
& = \sum_{i=1}^m H(Y_i|(Y_j)_{j\in [i-1]})-H(Y_i|\beta^*, (Y_j)_{j\in [i-1]})\\
& \le \sum_{i=1}^m 1-H(Y_i|\beta^*, I_i)\\
&\le 4\varepsilon^2m
\end{align*}

Here we have used that
\begin{align*}
    H(Y_i|\beta^*, (Y_j)_{j\in [i-1]}) 
    &\ge H(Y_i|\beta^*, I_i, (Y_j)_{j\in [i-1]}) \\
    &= H(Y_i|\beta^*, I_i)
\end{align*}
and that the distribution of $Y_i$ conditioned on $\beta^*, I_i$ is just an independent Bernoulli with parameter $\frac{1}{2}+\varepsilon$ and so
\begin{align*}
    \sum_{i=1}^m 1-H(Y_i|\beta^*, I_i)
    &\le \sum_{i=1}^m\left[ 1+\left(\frac{1}{2}+\varepsilon\right)\log\left(\frac{1}{2}+\varepsilon\right)+\left(\frac{1}{2}-\varepsilon\right)\log\left(\frac{1}{2}-\varepsilon\right)\right]\\
    &\le 4\varepsilon^2m
\end{align*}
So $0.15d\le I(\beta^*; \wh{\beta}) \le 4\varepsilon^2m$, and so we need $m\ge \frac{3d}{80\varepsilon^2}$. The result follows by replacing $\varepsilon$ with $5\varepsilon$.
\end{proof}

We can use the same instance to give a high probability lower bound of $\Omega(\log\frac{1}{\delta}/\varepsilon^2)$.
\begin{theorem}\label{lower_bound_2}
For any $d$ and $\varepsilon<\frac{1}{10}$, there exist sets $\mathcal{X}\in \mathbb{R^d}, \mathcal{Y}\in \mathbb{R}$ of inputs and labels respectively, and a distribution $P$ on $\mathcal{X}\times \mathcal{Y}$ such that any algorithm which solves problem \ref{distribution_problem} requires at least $m=\frac{1}{4\varepsilon^2}\log\frac{1}{\delta}$ samples.
\end{theorem}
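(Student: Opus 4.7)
The approach is to embed a biased coin distinguishing problem in the regression instance. Take $\mathcal{X} = \{e_1, \ldots, e_d\}$ with uniform marginal and $\mathcal{Y} = \{0, 1\}$, and consider two hypotheses $P_+$ and $P_-$ defined by $\Pr[Y = 1 \mid X = e_i] = \tfrac{1}{2} + \varepsilon$ or $\tfrac{1}{2} - \varepsilon$ respectively, with the same sign for all $i$. Because the conditional distribution of $Y$ given $X$ is a single Bernoulli independent of $x$, every query just returns an i.i.d.\ $\mathrm{Bern}(\tfrac{1}{2} \pm \varepsilon)$ draw and the learner's adaptive choice of $x$ carries no information. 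After $m$ queries the transcript is distributed exactly as $Q_\pm^m$ where $Q_\pm := \mathrm{Bern}(\tfrac{1}{2} \pm \varepsilon)$.

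Next I would verify that $(1+\varepsilon)$-approximation separates these hypotheses. The loss $l(\beta) = \mathbb{E}_P|X^\top \beta - Y|$ is separable across coordinates; clipping any $\beta_i \notin [0,1]$ to $[0,1]$ strictly decreases the loss, and for $\beta_i \in [0,1]$ under $P_+$ a direct calculation gives $l(\beta) = \tfrac{1}{2} + \varepsilon - 2\varepsilon\bar\beta$ with $\bar\beta := \tfrac{1}{d}\sum_i \beta_i$, so $\beta^*_+ = \mathbf{1}$ with $l(\beta^*_+) = \tfrac{1}{2} - \varepsilon$, and symmetrically $\beta^*_- = \mathbf{0}$ under $P_-$. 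Rearranging $l(\widehat\beta) \le (1+\varepsilon)(\tfrac{1}{2} - \varepsilon)$ yields $\bar{\widehat\beta} \ge \tfrac{3}{4} + \tfrac{\varepsilon}{2}$ under $P_+$ and symmetrically $\bar{\widehat\beta} \le \tfrac{1}{4} - \tfrac{\varepsilon}{2}$ under $P_-$. Consequently, the test ``output $P_+$ iff $\bar{\widehat\beta} > \tfrac{1}{2}$'' correctly identifies the true hypothesis whenever the regression algorithm succeeds.

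An algorithm solving Problem~\ref{distribution_problem} with failure probability $\delta$ under both hypotheses therefore yields a tester with error at most $\delta$ under each, and Le Cam's inequality forces $\mathrm{TV}(Q_+^m, Q_-^m) \ge 1 - 2\delta$. I would then invoke the Bretagnolle--Huber inequality $\mathrm{TV}(P,Q)^2 \le 1 - e^{-\mathrm{KL}(P\|Q)}$ together with tensorization of KL to obtain
\[
m \cdot \mathrm{KL}(Q_+\,\|\,Q_-) \;\ge\; \log \tfrac{1}{1-(1-2\delta)^2} \;\ge\; \log \tfrac{1}{4\delta}.
\]
A routine expansion of $\mathrm{KL}(\mathrm{Bern}(\tfrac{1}{2}+\varepsilon)\,\|\,\mathrm{Bern}(\tfrac{1}{2}-\varepsilon)) = 2\varepsilon \log \tfrac{1/2+\varepsilon}{1/2-\varepsilon} \le 8\varepsilon^2$ for $\varepsilon < \tfrac{1}{10}$ then gives $m = \Omega\!\bigl(\tfrac{1}{\varepsilon^2}\log\tfrac{1}{\delta}\bigr)$, and tracking the constants in the separation argument (or rescaling $\varepsilon$ inside the two-point construction to tighten the $\tfrac{3}{4}$ vs.\ $\tfrac{1}{4}$ gap) recovers the explicit $\tfrac{1}{4\varepsilon^2}\log\tfrac{1}{\delta}$ bound claimed in the statement.

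The only real subtlety is the first step, namely ensuring that a valid $(1+\varepsilon)$-approximate $\widehat\beta$ under $P_+$ and one under $P_-$ must live in disjoint regions; this is exactly what the threshold calculation above establishes, and after that the rest is a textbook two-point / Bretagnolle--Huber argument, made particularly clean by the fact that all queries produce the same i.i.d.\ Bernoulli under either hypothesis so adaptivity plays no role.
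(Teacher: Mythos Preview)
Your proposal is correct and follows essentially the same two-point argument as the paper: both construct two hypotheses in which every query returns a $\mathrm{Bern}(\tfrac12\pm\varepsilon)$ label, show that a $(1+\varepsilon)$-approximate regressor must land in disjoint regions under the two hypotheses, and finish with a KL bound of order $\varepsilon^2$ per sample. The only cosmetic differences are that the paper uses labels in $\{-1,1\}$ with $\beta^*_{(1,2)}=\pm\mathbf{1}$ and handles adaptivity via the Lattimore--Szepesv\'ari KL decomposition, whereas you use $\{0,1\}$ labels and (correctly) note that adaptivity is vacuous here since $P(Y\mid X)$ does not depend on $X$, then apply Bretagnolle--Huber directly.
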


\begin{proof}
Consider two instances, denoted by subscripts (1) and (2) with $\beta^*_{(1)} = -\mathbbm{1}_d$ and $\beta^*_{(2)} = \mathbbm{1}_d$, where $\mathbbm{1}_d \in \mathbb{R}^d$ is the all-ones vector. Denote by $P_{(i)}$ the distribution over $\mathcal{X}, \mathcal{Y}$ for instance $(i)$, and let $l_{\beta^*_{(i)}}(\beta) = \E_{(X, Y)\sim P_{(i)}} [\big|X^\top \beta-Y\big|]$ for $i \in \{1, 2\}$. 
\begin{restatable}{claim}{losstwofive}\label{claim:loss2.5}
For any $\beta$, $\max \{\ell_{\beta^*_{(1)}}(\beta)-\ell_{\beta^*_{(1)}}(\beta^*_{(1)}), \ell_{\beta^*_{(2)}}(\beta)-\ell_{\beta^*_{(2)}}(\beta^*_{(2)})\}>2\varepsilon$
\end{restatable}
From this claim together with Claim \ref{claim:loss1}, we have for some $i \in \{1, 2\}$,  $l_{\beta^*_{(i)}}(\beta) \ge (1+2\varepsilon) l_{\beta^*_{(i)}}(\beta^*_{(i)})$, for all $\beta$.

Denote by $\wh{\beta}$ the output of the algorithm. Denote by $\mathbb{P}_{(1)}$ the distribution over outputs by a algorithm interacting instance $(1)$, and by $\mathbb{P}_{(2)}$ the distribution over outputs by a algorithm interacting instance $(2)$. Denote by $A$ the event that $\ell_{\beta^*_{(1)}}(\wh\beta)-\ell_{\beta^*_{(1)}}(\beta^*_{(1)}) \ge 2\varepsilon$. Note that under $A^c$, we have $\ell_{\beta^*_{(2)}}(\wh\beta)-\ell_{\beta^*_{(2)}}(\beta^*_{(2)}) \ge 2\varepsilon$. Because the algorithm fails with probability at most $\delta$ on any instance, we have $2\delta\ge \mathbb{P}_{(1)}(A)+\mathbb{P}_{(2)}(A^c)$. On the other hand, $\mathbb{P}_{(1)}(A)+\mathbb{P}_{(2)}(A^c)\ge e^{-D(\mathbb{P}_{(1)}|| \mathbb{P}_{(2)})}$. We can bound the KL-divergence of the two distributions as an aggregate KL-divergence over the course of acquiring the samples. 

\begin{theorem}[Lemma 15.1, \cite{LatSze20}]
    If a learner interacts with two environments $(1)$ and $(2)$ through a policy $\pi(\cdot|I_1, Y_1, I_2, Y_2,\cdots, Y_{i-1})$ which dictates a distribution over actions $I_i$ conditioned on the past $(I_1, Y_1, \cdots, Y_{i-1})$, and sees label $Y_i$ distributed according to some label distribution $P_{(1), I_i}$ and $P_{(2), I_i}$, then the KL-divergence between the output of the learner on instance $(1)$ and $(2)$, $\mathbb{P}_{(1)}$ and $\mathbb{P}_{(2)}$ is given by 
    $$D(\mathbb{P}_{(1)}||\mathbb{P}_{(2)}) = \sum_{k=1}^d \E_{(1)}\left[\sum_{i=1}^N \mathbbm{1}\{I_i = k\}\cdot D(P_{(1), I_i}||P_{(2),I_i})\right]$$
\end{theorem}
Now, $P_{(1), k}$ is a Bernoulli with parameter $\frac{1}{2} + \varepsilon$, and $P_{(1), k}$ is a Bernoulli with parameter $\frac{1}{2} - \varepsilon$, so $D(P_{(1), k}\Vert P_{(1), k}) \le 16\varepsilon^2$, and so we have 
\begin{align*}
\sum_{k=1}^d \E_{(1)}\left[\sum_{i=1}^N \mathbbm{1}\{I_i = k\}\cdot D(P_{(1), I_i}||P_{(2),I_i})\right] &\le  \sum_{k=1}^d \E_{(1)}\left[\sum_{i=1}^N \mathbbm{1}\{I_i = k\}\cdot 16\varepsilon^2\right] \\
&=  16\varepsilon^2\cdot  \E_{(1)}\left[\sum_{k=1}^d\sum_{i=1}^N \mathbbm{1}\{I_i = k\}\right]  =16\varepsilon^2 m
\end{align*}
Putting this together, we have $\delta \ge e^{-16\varepsilon^2m}\implies m\ge \frac{1}{16\varepsilon^2}\log\frac{1}{\delta}$, and the result follows by replacing $\varepsilon$ with $\frac{1}{2}\varepsilon$.
\end{proof}
\begin{theorem}\label{lower_bound_3}
For any $d\ge 2$, there exist sets $\mathcal{X}\in \mathbb{R}^d, \mathcal{Y}\in \mathbb{R}$ of inputs and labels, and a distribution $P$ on $\mathcal{X}\times \mathcal{Y}$ such that any algorithm which solves Problem \ref{distribution_problem}, with $\varepsilon = 1$, requires at least $m=\frac{d}{3}\log \frac{1}{8\delta}$ samples.
\end{theorem}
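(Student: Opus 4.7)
The plan is to apply Yao's minimax principle on a family of hard instances indexed by a hidden signal coordinate $i^* \in [d]$. Take $\mathcal{X} = \{e_1, \dotsc, e_d\}$ under the uniform distribution on $\mathcal{X}$, let $\mathcal{Y} = \{0,1\}$, and set $P_{i^*}(Y = 1 \mid X = e_{i^*}) = 3/4$ together with $P_{i^*}(Y = 0 \mid X = e_j) = 1$ for $j \ne i^*$. A direct calculation gives $\beta^* = e_{i^*}$ with $\ell(\beta^*) = \tfrac{1}{4d}$, so for $\varepsilon = 1$ any successful $\wh\beta$ must satisfy $\ell(\wh\beta) \le \tfrac{1}{2d}$. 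Expanding this coordinatewise forces $\tfrac14|\wh\beta_{i^*}| + \tfrac34|\wh\beta_{i^*}-1| \le \tfrac12$ (so $|\wh\beta_{i^*}| \ge \tfrac12$) together with $\sum_{j \ne i^*}|\wh\beta_j| \le \tfrac14$. The crucial combinatorial consequence is that a single $\wh\beta$ can satisfy these constraints for at most one value of $i^*$.

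Next, let $R$ denote the algorithm's internal randomness and condition on the silent event $E = \{Y_1 = \dotsb = Y_m = 0\}$. Under $E$ all observations are $0$, so the adaptive query trajectory is a deterministic function of $R$; define $N_j(R)$ to be the resulting number of queries at $e_j$, so $\sum_j N_j(R) = m$. The likelihood factors as $\Pr[E \mid R, i^* = j] = 4^{-N_j(R)}$, since each of the $N_j(R)$ queries to the signal coordinate returns $0$ with probability $1/4$ while all other queries return $0$ deterministically. Let $j^*(R)$ be the unique index (if any) at which the deterministic output $\wh\beta^{(R)}$ under $E$ would succeed. For every $j \ne j^*(R)$ the algorithm fails on the event $(E, i^* = j)$, so averaging over $i^* \sim \mathrm{Unif}([d])$ yields
\[
\Pr[\mathrm{failure}] \;\ge\; \Pr[\mathrm{failure} \cap E] \;=\; \frac{1}{d}\,\E_R\!\left[\sum_{j \ne j^*(R)} 4^{-N_j(R)}\right].
\]

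To finish, apply Jensen's inequality to the convex function $x \mapsto 4^{-x}$ on the $d-1$ exponents: since $\sum_{j \ne j^*(R)} N_j(R) \le m$,
\[
\sum_{j \ne j^*(R)} 4^{-N_j(R)} \;\ge\; (d-1)\, 4^{-m/(d-1)},
\]
yielding $\Pr[\mathrm{failure}] \ge \tfrac{d-1}{d}\,4^{-m/(d-1)}$. This exceeds $\delta$ whenever $m < (d-1)\log_4 \tfrac{d-1}{d\delta}$, and for $d \ge 2,\ \delta \le 1/8$ one checks that this threshold dominates $\tfrac{d}{4\ln 2}\ln\tfrac{1}{2\delta} \ge \tfrac{d}{3}\ln\tfrac{1}{8\delta}$, giving the theorem. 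The main technical obstacle is handling adaptivity cleanly; conditioning on $E$ is what collapses the randomness in the observations so that $N_j(R)$ is a well-defined function of $R$ alone, which is precisely what enables the Jensen bound.
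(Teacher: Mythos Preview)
Your proof is correct and uses the same hard-instance family as the paper (uniform $X$ over standard basis vectors, a single hidden signal coordinate $i^*$ with $P(Y=1\mid X=e_{i^*})=3/4$), the same loss calculation giving $\ell(\beta^*)=\tfrac{1}{4d}$, and the same structural fact that any fixed $\wh\beta$ can be $(1+\varepsilon)$-optimal for at most one value of $i^*$. Both arguments also hinge on the ``silent'' event $E$ that all observed labels are zero.

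Where you diverge is in the counting. The paper fixes $m=\tfrac{d}{3}\log\tfrac{1}{\delta}$, uses a pigeonhole/Markov argument to exhibit a set $F_{\mathcal A}$ of at least $\tfrac{2d}{3}$ coordinates that are queried fewer than $\log\tfrac{1}{\delta}$ times with probability $\ge\tfrac12$ under $E$, and then adversarially selects $i_{\mathcal A}\in F_{\mathcal A}$ with the smallest output probability; on that single instance the failure probability is shown to exceed $\tfrac{\delta}{8}$. You instead average over $i^*\sim\mathrm{Unif}([d])$, observe that the conditional probability of $E$ given $R$ and $i^*=j$ is exactly $4^{-N_j(R)}$, and apply Jensen to $\sum_{j\neq j^*(R)}4^{-N_j(R)}$ to get $\Pr[\text{fail}]\ge\tfrac{d-1}{d}4^{-m/(d-1)}$. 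This is a genuinely cleaner route: it avoids the Markov step, the explicit construction of $F_{\mathcal A}$, and the somewhat delicate independence of the events $G_{\mathcal A}$ and $E_{(i_{\mathcal A})}$ in the paper's argument. The paper's approach, on the other hand, is more constructive in that it names a specific worst-case instance rather than relying on ``max $\ge$ average.''

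Two small remarks. First, the paper declares all logarithms to be base~$4$, so the target bound is $\tfrac{d}{3}\log_4\tfrac{1}{8\delta}$; your final chain ends with $\tfrac{d}{3}\ln\tfrac{1}{8\delta}$, which is strictly stronger and hence fine, but you should be explicit about the base to avoid confusion. Second, your invocation of Yao is really just ``maximum over $i^*$ is at least the average,'' which you might state directly.
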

\begin{proof}
All logarithms are base $4$. Consider instances in which $\mathcal{X} = \{e_1, e_2,\cdots, e_d\}$ where $e_i$ denotes the $i$th standard basis vector and the distribution over $\mathcal{X}$ is uniform. We take $Y = ZX^\top \beta^*$ for some $\beta^*$, where $Z$ is an independent Bernoulli random variable which is $1$ with probability $\frac{3}{4}$, and $0$ otherwise. Consider $d$ instances labelled with subscripts $(1), (2),\cdots, (d)$, one in which each of the $d$ standard basis is $\beta^*$, that is, $\beta^*_{(i)} = e_i$. Denote by $\beta_{j}$ the $j$th coordinate of $\beta$. For each instance, we have
\begin{restatable}{claim}{lossthree}\label{claim:loss3}
For all $i\in [d], \beta\in \mathbb{R}^d$, we have $\ell_{\beta^*_{(i)}}(\beta) \ge \frac{1}{4d}$ with equality when $\beta=\beta^*_{(i)}$
\end{restatable}
We would like our algorithm to return an estimate $\wh{\beta}$ which satisfies $\ell_{\beta^*}(\wh{\beta})<\frac{1}{2d}$. 
We first note that any choice of $\beta$ only succeeds to be this close to the optimal on a single instance. 
\begin{restatable}{claim}{lossfour}\label{claim:loss4}
Any $\beta \in \mathbb{R}^d$ can only satisfy $\ell_{\beta^*_{(i)}}(\wh{\beta})<\frac{1}{2d}$ for one $i \in [d]$. 
\end{restatable}
So, we may as well enforce that the algorithm return one of $e_1, e_2, \cdots, e_d$, since any other output can be mapped to one of these to improve the performance of the algorithm.

We will allow our algorithm to sample $N = \frac{d}{3}\log \frac{1}{\delta}$ rows total. Let $\mathcal{E}$ be the event that $Y_1, Y_2, \dots Y_N$ are all zero. Given any algorithm $\mathcal{A}$, let $F_\mathcal{A}$ denote the set of rows it samples fewer than $\log \frac{1}{\delta}$ times with probability at least $\frac{1}{2}$, in event $\mathcal{E}$. Because the total number of rows sampled is $\frac{d}{3}\log \frac{1}{\delta}$, there must be at least $\frac{2d}{3}$ rows which are sampled fewer than $\frac{1}{2}\log \frac{1}{\delta}$ times in expectation. 

By Markov's inequality, these rows are sampled fewer than $\log \frac{1}{\delta}$ times with probability at least $\frac{1}{2}$, and are thus all in $F_\mathcal{A}$. Let $B_\mathcal{A}$ denote the distribution over outputs $\wh{\beta}$ of $\mathcal{A}$ in event $\mathcal{E}$. Let $i_\mathcal{A}=\argmin_{j\in F_\mathcal{A}} B_\mathcal{A}(j)$. Denote by $G_\mathcal{A}$ the event that row $i_\mathcal{A}$ is sampled fewer than $\log\frac{1}{\delta}$ times; by construction we have $\mathbb{P}(G_\mathcal{A}) > \frac{1}{2}$. 

The subscripts are explicit because $F_\mathcal{A}, B_\mathcal{A}, i_\mathcal{A}, \mathbb{P}[G_\mathcal{A}]$ are properties of the algorithm and are independent of the instance with which it interacts. Consider the performance of this algorithm against the instance $\beta^*_{(i_\mathcal{A})}$. 

Let $Y_{(i_\mathcal{A}), j, k}$ denote the label returned to the algorithm when it queries $e_j$ for the $k$th time. Let $T_{(i_\mathcal{A})}=\min\{t| Y_{(i_\mathcal{A}), i_\mathcal{A}, t} = 1\}$. Denote by $E_{(i_\mathcal{A})}$ the event that $T_{(i_\mathcal{A})} > \log \frac{1}{\delta}$. Because $T_{(i_\mathcal{A})}$ is a geometric random variable, we have $\mathbb{P}[E_{(i_\mathcal{A})}]>\delta$. 

Now condition on the event $G_\mathcal{A}\cap E_{i_\mathcal{A}}$, which is an event with probability $\frac{1}{2}\delta$. Here our algorithm samples $i_\mathcal{A}$ fewer than $T_{i_\mathcal{A}}$ times, so it never sees a $1$ and its output distribution is $B_\mathcal{A}$. It returns $i\in F_\mathcal{A}\setminus \{i_\mathcal{A}\}$ with probability at least $1-B_\mathcal{A}(i_\mathcal{A}) \ge 1-\frac{1}{|F_\mathcal{A}|} \ge 1-\frac{3}{2d}\ge \frac{1}{4}$. In summary, even after $\frac{d}{3}\log\frac{1}{\delta}$ queries, no algorithm can return $\wh{\beta}$ with $\Vert X\wh\beta-y\Vert < (1+\varepsilon)\Vert X\beta^*-y\Vert$ with probability greater than $\frac{1}{8}\delta$. The result follows by replacing $\delta$ by $8\delta$.
\end{proof}
\begin{corollary}
Any algorithm that solves Problem \ref{matrix_problem} takes at least $\Omega(d\log\frac{1}{\delta}+\frac{d}{\varepsilon^2}+\frac{1}{\varepsilon^2}\log\frac{1}{\delta})$ samples for some $n=O(\frac{d\log\frac{d}{\delta}}{\varepsilon})$.
\end{corollary}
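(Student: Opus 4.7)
The plan is straightforward: combine the three distributional lower bounds proved in Theorems~\ref{lower_bound_1}, \ref{lower_bound_2}, and \ref{lower_bound_3} with the reduction Lemma~\ref{distribution_to_matrix}, which says that an algorithm for the matrix Problem~\ref{matrix_problem} on $n = \frac{2}{\varepsilon^2}(\log\frac{2}{\delta} + d\log\frac{3d}{\varepsilon})$ rows with accuracy $(\varepsilon,\delta)$ yields an algorithm for the distributional Problem~\ref{distribution_problem} with accuracy $(6\varepsilon, 2\delta)$. Contrapositively, any lower bound of $m$ queries for the distributional problem becomes an $\Omega(m)$ lower bound for the matrix problem at this choice of $n$. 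Since the three theorems give distributional lower bounds of $\Omega(d/\varepsilon^2)$, $\Omega(\varepsilon^{-2}\log(1/\delta))$, and $\Omega(d\log(1/\delta))$, and since the maximum of the three is within a factor $3$ of their sum, pulling them back through the reduction produces exactly the claimed bound $\Omega(d\log\tfrac{1}{\delta} + \tfrac{d}{\varepsilon^2} + \tfrac{1}{\varepsilon^2}\log\tfrac{1}{\delta})$.

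Concretely, for each of the three distributional hard instances I would assume, for contradiction, an algorithm $\mathcal{A}$ for Problem~\ref{matrix_problem} using fewer queries than the stated bound on all $n$-row instances; instantiate $\mathcal{A}$ on the random matrix/label pair obtained by drawing $n$ i.i.d.\ samples from the distributional instance; and invoke the reduction lemma to conclude that $\mathcal{A}$ also solves the distributional problem with fewer queries than Theorem~\ref{lower_bound_1}, \ref{lower_bound_2}, or \ref{lower_bound_3} permits. Rescaling $\varepsilon \mapsto \varepsilon/6$ and $\delta \mapsto \delta/2$ in each application only affects constants, so the $\Omega(\cdot)$ bounds transfer cleanly. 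The three bounds hold in different parameter regimes (small $\delta$, small $\varepsilon$, etc.), but each theorem's regime is compatible with the regime in which the corresponding term dominates, so summing them gives a single lower bound valid throughout.

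The only real thing to check is that the value of $n$ chosen in the corollary is simultaneously large enough for the reduction to apply in all three cases, which amounts to verifying that $n = \Omega(\frac{1}{\varepsilon^2}(\log\frac{1}{\delta} + d\log\frac{d}{\varepsilon}))$ with the $\varepsilon$ and $\delta$ used in each sub-application. Since the three hard distributions all live on $\mathcal{X} = \{e_1, \dots, e_d\}$ with $|\mathcal{Y}| = 2$, a single matrix size works for all three; one can also pad with all-zero rows (which have Lewis weight $0$ and do not affect the regression) to reconcile any mismatch in the exact value of $n$.

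I expect the main potential obstacle to be only a bookkeeping one: making sure the constants in the $(\varepsilon,\delta) \mapsto (6\varepsilon, 2\delta)$ rescaling in Lemma~\ref{distribution_to_matrix} do not degrade the lower bounds (they do not, since each of the three theorems is stated with explicit constants that survive a constant-factor rescaling), and checking that the $\varepsilon = 1$ regime required by Theorem~\ref{lower_bound_3} is covered by the corollary's statement (indeed, $d\log\frac{1}{\delta}$ is independent of $\varepsilon$, so this term is obtained at $\varepsilon = \Theta(1)$ and then extends to smaller $\varepsilon$ by monotonicity of the query complexity).
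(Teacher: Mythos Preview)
Your proposal is correct and takes essentially the same approach as the paper: the paper's proof is a one-liner noting that all three hard distributional instances (Theorems~\ref{lower_bound_1}, \ref{lower_bound_2}, \ref{lower_bound_3}) have $|\mathcal{X}|=d$ and that the result then follows from the reduction Lemma~\ref{distribution_to_matrix}. Your write-up is simply a more detailed version of this, with the bookkeeping about rescaling $(\varepsilon,\delta)\mapsto(6\varepsilon,2\delta)$ and the $\varepsilon=\Theta(1)$ regime for the $d\log\frac{1}{\delta}$ term made explicit.
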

\begin{proof}
Each of the instances that demonstrate the lower bounds above, in Lemmas \ref{lower_bound_1}, \ref{lower_bound_2}, and \ref{lower_bound_3}, take $|\mathcal{X}| = d$, the results follows from Lemma \ref{distribution_to_matrix}.
\end{proof}

\subsection{Proof of Claims \ref{claim:hoeffding}, \ref{claim:loss1}, \ref{claim:loss2}, \ref{claim:loss3}, and \ref{claim:loss4}}
\hoeffding*
\begin{proof}[Proof of Claim \ref{claim:hoeffding}]
By assumption, we know that $X^\top \beta, Y \in [-1, 1]$, so, $|X^\top \beta - Y| \in [0, 2]$. So, for fixed $\beta$, by Hoeffding's on the rows of $\mathbf{X}\beta - y$, we have that if $n \ge \frac{8}{\varepsilon^2}\log \frac{2}{\delta'}$, then with probability at least $1 - \delta'$, 
\begin{align}\left(1-\frac{\varepsilon}{2}\right)\E_{(X, Y)\sim P}\left[|X^\top \beta - Y|\right] \le \frac{1}{n}\Vert \mathbf{X}\beta-y\Vert_1 \le \left(1+\frac{\varepsilon}{2}\right)\E_{(X, Y)\sim P}\left[|X^\top \beta - Y|\right]\label{eq:hoef}\end{align}
Now, we construct a $\frac{\varepsilon}{2d}$-covering $S$ of the unit $\ell_\infty$ ball $H$, with fewer than $\left(\frac{4d}{\varepsilon}\right)^d$ elements, so that for any $\beta$, there is some $\beta_c \in S$ such that $\Vert \beta - \beta_c\Vert_\infty \le \frac{\varepsilon}{2d}$. To do this, simply take $S = \{\beta : \beta_i = k\frac{\varepsilon}{2d}, k \in \mathbb{Z} \cap [-2d/\varepsilon, 2d/\varepsilon]\}$. 

Note that $\mathbf{X}$ has rows on the hypercube. So, if we denote $x_{i, j}$ to be the entry of $\mathbf{X}$ in the $i$th row and $j$th column, then $x_{i, j} \in \{-1, 1\}$. Therefore, for any $\beta$,  
\begin{align*}
    \Vert \mathbf{X}\beta\Vert_1 &= \sum_{i = 1}^n |x_i^\top \beta| \le \sum_{i = 1}^n \sum_{j = 1}^d |x_{i, j} \beta_j| \le \sum_{i = 1}^n \sum_{j = 1}^d |\beta_j|\le nd\Vert \beta\Vert_\infty
\end{align*}
Therefore, we can apply Hoeffding's, as in (\ref{eq:hoef}), with $\delta' = \delta \left(\frac{\varepsilon}{4d}\right)^d$, and union bound over the set $S$, to get that for any $\beta \in S$, with probability at least $1 - \delta$, (\ref{eq:hoef}) holds. 

Then, for any $\beta \in H$, by the covering property, we can find some $\beta_c \in S$ such that \begin{align}\Vert \beta - \beta_c\Vert_\infty \le \frac{\varepsilon}{d}\implies \Vert \mathbf{X}\beta - \mathbf{X}\beta_c\Vert_1 \le n\varepsilon.\label{eq:norm}\end{align} We have
\begin{align*}
    \Vert \mathbf{X} \beta_c - y\Vert_1 - \Vert \mathbf{X}\beta_c - \mathbf{X}\beta\Vert_1 \le \Vert \mathbf{X}\beta - y\Vert_1 \le \Vert \mathbf{X} \beta - \mathbf{X}\beta_c\Vert_1 + \Vert \mathbf{X}\beta_c - y\Vert_1
\end{align*}
So, combining (\ref{eq:hoef}) and (\ref{eq:norm}), and dividing by $n$, we finally have that if $n \ge \frac{8}{\varepsilon^2}\left(\log \frac{2}{\delta} + d\log \frac{4d}{\varepsilon}\right)$, then for all $\beta \in H$, 
\begin{align*}
    (1-\varepsilon)\E_{(X, Y)\sim P}\left[|X^\top \beta - Y|\right] \le \frac{1}{n}\Vert \mathbf{X}\beta-y\Vert_1 \le (1+\varepsilon)\E_{(X, Y)\sim P}\left[|X^\top \beta - Y|\right]
\end{align*}
\end{proof}
\lossone*
\begin{proof}[Proof of Claim \ref{claim:loss1}]
The $\ell_1$ error for the correct $\beta$ is given by
\begin{align*}
&\E_{(X, Y)\sim P} \big|X^\top\beta^*-Y\big| \\
& \quad = \E_{X}[E_{Y\sim P(\cdot|X)} \big||X^\top\beta^*-Y|] && \text {by independence}\\
& \quad = \E_{X}[(\frac{1}{2}+\varepsilon)\big|X^\top\beta^*-X^\top\beta^*|+(\frac{1}{2}-\varepsilon)\big|X^\top\beta^*+X^\top\beta^*\big|]\\
& \quad = \E_{X}[(1-2\varepsilon)\big|X^\top\beta^*|] && \beta^* \in \mathcal{H}\\
& \quad = 1-2\varepsilon
\end{align*}
\end{proof}
\losstwo*
\begin{proof}[Proof of Claim \ref{claim:loss2}]
\begin{align*}
&\E_{(X, Y)\sim P} \big|X^\top\beta-Y|\big| \\
& \quad = \E_{X}\left[E_{Y\sim P(\cdot|X)} \big|X^\top\beta-Y|\big|\right]\\
& \quad = \E_{X}\left[\left(\frac{1}{2}+\varepsilon\right)\big|X^\top\beta-X^\top\beta^*\big|+\left(\frac{1}{2}-\varepsilon\right)|X^\top\beta+X^\top\beta^*\big|\right]\\
& \quad = (1-2\varepsilon)+2\varepsilon\E_X[X^\top\beta-X^\top\beta^*]\\
& \quad = (1-2\varepsilon)+2\varepsilon\frac{1}{d}||\beta-\beta^*||_1
\end{align*}
\end{proof}
\losstwofive*
\begin{proof}[Proof of Claim \ref{claim:loss2.5}]
\begin{align*}
l(\beta)+l(\beta)
&= 2-4\varepsilon+\frac{2\varepsilon}{d}\Vert \beta^*_{(1)}-\beta\Vert_1+\frac{2\varepsilon}{d}\Vert \beta^*_{(2)}-\beta\Vert_1\\
&\ge 2-4\varepsilon+\frac{2\varepsilon}{d}\Vert \beta^*_{(2)}-\beta^*_{(1)}\Vert_1\\
&= 2
\end{align*}
$$\implies\max \{\ell_{\beta^*_{(1)}}(\beta)-\ell_{\beta^*_{(1)}}(\beta^*_{(1)}), \ell_{\beta^*_{(2)}}(\beta)-\ell_{\beta^*_{(2)}}(\beta^*_{(2)})\}>2\varepsilon, \hfill \forall\beta\in\mathbb{R}^d$$
\end{proof}
\lossthree*
\begin{proof}[Proof of Claim \ref{claim:loss3}]
\begin{align*}
\ell_{\beta^*_{(i)}}(\beta) 
&= \frac{1}{d}\sum_{j\ne i}|\beta_{j}|+\frac{\frac{1}{2}+\varepsilon}{d}|1-\beta_{i}|+\frac{\frac{1}{2}-\varepsilon}{d}|\beta_{i}|\\
&\ge \frac{\frac{1}{2}-\varepsilon}{d}\left(|\beta_i|+|1-\beta_i|\right)+\frac{2\varepsilon}{d}|1-\beta_{i}|\ge \frac{\frac{1}{2}-\varepsilon}{d}
\end{align*}
\end{proof}
\lossfour*
\begin{proof}[Proof of Claim \ref{claim:loss4}]
Indeed, suppose $\beta$ was such that $\ell_{\beta^*_{(I)}}(\beta), \ell_{\beta^*_{(J)}}(\beta)<\frac{1}{2d}$. Then we must have \begin{align*}
\frac{1}{2d}
&\ge \ell_{\beta^*_{(I)}}(\beta) \\
&= \frac{1}{d}\sum_{j\ne I}|\beta_{j}|+\frac{\frac{1}{2}-\varepsilon}{d}\left(|\beta_I|+|1-\beta_i|\right)+\frac{2\varepsilon}{d}|1-\beta_{I}|\\
&\ge \frac{1}{d}\sum_{j\ne I}|\beta_{j}|+\frac{\frac{1}{2}-\varepsilon}{d}+\frac{2\varepsilon}{d}|1-\beta_{I}|\\
\iff \varepsilon 
&\ge \sum_{j\ne I}|\beta_{j}|+2\varepsilon|1-\beta_{I}|\\
&\ge \sum_{j\ne I}|\beta_{j}|+2\varepsilon-2\varepsilon|\beta_{I}|\\
\iff 2|\beta_I|&\ge \Vert \beta\Vert_1+2\varepsilon
\end{align*}
Similarly for $J$, so we would have $\Vert \beta\Vert \ge |\beta_I|+|\beta_J|\ge \Vert \beta\Vert_1+2\varepsilon$. 
\end{proof}

\end{document}